\newtheorem{definition}{Definition}
\newtheorem{proposition}{Proposition}
\newtheorem{lemma}{Lemma}
\newcommand{\softmax}{\mathrm{softmax}}
\newcommand{\multiset}[1]{
  \{\!\!\{#1\}\!\!\}
}
\definecolor{lb}{RGB}{31,119,180}
\definecolor{forestgreen}{HTML}{008000}
\newcommand\myleaf{\mbox{\textleaf}}
\newcommand\namecamelcase{Position-encoding injective temporal graph net}
\newcommand\name{position-encoding injective temporal graph net\xspace}
\newcommand\initials{PINT\xspace}
\newcommand\mptgns{MP-TGNs\xspace}
\newcommand\watgns{WA-TGNs\xspace}
\newcommand\mptgn{MP-TGN\xspace}
\newcommand{\tcTCT}{monotone TCT\xspace}
\newcommand{\myrowcolour}{\rowcolor[gray]{0.925}}
\newcites{sup}{Supplementary References}
\title{Provably expressive temporal graph networks}
\author{%
  Amauri H. Souza$^1$, \quad Diego Mesquita$^2$, \quad Samuel Kaski$^{1,3}$, \quad Vikas Garg$^{1, 4}$ \\
  $^1$Aalto University \quad $^2$Getulio Vargas Foundation \quad $^3$University of Manchester \quad $^4$YaiYai Ltd \\
  {\small 
  \texttt{\{amauri.souza, samuel.kaski\}@aalto.fi, diego.mesquita@fgv.br, vgarg@csail.mit.edu}
  } 
}
\begin{document}

\maketitle

\begin{abstract}
Temporal graph networks (TGNs) have gained prominence as models for embedding dynamic interactions,  but little is known about their theoretical underpinnings. We establish fundamental results about the representational power and limits of the two main categories of TGNs: those that aggregate temporal walks (WA-TGNs), and those that augment local message passing with recurrent memory modules (MP-TGNs). Specifically, novel constructions reveal the inadequacy of MP-TGNs and WA-TGNs, proving that neither category subsumes the other. We extend the 1-WL (Weisfeiler-Leman) test to temporal graphs, and show that the most powerful MP-TGNs should use injective updates, as in this case they become as expressive as the temporal WL. Also, we show that sufficiently deep MP-TGNs cannot benefit from memory, and MP/WA-TGNs fail to compute graph properties such as girth. 
 
These theoretical insights lead us to PINT --- a novel architecture that leverages injective temporal message passing and relative positional features. Importantly, PINT is provably more expressive than both MP-TGNs and WA-TGNs.
PINT significantly outperforms existing TGNs on several real-world benchmarks.
\end{abstract}

\section{Introduction}
\label{sec:introduction}
Graph neural networks (GNNs) \citep{Gori2005, scarselli2009,survey, Modular2022} have recently led to breakthroughs in many applications \citep{ Pinion2021, ComplexPhysics, antibiotic_design} by resorting to message passing between neighboring nodes in input graphs. 
While message passing imposes an important inductive bias, it does not account for the dynamic nature of interactions in time-evolving graphs arising from many real-world domains such as social networks and bioinformatics \citep{jodie, Xu2019}. 
In several scenarios, these temporal graphs are only given as a sequence of timestamped events.
Recently, temporal graph nets (TGNs)~\citep{jodie, tgn,  DyRep, caw, tgat} have emerged as a prominent learning framework for temporal graphs and have become particularly popular due to their outstanding predictive performance.
Aiming at capturing meaningful structural and temporal patterns, TGNs combine a variety of building blocks, such as self-attention~\citep{Vaswani2017,gat}, time encoders~\citep{Kazemi2019, DaXu2019}, recurrent models~\citep{GRU,lstm}, and message passing  \citep{Gilmer2017}.

Unraveling the learning capabilities of (temporal) graph networks is imperative to understanding their strengths and pitfalls, and designing better, more nuanced models that are both theoretically well-grounded and practically efficacious. For instance, the enhanced expressivity of higher-order GNNs has roots in the inadequacy of standard message-passing GNNs to separate graphs that  are indistinguishable by the Weisfeiler-Leman isomorphism test, known as 1-WL test or color refinement algorithm \citep{Maron2019, Morris2019, Sato2019, Vignac2020, gin}. 
Similarly, many other notable advances on GNNs were made possible by untangling their ability to generalize \citep{Garg2020, liao2021, Verma2019}, extrapolate \citep{xu2021how}, compute graph properties \citep{chen2020, dehmamy2019understanding, Garg2020}, 
and express Boolean classifiers \citep{Barcelo2020}; by uncovering their connections to distributed algorithms \citep{Loukas2020, Sato2019}, graph kernels \citep{du2019}, dynamic programming \citep{Xu2020What}, diffusion processes \citep{Grand21}, graphical models \citep{factorGNNs}, and combinatorial optimization \citep{Cappart21}; and by analyzing their discriminative power \citep{Loukas2020-how-hard, pmlr-v119-nguyen20c}. In stark contrast, the theoretical foundations of TGNs remain largely unexplored. For instance, unresolved questions include: How does the expressive power of existing TGNs compare? When do TGNs fail? Can we improve the expressiveness of TGNs? What are the limits on the power of TGNs?   

\begin{figure}[t]
\begin{minipage}{0.35\textwidth}
    \centering
\usetikzlibrary{shapes,snakes,backgrounds,arrows}
\resizebox{0.85\textwidth}{!}{
\begin{tikzpicture}[scale=0.6]

        \node [draw, ellipse, very thick, rotate=45, orange, fill=orange, fill opacity=1.0, minimum width=7.1cm, minimum height=5.1cm, align=center] at   (3,0)   {};
        
        \node [draw, ellipse, very thick, orange, rotate=135,fill=orange, fill opacity=1.0, minimum width=7.1cm, minimum height=5.1cm, align=center] at   (0,0)   {};

        \node [draw, opacity=0.0, ellipse, rotate=135,fill=white, fill opacity=1.0, minimum width=7cm, minimum height=5cm, align=center] at   (0,0)   {};

        \node [draw, ellipse, opacity=0.0, rotate=45, fill=white, fill opacity=1.0, minimum width=7cm, minimum height=5cm, align=center] at   (3,0)   {};

        \node [align=center, orange] at (1.2,6){\LARGE \bf \initials};

        \node [draw, ellipse, opacity=0.0, rotate=45, lb, fill=lb, fill opacity=0.2, minimum width=7cm, minimum height=5cm, align=center] at   (3,0)   {};
        \node [align=center, lb] at (5.5,3.8){\bf \Large CAW};

        \node [draw, opacity=0.0, ellipse, thick, loosely dashed, rotate=135, Mulberry, fill=Mulberry, fill opacity=0.2, minimum width=7cm, minimum height=5cm, align=center] at   (0,0)   {};
        \node [align=center, Orchid!50!black] at (-2.15,3.5){\bf \Large Injective \\ \bf \Large \mptgns};


        \node [draw, ellipse, thick, opacity=0.0, loosely dashed, fill=forestgreen, fill opacity=0.2, rotate=135, forestgreen, minimum width=3.5cm, minimum height=2.5cm, align=center] at   (0,0)   {};
        \node [align=center, forestgreen, rotate =-45] at (0,0){\bf \Large SOTA \\\bf \Large\mptgns};

        \node [draw, ellipse, thick, opacity=0.0, loosely dashed, fill=darkgray, fill opacity=0.2, rotate=180, darkgray, minimum width=3.5cm, minimum height=2.5cm, align=center] at   (1.2,-8)   {};
        \node [align=center, darkgray] at (1.2,-8){\bf\Large Open \\\bf \Large problems};


    \end{tikzpicture}
}
\end{minipage}
\begin{minipage}{0.65\textwidth}
\centering
\small
\input{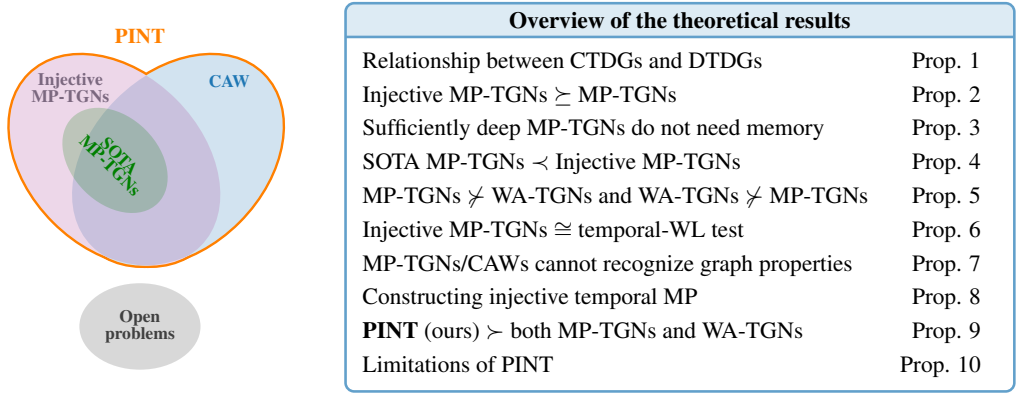}
\end{minipage}
\caption{{Schematic diagram and summary of our contributions}.}
\label{fig:test}
\end{figure}

We establish a series of results to address these fundamental questions. We begin by showing that discrete-time dynamic graphs (DTDGs) can always be converted to continuous-time analogues (CTDGs) without loss of information, so we can focus on analyzing the ability of TGNs to distinguish nodes/links of CTDGs.  We consider a general framework for message-passing TGNs (\mptgns) \citep{tgn} that subsumes a wide variety of methods \citep[e.g.,][]{tgat, jodie, DyRep}.  We prove that equipping \mptgns with injective aggregation and update functions leads to the class of most expressive \textit{anonymous MP-TGNs} (i.e., those that do not leverage node ids). 
Extending the color-refinement algorithm  to temporal settings, we show that these most powerful MP-TGNs are as expressive as the temporal WL method. Notably, existing \mptgns do not enforce injectivity. 
We also delineate the role of memory in \mptgns: nodes in a network with only a few layers of message passing fail to aggregate information from a sufficiently wide receptive field (i.e., from distant nodes), so memory serves to offset this highly local view with additional global information. In contrast, sufficiently deep architectures obviate the need for memory modules.

Different from \mptgns, walk-aggregating TGNs (\watgns) such as CAW \citep{caw} obtain representations from anonymized temporal walks. We provide constructions that expose shortcomings of each framework, establishing that \watgns 
can distinguish links in cases where \mptgns fail and vice-versa. Consequently, neither class is more
expressive than the other. 
Additionally, we show that \mptgns and CAWs cannot decide temporal graph properties such as diameter, girth, or number of cycles. Strikingly, our analysis unravels the subtle relationship between the walk computations in CAWs and the MP steps in \mptgns.

Equipped with these theoretical insights, we propose \initials (short for \emph{\name}), founded on a new temporal layer that leverages the strengths of both \mptgns and \watgns. Like the most expressive MP-TGNs, PINT defines injective message  passing and update steps. PINT also augments memory states with  novel relative positional features, and these features can replicate all the discriminative benefits available to \watgns. Interestingly, the time complexity of computing our positional features is less severe than the sampling overhead in CAW, thus PINT can often be trained faster than CAW. Importantly, we establish that \initials is provably more expressive than CAW as well as \mptgns. 

\textbf{Our contributions} are three-fold:
\begin{itemize}
\item a rigorous theoretical foundation for TGNs is laid - elucidating the role of memory, benefits of injective message passing, limits of existing TGN models, temporal extension of the 1-WL test and its implications, impossibility results about temporal graph properties, and the relationship between main classes of TGNs --- as summarized in \autoref{fig:test};
\item explicit injective temporal functions are introduced, and a novel method for temporal graphs is proposed that is provably more expressive than state-of-the-art TGNs;
\item extensive empirical investigations underscore practical benefits of this work. The proposed method is either competitive or significantly better than existing models on several real benchmarks for dynamic link prediction, in transductive as well as inductive settings.  
\end{itemize}

\section{Preliminaries}

We denote a \emph{static graph} $G$ as a tuple $(V, E, \mathcal{X}, \mathcal{E})$, where $V=\{1, 2, \ldots, n\}$ denotes the set of nodes and $E \subseteq V \times V$ the set of edges. Each node $u \in V$ has a feature vector $x_u \in \mathcal{X}$ and each edge $(u,v) \in E$ has a feature vector $e_{uv} \in \mathcal{E}$, where $\mathcal{X}$ and $\mathcal{E}$ are countable sets of features. 

\paragraph{Dynamic graphs} can be roughly split according to their discrete- or continuous-time nature \cite{Kazemi2020}. 
A \emph{discrete-time dynamic graph} (DTDG) is of a sequence of graph snapshots $(G_1, G_2, \dots)$ usually sampled at regular intervals, each snapshot being a static graph $G_t=(V_t, E_t, \mathcal{X}_t, \mathcal{E}_t)$. 
%

A \emph{continuous-time dynamic graph} (CTDG) evolves with  node- and edge-level \emph{events},
such as addition and deletion.
We represent a CTDG as a sequence of time-stamped multi-graphs $(\mathsf{G}(t_0), \mathsf{G}(t_1), \dots)$ 
such that $t_k<t_{k+1}$, and $\mathsf{G}(t_{k+1})$ results from updating $\mathsf{G}(t_{k})$ with all events at time $t_{k+1}$.
We assume no event occurs between $t_k$ and $t_{k+1}$. 
We denote an interaction (i.e., edge addition event) between nodes $u$ and $v$ at time $t$ as a tuple $(u, v, t)$ associated with a feature vector $e_{u v}(t)$.
Unless otherwise stated, interactions correspond to undirected edges, i.e., $(u, v, t)$ is a shorthand for $(\{u, v\}, t)$.

Noting that CTDGs allow for finer (irregular) temporal resolution, we now formalize the intuition that DTDGs can be reduced to and thus analyzed as CTDGs, but the converse may need extra assumptions. 

\begin{proposition}[Relationship between DTDG and CTDG]\label{prop:equivalence}
For any DTDG we can build a CTDG with the same sets of node and edge features that contains the same information, i.e., we can reconstruct the original DTDG from the converted CTDG.
The converse holds if the CTDG timestamps form a subset of a uniformly spaced countable set.
\end{proposition}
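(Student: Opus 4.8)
The plan is to treat the two directions as explicit constructions, each paired with a decoding procedure that certifies no information is lost.

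For the forward direction (DTDG $\to$ CTDG), I would fix the order-preserving, injective time assignment $t_k = k$ and build the CTDG incrementally from the snapshot differences. At time $t_1$ I emit an addition event for every node in $V_1$ and every edge in $E_1$, carrying the features drawn from $\mathcal{X}_1$ and $\mathcal{E}_1$; for $k \geq 2$ I compare $G_k$ with $G_{k-1}$ and emit, at time $t_k$, an addition event for each node/edge present in $G_k$ but not in $G_{k-1}$, a deletion event for each node/edge in $G_{k-1}$ but not in $G_k$, and a feature-update event wherever a feature changed. Because every event reuses a feature already occurring in some $\mathcal{X}_t$ or $\mathcal{E}_t$, the node- and edge-feature sets are preserved. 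The decoder simply queries the accumulated state $\mathsf{G}(t)$ at each integer time $t = 1, 2, \dots$; by construction this state equals $G_t$ (even when $G_t = G_{t-1}$, in which case no event is emitted but the carried-forward state is still correct), so the original snapshot sequence is recovered exactly.

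For the converse (CTDG $\to$ DTDG) I would use the hypothesis that the timestamps lie in a uniformly spaced set $\{a + m\delta : m \in \mathbb{Z}\}$. This yields an index map $m(k) = (t_k - a)/\delta$, and I define a DTDG whose $m$-th snapshot is the accumulated state of the CTDG at time $a + m\delta$. Grid points carrying an event receive the updated graph, while grid points with no event repeat the previous snapshot --- legitimate precisely because the CTDG has no event strictly between consecutive timestamps. The decoder reconstructs the CTDG by scanning the snapshots, recording an event at time $a + m\delta$ exactly when $G_m \neq G_{m-1}$ and reading off the corresponding additions, deletions, and feature updates as the snapshot difference; hence the CTDG is recovered and information is again preserved.

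The step I expect to be the main obstacle is the converse, specifically articulating why the uniform-spacing hypothesis is essential rather than cosmetic. The crux is that a DTDG can only sample on a single regular grid, so a faithful embedding must place all CTDG timestamps on one common grid with a single step $\delta$; the hypothesis is exactly the condition guaranteeing such a $\delta$ exists. I would therefore make this necessity explicit by exhibiting a CTDG whose event times admit no common spacing --- for instance timestamps $0, 1, \sqrt{2}$, which are incommensurable --- and arguing that no regularly sampled snapshot sequence can record all three event times, so the hypothesis cannot be dropped.
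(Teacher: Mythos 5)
Your high-level strategy --- explicit constructions in each direction, each certified by a decoder --- is exactly the paper's, but the specific encodings you chose introduce two genuine gaps that the paper's (simpler) encodings avoid.

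In the forward direction the paper does not diff consecutive snapshots: it re-emits the \emph{entire} edge set of $G_i$ as events at time $i\delta$ (and encodes the node features of $G_i$ via auxiliary events to a dummy node $\diamond$), so the decoder recovers $G_i$ by just collecting the events stamped $i\delta$. Your diff-based encoding loses information in an edge case: the sequences $(G, G, G)$ and $(G)$ produce identical event sets, since repeated snapshots emit nothing, so the length of the DTDG (and hence the DTDG itself) is not recoverable without separately recording the horizon. It also forces you to introduce deletion and feature-update events, whereas the paper's construction stays inside the pure edge-addition formalism it uses everywhere else.

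The more serious problem is in the converse. You define the $m$-th snapshot as the \emph{accumulated} state at grid point $a+m\delta$ and decode events as snapshot differences. But a DTDG snapshot has $E_t \subseteq V_t \times V_t$, a set, while a CTDG is a multigraph in time: the same pair $(u,v)$ typically interacts at many distinct timestamps (this is the generic case in the paper's setting). After the first interaction, subsequent events $(u,v,t_k)$ do not change the accumulated simple graph, so your decoder silently drops them and the CTDG is not reconstructed. The paper sidesteps this by defining $G_k$ to contain \emph{only the events occurring at time $\tau_k$} (and to be empty when $\tau_k$ carries no event), so every event appears in exactly one snapshot and decoding is trivial. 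Your closing remark on why uniform spacing is essential (the incommensurable timestamps $0,1,\sqrt{2}$) is a nice observation absent from the paper and is correct, but the converse construction itself needs to be repaired as above before the proof goes through.
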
 

 Following the usual practice \citep{caw,jodie,tgat}, we focus on CTDGs with edge addition events (see \autoref{ap:deletion} for a discussion on deletion). 
Thus, we can represent temporal graphs as sets 
$\mathcal{G}(t)=\{(u_{k}, v_{k}, t_{k}) ~|~ t_k < t\}$. We also assume each distinct node $v$ in $\mathcal{G}(t)$ has an initial feature vector $x_v$. 

%
%

\paragraph{Message-passing temporal graph nets (\mptgns).}
\citet{tgn} introduced \mptgn as a general representation learning framework for temporal graphs. 
The goal is to encode the graph dynamics into node embeddings, capturing information that is relevant for the task at hand.
To achieve this, \mptgns rely on three main ingredients: memory, aggregation, and update. 
Memory comprises a set of vectors that summarizes the history of each node, and is updated using a recurrent model whenever an event occurs. 
The aggregation and update components resemble those in message-passing GNNs, where the embedding of each node is refined using messages from its neighbors.

We define the \emph{temporal neighbohood} of node $v$ at time $t$ as $\mathcal{N}(v, t) = \{ (u, e_{u v} (t^\prime), t^\prime) ~|~ \exists (u, v, t^\prime) \in \mathcal{G}(t)  \}$, i.e., the set of neighbor/feature/timestamp triplets from all interactions of node $v$ prior to $t$.
\mptgns compute the temporal representation $h_v^{(\ell)}(t)$ of $v$ at layer $\ell$ by recursively applying
\begin{align}
    \tilde{h}_v^{(\ell)}(t) & = \textsc{Agg}^{(\ell)}(\{\!\!\{ (h_u^{(\ell-1)}(t), t-t^\prime, e) \mid (u, e, t^\prime) \in \mathcal{N}(v, t)\}\!\!\}) \label{eq:aggregation}\\
    h_v^{(\ell)}(t) & = \textsc{Update}^{(\ell)}\left(h_v^{(\ell-1)}(t), \tilde{h}_v^{(\ell)}(t)\right) \label{eq:update},
\end{align}
where $\{\!\!\{ \cdot \}\!\!\}$ denotes multisets, $h_v^{(0)}(t) = s_v(t)$ is the {\em state} of $v$ at time $t$, and $\textsc{Agg}^{(\ell)}$ and $\textsc{Update}^{(\ell)}$ are arbitrary parameterized functions. The memory block updates the states as events occur.
Let $\mathcal{J}(v, t)$ be the set of events involving $v$ at time $t$.
%
The state of $v$ is updated due to $\mathcal{J}(v, t)$ as 
\begin{align}
m_v(t) &= \textsc{MemAgg}(\{\!\!\{[s_v(t), s_u(t), t-t_v, e_{vu}(t)] \mid (v, u, t) \in  \mathcal{J}(v, t)\}\!\!\}) \label{eq:mem-msg}
\\
s_v(t^+) &= \textsc{MemUpdate}(s_v(t), m_v(t)), \label{eq:memory}
\end{align}
where $s_v(0)=x_v$ (initial node features), $s_{v}(t^{+})$ denotes the updated state of $v$ due to events at time $t$, and $t_{v}$ denotes the time of the last update to $v$. \textsc{MemAgg} combines information from  simultaneous events involving node $v$ and \textsc{MemUpdate} usually implements a gated recurrent unit (GRU) \citep{GRU}. Notably, some \mptgns do not use memory, or equivalently, they employ \emph{identity memory}, i.e., $s_v(t) = x_v$ for all $t$. 
We refer to \autoref{ap:egn} for further details. 

\paragraph{Causal Anonymous Walks (CAWs).}  \citet{caw} proposed CAW as an approach for link prediction on temporal graphs. To predict if an event $(u, v, t)$ occurs, CAW first obtains sets $S_u$ and $S_v$ of temporal walks starting at nodes $u$ and $v$ at time $t$.
An $(L-1)$-length \emph{temporal walk} is represented as $W=((w_1, t_1), (w_2, t_2), \dots, (w_L, t_L))$, with $t_1 > t_2 > \dots > t_L$ and $(w_{i-1}, w_i, t_i) \in \mathcal{G}(t)$ $\forall i > 1$.
Note that when predicting $(u, v, t)$, we have walks starting at time $t_1=t$.
Then, CAW anonymizes walks replacing each node $w$ with a set $I_\text{CAW}(w; S_u, S_v)= \{g(w; S_u), g(w; S_v)\}$ of two feature vectors. The $\ell$-th entry of $g(w; S_u)$ stores how many times $w$ appears at the $\ell$-th position in a walk of $S_u$, i.e. $g(w, S_u)[\ell]=|\{W \in S_u : (w, t_{\ell}) = W_\ell \}|$ where $W_\ell$ is $\ell$-th pair of $W$.

To encode a walk $W$ with respect to the sets $S_u$ and $S_v$, CAW applies $\textsc{Enc}(W; S_u, S_v)= \mathrm{RNN}([f_1(I_{\text{CAW}}(w_i; S_u, S_v)) \| f_2(t_{i-1}-t_i)]_{i=1}^{L})$ where $f_1$ is a permutation-invariant function, $f_2$ is a time encoder, and $t_0=t_1=t$.   
Finally, CAW combines the embeddings of each walk in $S_u 
\cup S_v$ using mean-pooling or self-attention to obtain the representation for the event $(u,v,t)$. 

In practice, TGNs often rely on sampling schemes for computational reasons. However we are concerned with the expressiveness of TGNs, so our analysis assumes complete structural information, i.e., $S_u$ is the set of all temporal walks from $u$ and \mptgns combine information from all neighbors.

\section{The representational power and limits of TGNs}

We now study the expressiveness of TGNs on node/edge-level prediction. 
We also establish connections to a variant of the WL test and show limits of specific TGN models. Proofs are in \autoref{ap:proofs}.

\subsection{Distinguishing nodes with \mptgns}

We analyze \mptgns w.r.t. their ability to map different nodes to different locations in the embedding space. In particular, we say that an $L$-layer \mptgn distinguishes two nodes $u, v$ of a temporal graph at time $t$, if the last layer embeddings of $u$ and $v$ are different, i.e., $h^{(L)}_{u}(t) \neq h^{(L)}_{v}(t)$.

We can describe the MP computations of a node $v$ at time $t$ via its \textit{temporal computation tree} (TCT) $T_v(t)$. $T_v(t)$ has $v$ as its root and height equal to the number of MP-TGN layers $L$. We will keep the dependence on depth $L$ implicit for notational simplicity.  
For each element $(u, e, t^\prime) \in\mathcal{N}(v, t)$ associated with $v$, we have a node, say $i$, in the next layer of the TCT linked to the root by an edge annotated with $(e, t^\prime)$. 
The remaining TCT layers are built recursively using the same mechanism.
We denote by $\sharp^t_v$ the (possibly many-to-one) operator that maps nodes in $T_v(t)$ back to nodes in $\mathcal{G}(t)$, e.g., $\sharp^t_v i = u$. 
Each node $i$ in $T_v(t)$ has a state vector $s_i = s_{
\sharp^t_v i}(t)$.
To get the embedding of the root $v$, information is propagated bottom-up, i.e., starting from the leaves all the way up to the root --- each node aggregates the message from the layer below and updates its representation along the way. Whenever clear from context, we denote $\sharp^t_v$ simply as $\sharp$ for a cleaner notation.

We study the expressive power of \mptgns through the lens of functions on multisets adapted to temporal settings, i.e., comprising triplets of node states, edge features, and timestamps. Intuitively, injective functions `preserve' the information as it is propagated, so should be essential for maximally expressive \mptgns. We formalize this idea in \autoref{lemma:1} and \autoref{prop:injectiveness-requirement} via  \autoref{def:isomorphism}.

\vspace{-2pt}
\begin{definition}[Isomorphic TCTs] \label{def:isomorphism} 
Two TCTs $T_z(t)$ and $T_{z^\prime}(t)$ at time $t$ are isomorphic if there is a bijection $f: V(T_{z}(t)) \rightarrow V(T_{z^\prime}(t))$ between the nodes of the trees such that the following holds: \vspace{-2pt}
\begin{enumerate}[leftmargin=0.1cm]
    \item[] $(u, v, t^\prime) \in E(T_{z}(t)) \Longleftrightarrow (f(u), f(v), t^\prime) \in E(T_{z^\prime}(t))$ \vspace{-2pt}
    \item[] $\forall (u, v, t^\prime) \in E(T_{z}(t)): e_{uv}(t^\prime) = e_{f(u) f(v)}(t^\prime)$ and $\forall u \in V(T_{z}(t)): s_u = s_{f(u)}$ and $k_{u} = k_{f(u)}$
\end{enumerate}
Here, $k_{u}$ denotes the level (depth) of node $u$ in the tree. The root node has level $0$, and for a node $u$ with level $k_{u}$, the children of $u$ have level $k_{u} + 1$.
\end{definition}

\vspace{-1pt}
\begin{lemma} \label{lemma:1} 
If an \mptgn $Q$ with $L$ layers distinguishes two nodes $u, v$ of a dynamic graph $\mathcal{G}(t)$, then the $L$-depth TCTs $T_u(t)$ and $T_v(t)$ are not isomorphic.
\end{lemma}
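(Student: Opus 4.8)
The plan is to prove the contrapositive: if the $L$-depth TCTs $T_u(t)$ and $T_v(t)$ are isomorphic, then every $L$-layer \mptgn $Q$ produces identical embeddings $h_u^{(L)}(t) = h_v^{(L)}(t)$, so $Q$ cannot distinguish $u$ and $v$. The starting observation is that the TCT is exactly the unrolled computation graph of the recursion in \eqref{eq:aggregation}--\eqref{eq:update}: the root carries $v$ and its state, the children of any tree node $i$ with $\sharp i = w$ are in one-to-one correspondence with $\mathcal{N}(w, t)$ (the incident edge recording the pair $(e, t')$), and propagating $\textsc{Agg}$ and $\textsc{Update}$ bottom-up through $T_v(t)$ reproduces $h_v^{(L)}(t)$. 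Hence it suffices to show that an isomorphism $f$ forces this bottom-up computation to agree at corresponding tree nodes.

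I would run an induction on the number of message-passing rounds $\ell$ (equivalently, from the leaves toward the root), proving that $\hat h_i^{(\ell)} = \hat h_{f(i)}^{(\ell)}$ for every tree node $i$, where $\hat h^{(\ell)}$ denotes the intermediate embedding obtained by applying the recursion on the tree. For the base case $\ell = 0$, the initial embedding of $i$ is its state $s_i$, and \autoref{def:isomorphism} guarantees $s_i = s_{f(i)}$. For the inductive step, I would first argue that $f$ restricts to a bijection between the children of $i$ and the children of $f(i)$: since $f$ preserves the edge relation and the level function $k_\cdot$, it maps parent--child edges to parent--child edges (the parent being the endpoint of smaller level) and in particular maps leaves to leaves. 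Because $f$ also preserves edge features and timestamps, each child $j$ of $i$ carrying annotation $(e, t')$ is matched to a child $f(j)$ of $f(i)$ with the same $(e, t')$, so the message triple $(\hat h_j^{(\ell-1)}, t - t', e)$ equals $(\hat h_{f(j)}^{(\ell-1)}, t - t', e)$ by the induction hypothesis. The two aggregation multisets are therefore equal, and since $\textsc{Agg}^{(\ell)}$ and $\textsc{Update}^{(\ell)}$ are well-defined functions of $\hat h_i^{(\ell-1)}$ and this multiset, we obtain $\hat h_i^{(\ell)} = \hat h_{f(i)}^{(\ell)}$.

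To conclude, note that $f$ preserves levels and the root is the unique node of level $0$, so $f$ maps the root of $T_u(t)$ to the root of $T_v(t)$; taking $\ell = L$ at the roots yields $h_u^{(L)}(t) = h_v^{(L)}(t)$, contradicting the assumption that $Q$ distinguishes $u$ and $v$. The only delicate points, which I would treat carefully rather than as an afterthought, are (i) verifying that $f$ genuinely respects the parent--child orientation and the empty-children (leaf) case, so the children bijection is well-posed at every node, and (ii) checking that the quantity actually seen by the aggregator --- the time gap $t - t'$ together with $e$ --- is a function of the isomorphism-invariant annotation $(e, t')$ and the global reference time $t$, which is precisely why preserving edge \emph{timestamps} in \autoref{def:isomorphism} (and not only edge features) is essential. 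This is the temporal counterpart of the standard fact that isomorphic computation trees yield identical message-passing outputs.
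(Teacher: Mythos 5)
Your proposal is correct and follows essentially the same route as the paper's proof: argue the contrapositive and induct on the number of layers, using the isomorphism to match children and their $(e,t')$ annotations so that the multisets fed to $\textsc{Agg}$ and $\textsc{Update}$ coincide at corresponding tree nodes. Your framing of the induction over tree nodes (rather than over pairs of graph nodes with isomorphic shallower TCTs, as in the paper) and your explicit checks on root/level preservation are minor presentational refinements of the same argument.
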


For non-isomorphic TCTs, \autoref{prop:injectiveness-requirement} shows that improving \mptgns with \emph{injective} message passing layers suffices to achieve node distinguishability, extending results from static GNNs \citep{gin}. 

\begin{proposition}[Most expressive \mptgns]\label{prop:injectiveness-requirement}
If the L-depth TCTs of two nodes $u, v$ of a temporal graph $\mathcal{G}(t)$ at time $t$ are not isomorphic, then an \mptgn $Q$ with $L$ layers and injective aggregation and update functions at each layer is able to distinguish nodes $u$ and $v$.
\end{proposition}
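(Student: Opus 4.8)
The plan is to establish the contrapositive: I will show that if an $L$-layer \mptgn $Q$ with injective $\textsc{Agg}^{(\ell)}$ and $\textsc{Update}^{(\ell)}$ at every layer assigns the \emph{same} final embedding to $u$ and $v$, i.e. $h^{(L)}_u(t) = h^{(L)}_v(t)$, then the depth-$L$ TCTs $T_u(t)$ and $T_v(t)$ are isomorphic in the sense of \autoref{def:isomorphism}. This mirrors the maximal-expressivity argument for static GNNs \citep{gin}, but carried out over the temporal multisets of triples $(\text{state}, \text{time gap}, \text{edge feature})$ that appear in \eqref{eq:aggregation}. I would prove the slightly stronger, layer-indexed claim by induction on $\ell$: for any two tree/graph nodes $a,b$, $h^{(\ell)}_a(t) = h^{(\ell)}_b(t)$ implies the height-$\ell$ subtrees rooted at $a$ and $b$ are isomorphic.

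For the base case $\ell = 0$, the embedding is just the state, $h^{(0)}_a(t) = s_a$, and a height-$0$ subtree is a single root carrying its state; hence equal embeddings give $s_a = s_b$, which is exactly isomorphism of height-$0$ trees. For the inductive step, suppose $h^{(\ell)}_a(t) = h^{(\ell)}_b(t)$. Since $\textsc{Update}^{(\ell)}$ is injective, this forces both $h^{(\ell-1)}_a(t) = h^{(\ell-1)}_b(t)$ and $\tilde h^{(\ell)}_a(t) = \tilde h^{(\ell)}_b(t)$. Injectivity of $\textsc{Agg}^{(\ell)}$ then yields equality of the child multisets, $\multiset{(h^{(\ell-1)}_w(t), t-t', e) : (w,e,t') \in \mathcal{N}(a,t)} = \multiset{(h^{(\ell-1)}_w(t), t-t', e) : (w,e,t') \in \mathcal{N}(b,t)}$. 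Equality of multisets gives a bijection between the children of the two roots matching triples entrywise: matched children have equal $h^{(\ell-1)}$ (so, by the induction hypothesis, isomorphic height-$(\ell-1)$ subtrees), equal edge features $e$, and equal time gaps $t-t'$; since $t$ is fixed, the map $t' \mapsto t-t'$ is a bijection, so the raw timestamps $t'$ on the tree edges agree as well. Gluing the root-to-root map to the per-child subtree isomorphisms supplied by the induction hypothesis produces a bijection $f$ between $V(T_a(t))$ and $V(T_b(t))$; it preserves edges, edge features $e_{uv}(t')$, states $s_u$, and --- because nodes are matched level by level --- the level labels $k_u$, verifying every requirement of \autoref{def:isomorphism}. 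Taking $\ell = L$ and contraposing gives the proposition.

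The step I expect to be the main obstacle is not the structural induction but justifying that the assumed injective $\textsc{Agg}^{(\ell)}$ and $\textsc{Update}^{(\ell)}$ can actually exist, which is what makes the statement non-vacuous. Because states, edge features, and (countably many) timestamps are drawn from countable sets, the triples and the finite multisets over them live in a countable domain, so a Deep-Sets-style encoding (a suitable sum of per-element injective maps, or an explicit prime/base-$p$ code) gives an injective multiset aggregator, and a pairing function gives an injective update; I would spell this out to confirm feasibility. A secondary point worth stating carefully is that injectivity must hold jointly on the triple $(\text{state}, \text{time gap}, \text{edge feature})$ rather than on the state alone, since it is the \emph{annotated} multiset --- not merely the multiset of child states --- that encodes the temporal tree structure recovered in the inductive step.
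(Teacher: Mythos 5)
Your proposal is correct and follows essentially the same route as the paper's proof: an induction on the tree depth that uses injectivity of $\textsc{Agg}^{(\ell)}$ and $\textsc{Update}^{(\ell)}$ to relate equality of embeddings to equality of the annotated child multisets, with the existence of such injective maps justified by countability of states, edge features, and timestamps (the paper cites the Deep-Sets-style Lemma 5 of \citet{gin}). The only difference is that you argue the contrapositive (equal embeddings $\Rightarrow$ isomorphic TCTs) while the paper argues the direct implication (non-isomorphic TCTs $\Rightarrow$ distinct embeddings); the inductive content is identical.
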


So far, we have considered TCTs with general memory modules, i.e.,  nodes are annotated with memory states. However, an important question remains: \emph{How does the expressive power of \mptgns change as a function of the memory}? Our next result - \autoref{prop:memory_expressivity} - shows that adding GRU-based memory does not increase the expressiveness of suitably deep \mptgns.

\begin{proposition}[The role of memory] \label{prop:memory_expressivity}
Let $\mathcal{Q}^{[M]}_L$ denote the class of \mptgns with recurrent memory and $L$ layers. Similarly, we denote by $\mathcal{Q}_L$ the family of memoryless \mptgns with $L$ layers. Let $\Delta$ be the temporal diameter of $\mathcal{G}(t)$ (see \autoref{def:diam}). Then, it holds that: 
\vspace{-6pt}
\begin{enumerate}[noitemsep]
    \item If $L < \Delta$: $\mathcal{Q}^{[M]}_L$ is strictly more powerful than $\mathcal{Q}_L$ in distinguishing nodes of $\mathcal{G}(t)$;
    \item For any $L$ : $\mathcal{Q}_{L+\Delta}$ is at least as powerful as $\mathcal{Q}^{[M]}_{L}$ in distinguishing nodes of $\mathcal{G}(t)$.
\end{enumerate}
\end{proposition}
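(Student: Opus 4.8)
The plan is to reduce both parts to statements about temporal computation trees (TCTs), using \autoref{lemma:1} together with \autoref{prop:injectiveness-requirement}. These two results pin down the maximal distinguishing power of an $L$-layer \mptgn equipped with a fixed memory mechanism: it can separate $u$ and $v$ iff their $L$-depth TCTs, \emph{annotated with the states produced by that memory}, are non-isomorphic in the sense of \autoref{def:isomorphism}. Thus comparing $\mathcal{Q}^{[M]}_L$ with $\mathcal{Q}_L$ (resp.\ $\mathcal{Q}_{L+\Delta}$) amounts to comparing the isomorphism type of the \emph{memory-annotated} $L$-depth TCTs against that of the \emph{feature-annotated} $L$-depth (resp.\ $(L+\Delta)$-depth) TCTs, where in the memoryless case every tree node $i$ carries $s_i = x_{\sharp i}$ rather than a recurrent state.

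For part 2 (the containment $\mathcal{Q}_{L+\Delta}\succeq\mathcal{Q}^{[M]}_L$), the crux is a single lemma: the memory state $s_v(t)$ is a deterministic function of the $\Delta$-depth feature-annotated TCT $T_v(t)$. I would prove this by unrolling the recurrence in \eqref{eq:mem-msg}--\eqref{eq:memory}: $s_v(t)$ is obtained from the initial features $s_\cdot(0)=x_\cdot$ by repeatedly applying \textsc{MemAgg}/\textsc{MemUpdate} along the time-ordered sequence of events incident to $v$ and, recursively, to $v$'s interaction partners. The temporal diameter $\Delta$ (\autoref{def:diam}) is exactly what bounds the depth of this unrolled dependency: after $\Delta$ levels the recursion reaches initial features, so $s_v(t)$ factors through the top $\Delta$ levels of feature/timestamp structure around $v$. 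Given the lemma, suppose $Q\in\mathcal{Q}^{[M]}_L$ separates $u,v$; by \autoref{lemma:1} their memory-annotated $L$-depth TCTs differ. The subtree rooted at a node $i$ of depth $k\le L$ in the $(L+\Delta)$-depth feature-TCT is the $(L+\Delta-k)$-depth feature-TCT of $\sharp i$, hence contains the $\Delta$-depth feature-TCT of $\sharp i$; so an isomorphism of the $(L+\Delta)$-depth feature-TCTs of $u,v$ would force equality of all memory states at depth $\le L$ and thus a memory-annotated $L$-TCT isomorphism --- a contradiction. Therefore the $(L+\Delta)$-depth feature-TCTs are non-isomorphic, and by \autoref{prop:injectiveness-requirement} some injective model in $\mathcal{Q}_{L+\Delta}$ separates $u,v$.

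For part 1, the inclusion $\mathcal{Q}^{[M]}_L\succeq\mathcal{Q}_L$ is immediate, since a memoryless model is the identity-memory special case and a GRU realizes $s_v(t)=x_v$ by closing its update gate (exactly, or to arbitrary accuracy on a finite graph), so any $\mathcal{Q}_L$ model is reproduced in $\mathcal{Q}^{[M]}_L$. Strictness requires an explicit separating instance when $L<\Delta$: I would construct a temporal graph with two nodes $u,v$ whose $L$-depth \emph{feature}-annotated TCTs are isomorphic (same structure, timestamps, edge and initial-node features), so that by \autoref{lemma:1} no $\mathcal{Q}_L$ model separates them, yet whose recurrent memory states differ, $s_u(t)\neq s_v(t)$, so that their memory-annotated TCTs differ already at the root and \autoref{prop:injectiveness-requirement} yields a separating model in $\mathcal{Q}^{[M]}_L$. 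The construction places $u,v$ at the ends of two temporal chains that agree topologically and in features within $L$ hops but differ at distance between $L$ and $\Delta$, with timestamps ordered so the recurrence propagates the far-end features inward; the hypothesis $L<\Delta$ is precisely what lets memory ``see'' past the $L$-hop receptive field (and, by the lemma of part 2, what makes such a separation impossible once $L\ge\Delta$).

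The main obstacle is the unrolling lemma of part 2 --- turning the informal statement ``memory integrates information up to temporal distance $\Delta$'' into the precise claim that $s_v(t)$ factors through the $\Delta$-depth feature-TCT, which hinges on using \autoref{def:diam} to bound the recursion depth and on correctly handling the time-ordering of \textsc{MemUpdate}. Verifying the two matching/mismatching properties of the explicit construction in part 1 is the secondary technical burden.
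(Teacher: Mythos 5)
Your proposal is correct and follows essentially the same route as the paper: part 1 via the identity-memory inclusion plus an explicit counterexample, and part 2 via the key lemma that recurrent memory states factor through temporal structure of depth at most $\Delta$ (the paper proves this by characterizing the memory's dependency set and identifying it with the \tcTCT, which is your ``unrolling'' argument made precise), followed by the same subtree-containment step showing that an isomorphism of $(L+\Delta)$-depth feature-annotated TCTs forces an isomorphism of memory-annotated $L$-depth TCTs.
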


The \mptgn framework is rather general and subsumes many modern methods for temporal graphs \citep[e.g., ][]{tgat,jodie,DyRep}. We now analyze the theoretical limitations of two concrete instances of \mptgns: TGAT \citep{tgat} and TGN-Att \citep{tgn}. Remarkably, these models are among the best-performing \mptgns. Nonetheless, we can show that there are nodes of very simple temporal graphs that TGAT and TGN-Att cannot distinguish (see \autoref{fig:limits-tgns}). We formalize this in \autoref{prop:TGAT-TGN-Att} by establishing that there are cases in which TGNs with injective layers can succeed, but TGAT and TGN-Att cannot.

\begin{proposition}[Limitations of TGAT/TGN-Att]\label{prop:TGAT-TGN-Att}
There exist temporal graphs containing nodes $u, v$ that have non-isomorphic TCTs, yet no TGAT nor TGN-Att with mean memory aggregator (i.e., using \textsc{Mean} as \textsc{MemAgg}) can distinguish $u$ and $v$.
\end{proposition}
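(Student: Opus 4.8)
The plan is to exhibit an explicit counterexample: a temporal graph (or a disjoint union of two components) containing nodes $u$ and $v$ whose TCTs are non-isomorphic, yet which no TGAT nor TGN-Att can separate. The construction exploits the single weakness shared by both models, namely that their aggregators are \emph{count-blind} whenever all the inputs being aggregated are identical. Concretely, I would let $u$ be the center of a temporal star with two leaves and $v$ the center of a temporal star with three leaves (as depicted in \autoref{fig:limits-tgns}), assign every node the same initial feature $x$, every edge the same feature $e$, and every interaction the same timestamp $t_1$; the target time is some $t > t_1$.

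First I would verify that the TCTs are non-isomorphic. By \autoref{def:isomorphism} any isomorphism must be a level-preserving bijection on tree nodes; but the root of $T_u(t)$ has two children at level $1$ while the root of $T_v(t)$ has three, so no such bijection exists (and for $L \ge 2$ the two trees simply have different cardinalities). Hence these nodes \emph{could} be separated by an injective MP-TGN via \autoref{prop:injectiveness-requirement}, which is precisely what makes the failure of TGAT/TGN-Att a genuine limitation.

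The core of the argument is an induction showing $h_u^{(\ell)}(t) = h_v^{(\ell)}(t)$ for every $\ell$. For TGN-Att I first dispatch the memory: at time $t_1$ every event carries the identical message $[x, x, t_1, e]$ (all states, the neighbor state, the elapsed time $t_1 - 0$, and the edge feature coincide), so applying \textsc{Mean} to two, three, or one such messages returns the same $m$; the shared \textsc{MemUpdate} then forces \emph{all} nodes---$u$, $v$, and every leaf---to share a single state $s^\star$ at time $t$. For TGAT this step is vacuous, since the memory is the identity and $s_u = s_v = x$. For the message-passing layers, the key observation is that \textsc{Agg} in both models is softmax self-attention: when every neighbor contributes an identical value (guaranteed by the common states, the common time difference $t - t_1$, and the common edge feature), the attention output equals that single value because $\sum_i \alpha_i\, v^{\mathrm{val}} = v^{\mathrm{val}}$ for any probability weights $\alpha$, \emph{independently of the number of neighbors}. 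Thus $\tilde{h}_u^{(\ell)}(t) = \tilde{h}_v^{(\ell)}(t)$, and since the roots share a state and \textsc{Update} is common, $h_u^{(\ell)}(t) = h_v^{(\ell)}(t)$; the identical invariant propagates to the leaves, closing the induction at $\ell = L$.

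The main obstacle---and the reason the construction must enforce \emph{full} feature identity among neighbors rather than a mere difference in multiplicity---is that attention weights depend on the keys through query--key scores, so in general differing multiplicities of \emph{distinct} neighbors would be detected. The delicate part is therefore arranging the graph so that the two aggregations face multisets that agree as sets while differing only in count, and checking that the time encoder, the edge features, and the memory updates all preserve this degeneracy at every layer. Once that invariant is secured, the count-blindness of \textsc{Mean} and of uniform-weight attention does the rest.
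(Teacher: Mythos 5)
Your proposal is correct and uses the same underlying proof technique as the paper --- an explicit counterexample followed by a layer-wise induction that exploits the proportion invariance of softmax attention and the count-blindness of \textsc{Mean} memory aggregation --- but the construction itself is genuinely different and more elementary. You take a fully degenerate instance (a disjoint union of a two-leaf and a three-leaf temporal star, with all node features, edge features, and timestamps equal), so the invariant ``all aggregated values are identical'' holds by fiat at every layer, and the induction closes immediately. The paper instead uses a single connected graph with two distinct timestamps and three node colors (see the appendix figure accompanying its proof; the left panel of \autoref{fig:limits-tgns} depicts this connected example, not a star pair). There, $u$ has temporal neighbors $\{z,w\}$ while $v$ has only $\{w\}$, and the crucial fact that $z$ and $w$ receive identical embeddings is \emph{derived} from the isomorphism of their TCTs via \autoref{lemma:1} rather than imposed by making everything equal; the memory argument likewise has to track several distinct equivalence classes of states ($s_u=s_v=s_a$, $s_z=s_w$, $s_b=s_c$) across two event times. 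What your version buys is brevity and transparency --- it isolates the count-blindness failure in its purest form, exactly as in the classic mean-vs-sum aggregation examples for static GNNs. What the paper's version buys is a demonstration that the failure is not an artifact of total feature degeneracy or of disconnected components: it persists in a connected graph with heterogeneous node features and multiple timestamps, which makes the limitation look less contrived. Both are valid witnesses for the existential claim, and your verification that the non-isomorphic TCTs would be separated by an injective \mptgn (via \autoref{prop:injectiveness-requirement}) correctly establishes that this is a genuine expressiveness gap.
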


\begin{figure}[tb]
    \centering
    \includegraphics[width=\textwidth]{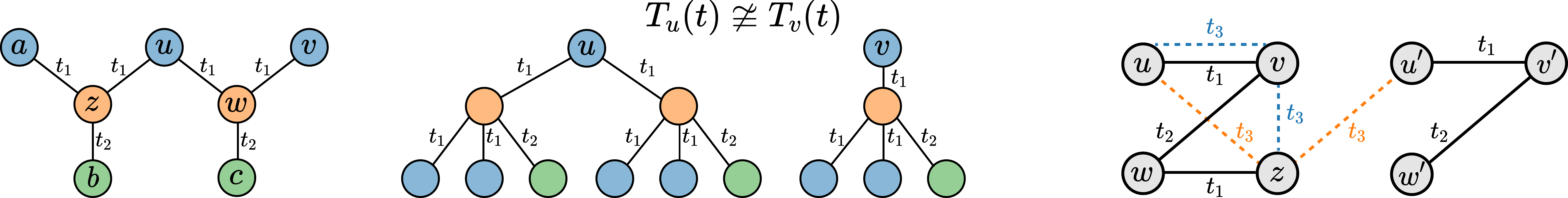}
    \caption{\textbf{Limitations of TGNs}. [\textit{Left}] Temporal graph with nodes $u$, $v$ that TGN-Att/TGAT cannot distinguish. Colors are node features, edge features are identical, and $t_3 > t_2 > t_1$. [\textit{Center}] TCTs of $u$ and $v$  are non-isomorphic. However, the attention layers of TGAT/TGN-Att compute weighted averages over a same multiset of values, returning identical messages for $u$ and $v$. [\textit{Right}] MP-TGNs fail to distinguish the events $\color{lb}(u, v, t_3)\color{black}$ and $\color{lb}(v, z, t_3)\color{lb}$ as TCTs of $z$ and $u$ are isomorphic. Meanwhile, CAW cannot separate $\color{orange}(u, z, t_3)\color{black}$ and $\color{orange}(u^\prime, z, t_3)\color{lb}$: the $3$-depth TCTs of $u$ and $u^\prime$ are not isomorphic, but the temporal walks from $u$ and $u^\prime$ have length $1$, keeping CAW from capturing structural differences.}
    \label{fig:limits-tgns}
\vspace{-.35cm}
\end{figure}

This limitation stems from the fact that the attention mechanism employed by TGAT and TGN-Att is proportion invariant \citep{Perez2019}. The memory module of TGN-Att cannot counteract this limitation due to its mean-based aggregation scheme. We provide more details in Appendix \ref{ap:limitations_TGAT}.  

\subsection{Predicting temporal links}
Models for dynamic graphs are usually trained and evaluated on temporal link prediction \citep{Kleinberg2007}, which consists in predicting whether an event would occur at a given time. To predict an event between nodes $u$ and $v$ at $t$, \mptgns combine the node embeddings $h^{(L)}_u(t)$ and $h^{(L)}_v(t)$, and push the resulting vector through an MLP. On the other hand, CAW is originally designed for link prediction tasks and directly computes edge embeddings, bypasssing the computation of node representations.

We can extend the notion of node distinguishability to edges/events. We say that a model distinguishes two synchronous events $\gamma=(u, v, t)$ and $\gamma^\prime = (u^\prime, v^\prime, t)$ of a temporal graph if it assigns different edge embeddings $h_{\gamma} \neq h_{\gamma^\prime}$ for $\gamma$ and $\gamma^\prime$. \autoref{prop:tgns_vs_caw} asserts that CAWs are not strictly more expressive than \mptgns, and vice-versa. Intuitively, CAW's advantage over \mptgns lies in its ability to exploit node identities and capture correlation between walks. However, CAW imposes temporal constraints on random walks, i.e., walks have timestamps in decreasing order, which can limit its ability to distinguish events. \autoref{fig:limits-tgns}(Right) sketches constructions for \autoref{prop:tgns_vs_caw}. 

\begin{proposition}[Limitations of \mptgns and CAW]\label{prop:tgns_vs_caw}
There exist distinct synchronous events of a temporal graph that CAW can distinguish but \mptgns with injective layers cannot, and vice-versa.
\end{proposition}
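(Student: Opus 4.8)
The plan is to prove both directions by explicit construction, exhibiting the temporal graph sketched in \autoref{fig:limits-tgns}(Right) and verifying the two separations. Since the claim is that neither class strictly dominates the other, it suffices to produce (i) a pair of synchronous events separated by some CAW but by no \mptgn with injective layers, and (ii) a pair separated by some injective \mptgn but by no CAW.

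For direction (i), I would take the events $\gamma=(u,v,t_3)$ and $\gamma'=(v,z,t_3)$, which share the endpoint $v$. The key structural fact to establish is that the $L$-depth TCTs $T_u(t_3)$ and $T_z(t_3)$ are isomorphic in the sense of \autoref{def:isomorphism}. Since every \mptgn computation is a deterministic function of the TCT alone (aggregation and update read only the multiset of child states, edge features, and timestamps), isomorphic TCTs force $h_u^{(L)}(t_3)=h_z^{(L)}(t_3)$ for \emph{every} \mptgn, injective or not --- this is exactly the contrapositive of \autoref{lemma:1}. Because both events also contain $v$ with its own shared embedding, any symmetric combine-then-MLP readout (as required in the undirected setting) receives the same multiset of endpoint embeddings $\{h_u,h_v\}=\{h_v,h_z\}$, so the event embeddings coincide. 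I would then exhibit a single CAW instantiation --- a choice of $f_1$, time encoder $f_2$, RNN, and pooling --- that separates $\gamma$ from $\gamma'$: this uses CAW's anonymization $I_{\mathrm{CAW}}(\cdot;S_u,S_v)$, which records position-count profiles relative to both root sets and thereby captures the correlation between the walks of the two endpoints, a quantity invisible to the tree-local \mptgn aggregation. Showing that this correlation genuinely differs between the two shared-$v$ events is the crux of this direction.

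For direction (ii), I would use $\gamma=(u,z,t_3)$ and $\gamma'=(u',z,t_3)$, again sharing $z$, and arrange the timestamps so that all temporal walks emanating from $u$ and from $u'$ have length $1$ (the decreasing-time constraint $t_1>t_2>\dots>t_L$ blocks any continuation past the first hop). Then $S_u$ and $S_{u'}$ are identical as inputs, and since $S_z$ is common, every CAW --- regardless of parameters --- produces identical edge embeddings, so no CAW distinguishes $\gamma,\gamma'$. Conversely, I would design the gadget so that the $3$-depth TCTs $T_u(t_3)$ and $T_{u'}(t_3)$ are non-isomorphic; \autoref{prop:injectiveness-requirement} then guarantees a $3$-layer \mptgn with injective aggregation and update functions outputting $h_u^{(3)}(t_3)\neq h_{u'}^{(3)}(t_3)$, and composing with an injective combine function separates the events.

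The main obstacle is direction (i): I must certify, for the concrete gadget, both that $T_u(t_3)\cong T_z(t_3)$ (so the \mptgn side provably fails for \emph{all} models) and that CAW's anonymized walk statistics really differ. The former is a finite but fiddly check of the trees under \autoref{def:isomorphism}; the latter requires pinpointing a node whose position-count vector across $S_u,S_v$ differs from its counterpart across $S_v,S_z$, and verifying that some admissible encoder preserves this difference through the RNN and pooling. Direction (ii) is comparatively routine, as the CAW failure follows immediately from equality of the length-$1$ walk sets, and the \mptgn success follows directly from \autoref{prop:injectiveness-requirement}.
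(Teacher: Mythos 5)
Your proposal is correct and follows essentially the same route as the paper's proof: the paper also separates the two directions with the twin constructions of \autoref{fig:limits-tgns}(Right), arguing that isomorphic TCTs of $u$ and $z$ plus the shared endpoint $v$ defeat every \mptgn while an explicit CAW instantiation (identity MLPs, summing RNN, mean readout) separates the anonymized walk multisets, and that length-$1$ walk sets make CAW blind to $(u,z,\cdot)$ versus $(u^\prime,z,\cdot)$ while non-isomorphic TCTs let \autoref{prop:injectiveness-requirement} finish the other direction. The only work you defer --- writing out the anonymized position-count vectors to certify the CAW separation in direction (i) --- is exactly the finite verification the paper carries out explicitly.
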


\subsection{Connections with the WL test}
\label{subsec:temporal_wl}
The Weisfeiler-Leman test (1-WL) has been used as a key tool to analyze the expressive power of GNNs. We now study the power of \mptgns under a temporally-extended version of 1-WL, and prove negative results regarding whether TGNs can recognize properties of temporal graphs. 

\paragraph{Temporal WL test.} We can extend the WL test for temporal settings in a straightforward manner by exploiting the equivalence between temporal graphs and multi-graphs with timestamped edges \citep{Orsini2015}. In particular, the temporal variant of 1-WL assigns colors for all nodes in an input dynamic graph $\mathcal{G}(t)$ by applying the following iterative procedure: 
\begin{enumerate}[leftmargin=1.2cm, align=left]
\item[\emph{Initialization}:] The colors of all nodes in $\mathcal{G}(t)$ are initialized using the initial node features: $\forall v \in V(\mathcal{G}(t)),  c^{0}(v) = x_v$. If node features are not available, all nodes receive identical colors; \vspace{-4pt} 
\item[\emph{Refinement}:] At step $\ell$, the colors of all nodes are refined using a hash (injective) function: for all $v \in V(\mathcal{G}(t))$, we apply $c^{\ell+1}(v) = \textsc{Hash}(c^{\ell}(v), \multiset{(c^{\ell}(u), e_{u v}(t^\prime), t^\prime): (u, v, t^\prime) \in \mathcal{G}(t)})$; \vspace{-4pt}
\item[\emph{Termination}:] The test is carried out for two temporal graphs at time $t$ in parallel and stops when the multisets of corresponding colors diverge, returning non-isomorphic. If the algorithm runs until the number of different colors stops increasing, the test is deemed inconclusive.
\end{enumerate}

We note that the temporal WL test trivially reduces to the standard 1-WL test if all timestamps and edge features are identical. The resemblance between \mptgns and GNNs and their corresponding WL tests suggests that the power of \mptgns is bounded by the temporal WL test. \autoref{cor:temporal-wl-tgns} conveys that \mptgns with injective layers are as powerful as the temporal WL test.

\begin{proposition} \label{cor:temporal-wl-tgns} Assume finite spaces of initial node features $\mathcal{X}$, edge features $\mathcal{E}$, and timestamps $\mathcal{T}$. Let the number of events of any temporal graph be bounded by a fixed constant. Then, there is an \mptgn with suitable parameters using injective aggregation/update functions that outputs different representations for two temporal graphs if and only if the temporal-WL test outputs `non-isomorphic'. 
\end{proposition}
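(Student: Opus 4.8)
The plan is to prove the biconditional by establishing the two inequalities of expressive power separately, bridging the \emph{graph}-level statement to the \emph{node}-level machinery already developed (\autoref{lemma:1}, \autoref{prop:injectiveness-requirement}) through two devices: an injective graph-level readout on the multiset of final node embeddings, and the standard correspondence between the temporal-WL coloring and isomorphism of temporal computation trees. Concretely, I would first record the bridging lemma that, for every depth $\ell$, two nodes satisfy $c^{\ell}(u)=c^{\ell}(v)$ if and only if their depth-$\ell$ TCTs $T_u(t)$ and $T_v(t)$ are isomorphic in the sense of \autoref{def:isomorphism}. This follows by induction on $\ell$: the refinement step $c^{\ell+1}(v)=\textsc{Hash}(c^{\ell}(v), \multiset{(c^{\ell}(u), e_{uv}(t'), t') : (u,v,t')\in\mathcal{G}(t)})$ with injective \textsc{Hash} encodes exactly the root's state together with the multiset of (child color, edge feature, timestamp) triples, which is precisely the data that distinguishes TCTs up to isomorphism. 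I take identity/memoryless states so that $h^{(0)}_v(t)=x_v=c^{0}(v)$; a general recurrent memory can be folded into the initial coloring, as \textsc{MemUpdate} is itself a refinement consistent with the WL initialization.

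For the forward direction ($\Rightarrow$), I would prove the coarsening lemma that \emph{every} \mptgn (injective or not) is bounded by temporal-WL: by induction on $\ell$, if $c^{\ell}(u)=c^{\ell}(v)$ then $h^{(\ell)}_u(t)=h^{(\ell)}_v(t)$. The inductive step uses that equal WL colors force equality of the aggregation multisets $\multiset{(h^{(\ell)}_w(t), t-t', e)}$ (the map $t'\mapsto t-t'$ is a bijection for fixed $t$), so $\textsc{Agg}^{(\ell+1)}$ and then $\textsc{Update}^{(\ell+1)}$ return equal values. Consequently, whenever the two temporal graphs carry identical multisets of WL colors at every step, they carry identical multisets of node embeddings, and any readout --- in particular the injective one --- yields the same graph representation. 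Contrapositively, if the injective \mptgn outputs different representations, then the WL color multisets must diverge, i.e., temporal-WL returns `non-isomorphic'.

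For the converse ($\Leftarrow$), I would invoke the finiteness hypotheses ($\mathcal{X},\mathcal{E},\mathcal{T}$ finite and a fixed bound on the number of events, hence on the number of nodes) to guarantee that only finitely many states, triples, and bounded-size aggregation multisets ever arise; over such a countable domain a GIN-style argument yields injective functions on multisets, so at each layer we may realize injective $\textsc{Agg}^{(\ell)}$ and $\textsc{Update}^{(\ell)}$ exactly as in \autoref{prop:injectiveness-requirement}. Proceeding by induction, these choices make $h^{(\ell)}_v(t)$ a bijective image of $c^{\ell}(v)$ (equivalently, of the isomorphism type of $T_v(t)$). Since the node set is bounded, the WL partition stabilizes after a bounded number of steps, so I pick the depth $L$ at least this large; composing with an injective readout over $\multiset{h^{(L)}_v(t)}$ then guarantees that divergence of the WL color multisets produces distinct graph representations.

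I expect the main obstacle to be the existence and construction of the injective aggregation and update maps, together with the bookkeeping that keeps their \emph{composition} $\textsc{Update}\circ\textsc{Agg}$ injective in the joint argument $(h^{(\ell)}_v(t), \tilde h^{(\ell)}_v(t))$; this is exactly where the countability afforded by the finiteness assumptions and the $t-t'\leftrightarrow t'$ correspondence must be handled with care. The remaining pieces --- the WL/TCT bridging lemma, the stabilization bound on $L$, and the injective readout --- are routine once this is in place.
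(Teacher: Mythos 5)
Your proposal is correct and follows essentially the same route as the paper: one direction via the coarsening induction showing that equal temporal-WL colors force equal MP-TGN embeddings (the paper's \autoref{lem:b2}), and the converse via a GIN-style construction of injective aggregation/update maps so that $h^{(\ell)}_v(t)$ is an injective image of $c^{\ell}(v)$, finished off with an injective multiset readout. The only soft spot is your aside that recurrent memory ``can be folded into the initial coloring''---the paper instead reduces memory to $\Delta$ extra memoryless layers via \autoref{prop:memory_expressivity}---but since your main argument (and the proposition itself) only needs a memoryless instance, this does not affect correctness.
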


A natural consequence of the limited power of \mptgns is that even the most powerful \mptgns fail to distinguish relevant graph properties, and the same applies to CAWs (see \autoref{cor:properties}).

\begin{proposition}\label{cor:properties}
There exist non-isomorphic temporal graphs differing in properties such as diameter, girth, and total number of cycles, which cannot be differentiated by \mptgns and CAWs.
\end{proposition}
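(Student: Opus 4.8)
The plan is to exhibit explicit pairs of non-isomorphic temporal graphs on which both model classes provably collapse, reducing the \mptgn case to the static $1$-WL lower bounds we already have access to. Concretely, I would take a classical pair of $2$-regular graphs that color refinement cannot separate --- the single hexagon $C_6$ versus the disjoint union of two triangles $2C_3$ --- and turn each into a temporal graph by assigning every interaction the same timestamp $t^\ast$ (strictly before the query time $t$) together with identical node and edge features. These two temporal graphs have six nodes each, differ in girth ($6$ versus $3$), in the total number of cycles ($1$ versus $2$), and in diameter ($3$ versus infinite, since $2C_3$ is disconnected), so a single construction realizes all the properties named in the statement; for a fully connected witness of the diameter claim one can substitute any other $1$-WL-indistinguishable pair of connected regular graphs.

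For the \mptgn side I would invoke the machinery already established. Since every edge carries the same timestamp and identical features, the temporal WL refinement degenerates to ordinary $1$-WL, as noted right after the definition of the temporal WL test. On both $C_6$ and $2C_3$ color refinement stabilizes immediately: all nodes start with the same color and each node sees a multiset of exactly two neighbors of that color, so the stable coloring is a single color class of size six in each graph and the two multisets of node colors coincide. By \autoref{cor:temporal-wl-tgns} the most expressive injective \mptgns are exactly as discriminative as temporal WL, and by \autoref{prop:injectiveness-requirement} no \mptgn can exceed them; hence no \mptgn produces different multisets of node embeddings for the two graphs.

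For CAW I would exploit the temporal constraint that walks must have strictly decreasing timestamps. With all edges at the single time $t^\ast$, any admissible walk from an anchor can traverse at most one edge, since a second edge would require a timestamp strictly below $t^\ast$, which does not exist. Thus $S_u$ and $S_v$ consist only of one-edge walks into the two featureless neighbors of the anchors, the anonymization $g(\cdot\,;S_u)$ and the walk encodings produced by \textsc{Enc} depend only on this immediate, structurally identical neighborhood, and CAW returns the same embedding for the corresponding event in either graph. Consequently neither CAW nor any \mptgn distinguishes the two graphs, although they differ in girth, diameter, and cycle count.

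The step I expect to be most delicate is the CAW argument, because CAW is genuinely incomparable to \mptgns (\autoref{prop:tgns_vs_caw}) and in particular can exploit node identities through its position-counting anonymization; I must be sure that the global difference between one hexagon and two triangles cannot leak into CAW's walk statistics. The uniform-timestamp construction sidesteps this by truncating all walks to length one, which makes the two anonymized walk-sets identical by inspection, but it relies on reading CAW's definition literally (strict temporal decrease, with the query time as the first coordinate). If one instead insisted on graphs admitting genuine multi-step temporal walks, the obstacle would become matching the full multisets of anonymized walk encodings across the two graphs, which would demand a more careful timestamp assignment in the spirit of the length-$1$ obstruction illustrated in \autoref{fig:limits-tgns}(Right).
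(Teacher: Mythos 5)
Your \mptgn half is fine: with a single shared timestamp and constant features the temporal WL refinement degenerates to static 1-WL, which cannot separate the 2-regular graphs $C_6$ and $2C_3$, and \autoref{lem:b2} together with \autoref{cor:temporal-wl-tgns} then rules out every \mptgn. This is a legitimate, somewhat more modular variant of the paper's argument, which instead exhibits essentially the same pair with timestamps $t_1,t_2,t_3$ and verifies pairwise TCT isomorphism directly via \autoref{lemma:1}.

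The CAW half, however, has a genuine gap, exactly at the point you flagged as delicate. Truncating every walk to a single edge does \emph{not} make the anonymized walk sets identical, because CAW's anonymization is computed jointly with respect to both anchors: $I_{\text{CAW}}(w; S_u, S_v) = \{g(w;S_u), g(w;S_v)\}$ and $f_1$ sums an MLP over \emph{both} vectors. In $2C_3$ the endpoints of every edge share a common neighbor $c$, so $g(c;S_u) = g(c;S_v) = [0,1,\dots]$ and the second step of the corresponding walk carries $f_1 = 2\,\mathrm{MLP}([0,1,\dots])$. In $C_6$ the endpoints of an edge have no common neighbor, so the analogous second-position node $w$ has $g(w;S_u)=[0,1,\dots]$ but $g(w;S_v)=[0,0,\dots]$, giving $f_1 = \mathrm{MLP}([0,1,\dots]) + \mathrm{MLP}([0,0,\dots])$. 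For a suitable MLP these values differ, hence the multisets of encoded walks differ for every event, and any readout (mean or attention) then separates the two graphs: even length-one temporal walks expose whether the two anchors of a queried edge have a common neighbor, which is precisely the triangle-versus-hexagon distinction. The paper sidesteps this by constructing a different pair of temporal graphs (built from the gadget of \autoref{fig:limits-tgns}, right, as in the proof of \autoref{prop:tgns_vs_caw}) in which the temporal ordering both forces short walks \emph{and} makes the joint one-hop anonymization statistics around corresponding events genuinely coincide; collapsing all timestamps to a single value cannot achieve both requirements, so your construction must be replaced for the CAW half.
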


\captionsetup[figure]{font=small}
\begin{wrapfigure}[8]{r}{0.45\textwidth}
\vspace{-10pt}
    \centering
    \includegraphics[width=0.43\textwidth]{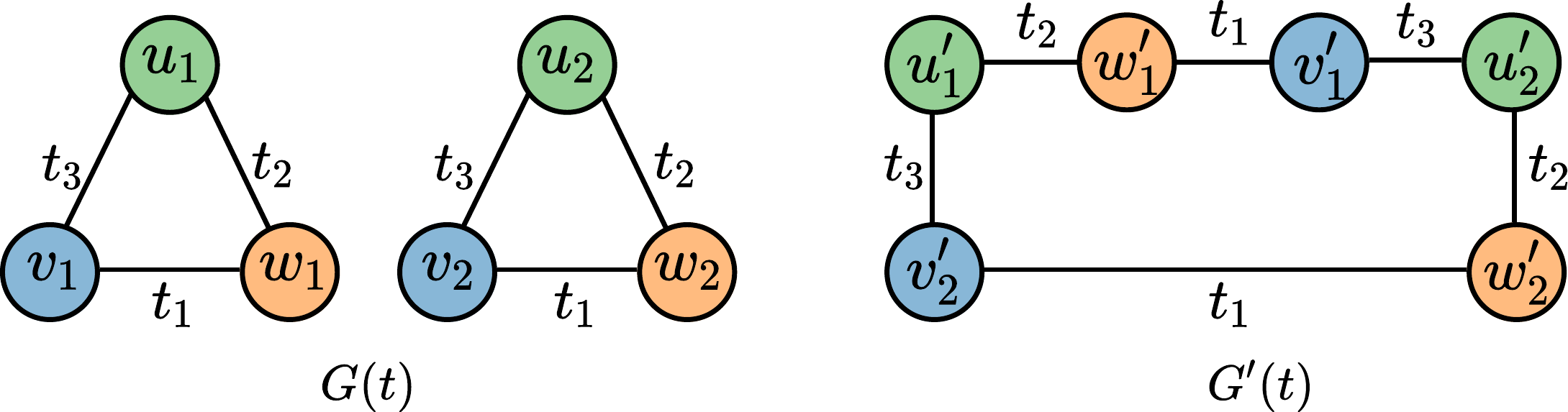}
    \caption{\small Examples of temporal graphs for which \mptgns cannot distinguish the diameter, girth, and number of cycles.}
    \label{fig:properties-main}
\end{wrapfigure}
\captionsetup[figure]{font=normal}
\autoref{fig:properties-main} provides a construction for \autoref{cor:properties}. The temporal graphs $\mathcal{G}(t)$ and $\mathcal{G}^\prime(t)$ differ in diameter ($\infty$ vs. 3), girth (3 vs. 6), and number of cycles (2 vs. 1). By inspecting the TCTs, one can observe that, for any node in $\mathcal{G}(t)$, there is a corresponding one in $\mathcal{G}^\prime(t)$ whose TCTs are isomorphic, e.g., $T_{u_1}(t) \cong T_{u_1^\prime}(t)$ for $t>t_3$. As a result, the multisets of node embeddings for these temporal graphs are identical. We provide more details and a construction - where CAW fails to decide properties - in the Appendix.

\section{\namecamelcase}
We now leverage insights from our analysis in Section 3 to build more powerful TGNs.
First, we discuss how to build injective aggregation and update functions in the temporal setting. Second, we propose an efficient scheme to compute positional features based on counts from TCTs.
In addition, we show that the proposed method, called \textit{\textbf{p}osition-encoding \textbf{in}jective \textbf{t}emporal graph net} (\initials), is more powerful than both \watgns and \mptgns in distinguishing events in temporal graphs.

\noindent \textbf{Injective temporal aggregation.}
An important design principle in TGNs is to prioritize (give higher importance to) events based on recency \citep{tgat,caw}. 
\autoref{prop:injective-temporal-aggregation} introduces an injective aggregation scheme that captures this principle using linearly exponential time decay.

\begin{proposition}[Injective function on temporal neighborhood]\label{prop:injective-temporal-aggregation}
Let $\mathcal{X}$ and $\mathcal{E}$ be countable, and $\mathcal{T}$ countable and bounded. There exists a function $f$ and scalars $\alpha$ and $\beta$ such that $\sum_i f(x_i, e_i) \alpha^{-\beta t_i}$ is unique on any multiset  $M=\{\!\!\{(x_i, e_i, t_i)\}\!\!\} \subseteq \mathcal{X} \times \mathcal{E} \times \mathcal{T}$ with $|M|<N$, where $N$ is a constant.
\end{proposition}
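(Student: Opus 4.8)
The plan is to reduce injectivity on bounded multisets to the absence of nontrivial integer-coefficient linear relations among the reals $r(x,e,t):=f(x,e)\,\alpha^{-\beta t}$, and then to construct $f,\alpha,\beta$ so that no such relation can occur. Concretely, for two multisets $M,M'$ with $|M|,|M'|<N$, the difference of their aggregated values is $\sum_{(a,b,\tau)} d_{a,b,\tau}\, f(a,b)\,\alpha^{-\beta\tau}$, where each $d_{a,b,\tau}=\mathrm{mult}_M(a,b,\tau)-\mathrm{mult}_{M'}(a,b,\tau)$ is an integer with $|d_{a,b,\tau}|<N$, and $M\neq M'$ forces some $d_{a,b,\tau}\neq 0$. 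Hence it suffices to make the countable family $\{f(a,b)\,\alpha^{-\beta\tau}\}$ linearly independent over $\mathbb{Q}$; this is the temporal analogue of the multiset-encoding lemma behind injective sum aggregators in static GNNs \citep{gin}, the new ingredient being that the time coordinate may enter only through the fixed exponential weight $\alpha^{-\beta t}$.

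I would first fix the time parameter. Writing $\alpha^{-\beta t}=e^{-\gamma t}$ with $\gamma=\beta\ln\alpha$, I would choose $\gamma>0$ so that $\{e^{-\gamma\tau}:\tau\in\mathcal{T}\}$ is $\mathbb{Q}$-linearly independent. The key observation is that for any finite collection of distinct $\tau_1,\dots,\tau_r\in\mathcal{T}$ and rationals $q_1,\dots,q_r$ not all zero, the map $\gamma\mapsto\sum_m q_m e^{-\gamma\tau_m}$ is a nonzero entire function (distinct exponentials are linearly independent as functions of $\gamma$), so its zero set is countable. Taking the union over the countably many finite subsets of $\mathcal{T}$ and rational coefficient tuples, the set of bad $\gamma$ is countable, hence of measure zero; any $\gamma$ outside it works, and I set $\alpha=e,\ \beta=\gamma$. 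Boundedness of $\mathcal{T}$ is not essential here, but it keeps all decay factors within a fixed range.

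Next I would choose $f$ to disentangle the product structure via a field tower. Let $K\subset\mathbb{R}$ be the countable field generated over $\mathbb{Q}$ by $\{\alpha^{-\beta\tau}:\tau\in\mathcal{T}\}$. Enumerating $\mathcal{X}\times\mathcal{E}=\{p_1,p_2,\dots\}$ (possible since a product of countable sets is countable), I pick real values $f(p_1),f(p_2),\dots$ inductively so that they are linearly independent over $K$: at each step the $K$-span of the finitely many previously chosen values is countable, hence measure zero, so a valid $f(p_k)\in\mathbb{R}$ exists. With these choices, suppose $\sum_{(a,b,\tau)} d_{a,b,\tau}\,f(a,b)\,\alpha^{-\beta\tau}=0$. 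Grouping by $(a,b)$ gives $\sum_{(a,b)} f(a,b)\big(\sum_\tau d_{a,b,\tau}\,\alpha^{-\beta\tau}\big)=0$, where each inner sum lies in $K$; $K$-linear independence of the $f(a,b)$ forces every inner sum to vanish, and then $\mathbb{Q}$-linear independence of $\{\alpha^{-\beta\tau}\}$ forces all $d_{a,b,\tau}=0$. This contradicts $M\neq M'$, so the aggregator is injective on multisets of size $<N$ (in fact on all finite multisets).

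The step I expect to be the main obstacle is the time coordinate: unlike the feature part, where $f$ can be chosen freely, the weights are constrained to the one-parameter family $\alpha^{-\beta t}$ evaluated at the prescribed, possibly densely accumulating timestamps in the bounded set $\mathcal{T}$, so an explicit base-$N$ digit encoding (as in the static case) is unavailable. The analyticity-of-exponential-sums argument is what rescues this, turning the forced functional form into an asset by guaranteeing that only a measure-zero set of decay rates create collisions; the field-tower argument then cleanly separates the multiplicative entanglement of the feature and time contributions.
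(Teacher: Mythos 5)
Your proof is correct, but it takes a genuinely different route from the paper's. The paper gives an explicit digit-encoding construction: it enumerates $\mathcal{X}\times\mathcal{E}$ via a pairing function $\psi$, sets $f(x,e)=10^{-k\psi(x,e)}$, $\alpha=10$, $\beta=\lceil\log_{10}N\rceil$, and $k=\beta(t_{\max}+1)$, so that each distinct triple $(x,e,t)$ occupies its own block of $\beta$ decimal digits and the multiplicity bound $|M|<N$ prevents carries between blocks; crucially, it assumes ``without loss of generality'' that $\mathcal{T}=\{1,2,\dots,t_{\max}\}$, i.e., a finite set of well-separated integer timestamps. Your argument instead reduces injectivity to the absence of nontrivial rational linear relations among the reals $f(a,b)\,\alpha^{-\beta\tau}$, secures $\mathbb{Q}$-linear independence of $\{e^{-\gamma\tau}\}_{\tau\in\mathcal{T}}$ for all but countably many $\gamma$ via the identity theorem for exponential sums, and then picks the $f$-values to be linearly independent over the countable field they generate. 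What your approach buys: it handles a genuinely countable (possibly densely accumulating) bounded $\mathcal{T}$, which the paper's ``WLOG'' quietly does not, it needs no bound $N$ on multiset size, and it cleanly isolates where the constrained functional form $\alpha^{-\beta t}$ could fail. What it gives up: the choices of $\gamma$ and of the $f$-values are purely existential (complements of countable sets), with no explicit or bounded-precision construction, whereas the paper's base-$10$ recipe yields concrete parameter values that connect more directly to the MLP-approximation story used to instantiate PINT. Both proofs are valid for the statement as the paper actually uses it; yours is strictly more general on the time domain.
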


Leveraging \autoref{prop:injective-temporal-aggregation} and the approximation capabilities of multi-layer perceptrons (MLPs), we propose \emph{\name} (\initials). In particular, \initials computes the embedding of node $v$ at time $t$ and layer $\ell$ using the following message passing steps:  
\begin{align}
    \tilde{h}_v^{(\ell)}(t) & = \sum_{(u, e, t^\prime) \in \mathcal{N}(v, t)} \textsc{mlp}_{\text{agg}}^{(\ell)}\left(h_u^{(\ell-1)}(t) \mathbin\Vert e \right)  \alpha^{-\beta(t-t^\prime)} \label{eq:sinet_aggregation} \\
    h_v^{(\ell)}(t) & = \textsc{mlp}_{\text{upd}}^{(\ell)}\left(h_v^{(\ell-1)}(t) \mathbin\Vert  \tilde{h}_v^{(\ell)}(t)\right)  \label{eq:sinet_update}
\end{align}
where $\|$ denotes concatenation, $h_v^{(0)}=s_v(t)$, $\alpha$ and $\beta$ are scalar (hyper-)parameters, and $\textsc{MLP}_{\text{agg}}^{(\ell)}$ and $\textsc{MLP}_{\text{upd}}^{(\ell)}$ denote the nonlinear transformations of the aggregation and update steps, respectively.

We note that to guarantee that the MLPs in \initials implement injective aggregation/update, we must further assume that the edge and node features (states) take values in a finite support. In addition, we highlight that there may exist many other ways to achieve injective temporal MP --- we have presented a solution that captures the `recency' inductive bias of real-world temporal networks.

\noindent \textbf{Relative positional features.} To boost the power of PINT, we propose augmenting memory states with \emph{relative} positional features. These features count how many temporal walks of a given length exist between two nodes, or equivalently, how many times nodes appear at different levels of TCTs. 

Formally, let $P$ be the $d \times d$ matrix obtained by padding a $(d-1)$-dimensional identity matrix with zeros on its top row and its rightmost column. Also, let ${r}_{j \rightarrow u}^{(t)} \in \mathbb{N}^{d}$ denote the positional feature vector of node $j$ relative to $u$'s TCT at time $t$. For each event $(u, v, t)$, with $u$ and $v$ not participating in other events at $t$, we recursively update the positional feature vectors as
\begin{minipage}{.5\linewidth}
\begin{align}
 &\mathcal{V}_i^{(0)}  = \{i\}  \quad \forall i  \label{eq:positional1} \\
 &r^{(0)}_{i \rightarrow j} = 
    \begin{dcases}
    [1, 0, \ldots, 0]^\top  & \text{ if } i = j\\
    [0,  0, \ldots, 0]^\top  & \text{ if } i \neq j
    \end{dcases} \label{eq:positional2}
\end{align}
\end{minipage}%
\begin{minipage}{.5\linewidth}
\begin{align}
 &\mathcal{V}_{u}^{(t^+)} = \mathcal{V}_{v}^{(t^+)} = \mathcal{V}_{v}^{(t)} \cup \mathcal{V}_{u}^{(t)} \label{eq:positional3}\\
&{r}_{i \rightarrow v}^{(t^+)} = P~{r}_{i \rightarrow u}^{(t)} + {r}_{i \rightarrow v}^{(t)} \quad \forall i \in \mathcal{V}_u^{(t)} \label{eq:positional4}\\
&{r}_{j \rightarrow u}^{(t^+)} = P~{r}_{j \rightarrow v}^{(t)} + {r}_{j \rightarrow u}^{(t)} \quad \forall j \in \mathcal{V}_v^{(t)} \label{eq:positional5}
\end{align}
\end{minipage}

where we use $t^{+}$ to denote values ``right after'' $t$. The set $\mathcal{V}_i$ keeps track of the nodes for which we need to update positional features when $i$ participates in an interaction. For simplicity, we have assumed that there are no other events involving $u$ or $v$ at time $t$. Appendix \ref{ap:proof-positional-features} provides equations for the general case where nodes can participate in multiple events at the same timestamp.

\captionsetup[figure]{font=small}
\begin{wrapfigure}[13]{r}{0.35\textwidth}
    \vspace{-12pt}
    \centering
    \includegraphics[width=0.33\textwidth]{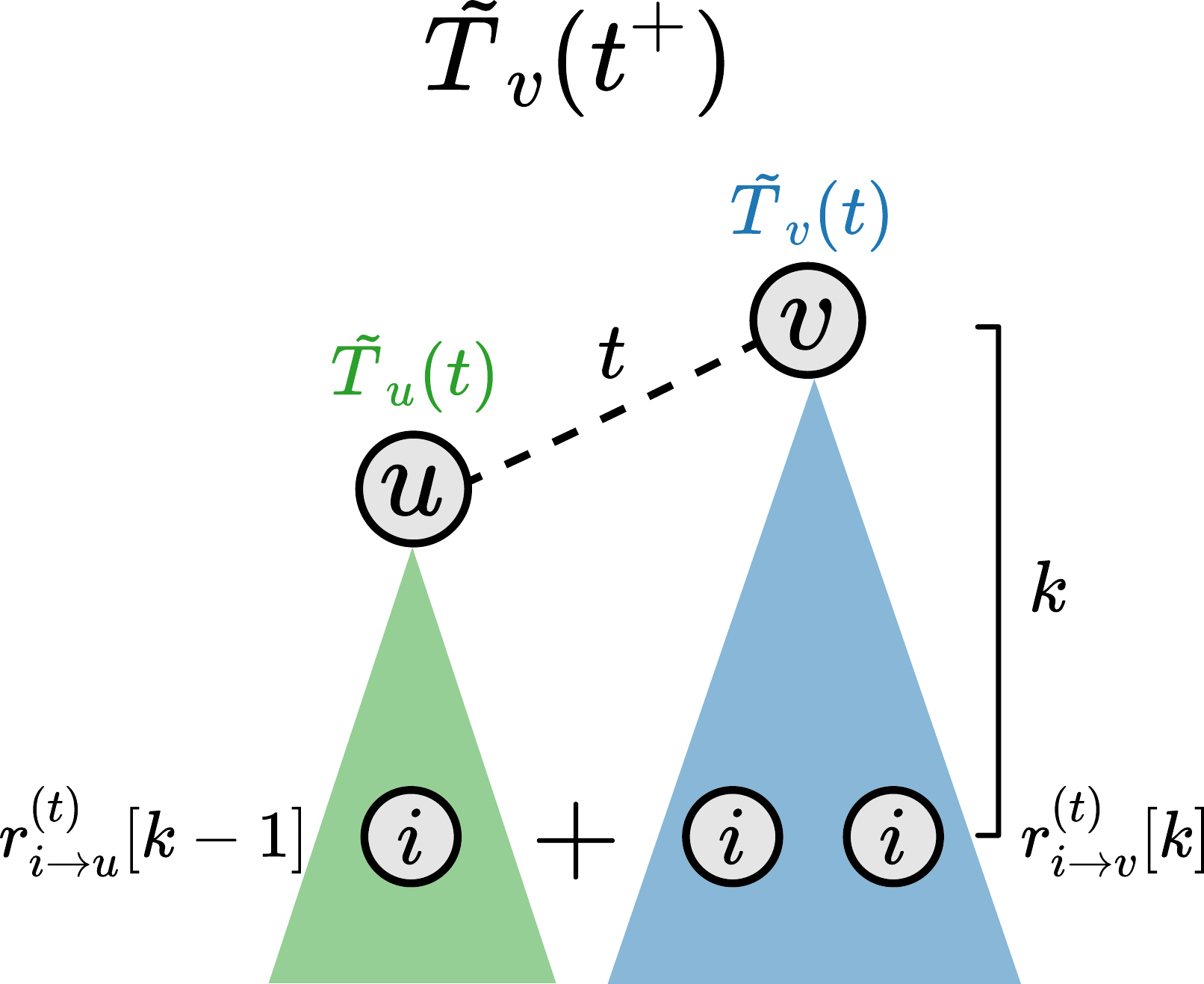}
    \caption{The effect of $(u, v, t)$ on the monotone TCT of $v$. Also, note how the positional features of a node $i$, relative to $v$, can be incrementally updated.}
    \label{fig:positional-features-main}
\end{wrapfigure}
\captionsetup[figure]{font=normal}
The value ${r}_{i \rightarrow v}^{(t)}[k]$ (the $k$-th component of ${r}_{i \rightarrow v}^{(t)}$) corresponds to how many different ways we can get from $v$ to $i$ in $k$ steps through temporal walks. Additionally, we provide in \autoref{lemma:positional-features} an interpretation of relative positional features in terms of the so-called {\tcTCT}s (\autoref{def:monotone_tct}). 
In this regard, \autoref{fig:positional-features-main} shows how the TCT of $v$ evolves due to an event $(u, v, t)$ and provides an intuition about the updates in Eqs. 10-11. The procedure amounts to appending the \tcTCT of $u$ to the first level of the monotone TCT of $v$. 

\begin{definition}\label{def:monotone_tct}
The \tcTCT of a node $u$ at time $t$, denoted by $\tilde{T}_u(t)$, is the maximal subtree of the TCT of $u$ s.t. for any path $p=(u, t_1, u_1, t_2, u_2, \dots)$ from the root $u$ to leaf nodes of $\tilde{T}_u(t)$ time monotonically decreases, i.e., we have that
$t_{1} > t_{2} > \dots $.
\end{definition}

\begin{lemma}\label{lemma:positional-features}
For any pair of nodes $i, u$ of a temporal graph $\mathcal{G}(t)$, the $k$-th component of the positional feature vector ${r}_{i \rightarrow u}^{(t)}$ stores the number of times $i$  appears at the $k$-th layer of the \tcTCT of $u$.
\end{lemma}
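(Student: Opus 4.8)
The plan is to prove the statement by induction on the chronologically ordered sequence of timestamped events, carrying the joint invariant that (i) $r_{i\to u}^{(t)}[k]$ equals the number of occurrences of $i$ at the $k$-th layer of $\tilde{T}_u(t)$ for every $k$, and (ii) $\mathcal{V}_u^{(t)}$ equals the set of nodes appearing anywhere in $\tilde{T}_u(t)$. I index layers so that the root sits at layer $1$, which matches the base case: before any event $\mathcal{G}$ is empty, so $\tilde{T}_u$ is the single root $u$, and then $i$ occurs once at layer $1$ iff $i=u$ and never otherwise, which is exactly $r_{i\to u}^{(0)}$ in \eqref{eq:positional2}, while $\mathcal{V}_u^{(0)}=\{u\}$ matches \eqref{eq:positional1}. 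I also record that the padding matrix $P$ acts as a one-step down-shift, $(Pr)[k]=r[k-1]$ for $2\le k\le d$, $(Pr)[1]=0$, with the entry $r[d]$ discarded (its image would be layer $d+1$, which we do not track); algebraically, multiplying by $P$ is precisely the operation of pushing an entire \tcTCT one level deeper, which is the content of \eqref{eq:positional4}--\eqref{eq:positional5}.

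For the inductive step I would process events in increasing time order and assume (i)--(ii) hold just before $\gamma=(u,v,t)$, under the stated simplifying assumption that $u,v$ take part in no other event at $t$. The crux is the structural identity that $\tilde{T}_v(t^+)$ is obtained from $\tilde{T}_v(t)$ by attaching, as a new child subtree of the root $v$ through the edge at time $t$, a fresh copy of $\tilde{T}_u(t)$ (and symmetrically for $u$), while every monotone TCT rooted at $w\notin\{u,v\}$ is unchanged. This holds because $t$ is the current maximal timestamp: in any time-decreasing walk the new edge can appear only as the first step, so it can extend only walks whose root is incident to it, namely $u$ or $v$; and once the first step $v\!\to\!u$ at time $t$ is taken, the admissible continuations are exactly the monotone walks out of $u$ using times strictly below $t$, i.e. the paths of $\tilde{T}_u(t)$, since all prior events predate $t$. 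Granting this, a node $i$ occurs at layer $k$ of $\tilde{T}_v(t^+)$ either at layer $k$ of the old $\tilde{T}_v(t)$ or at layer $k-1$ of the appended copy of $\tilde{T}_u(t)$ (shifted one level down by the attachment), so that $r_{i\to v}^{(t^+)}[k]=r_{i\to v}^{(t)}[k]+(P\,r_{i\to u}^{(t)})[k]$, which is \eqref{eq:positional4}; the symmetric count gives \eqref{eq:positional5}. For $i\notin\mathcal{V}_u^{(t)}$ the appended copy contains no $i$ by (ii), so $r_{i\to v}$ is correctly left unchanged, and the $k=1$ and $k=d$ boundaries are handled automatically by $(Pr)[1]=0$ (the root layer is untouched) and by truncation at layer $d$. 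Invariant (ii) is preserved since the node set of $\tilde{T}_v(t^+)$ is the union of those of $\tilde{T}_v(t)$ and $\tilde{T}_u(t)$, matching \eqref{eq:positional3}, and roots $w\notin\{u,v\}$ are untouched on both sides; this closes the induction.

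The main obstacle I anticipate is establishing the structural recursion rigorously --- justifying that the effect of $\gamma$ on the monotone TCTs is exactly ``append $\tilde{T}_u(t)$ under $v$ (and vice versa), and nothing else.'' Everything hinges on $t$ being the strict maximum timestamp, which forces the new edge to be usable only as an initial step of a monotone walk and thereby confines all changes to the roots $u$ and $v$; the remaining bookkeeping (the one-level shift realized by $P$, the $k=1$ and $k=d$ edge cases, and the role of $\mathcal{V}$ in restricting which vectors are updated) is then routine. The general case in which $u$ or $v$ participate in several events at the same timestamp follows the same scheme and is deferred to the appendix.
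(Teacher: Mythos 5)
Your proposal is correct and follows essentially the same route as the paper's proof: induction over the chronologically ordered events, with the key structural observation that, because $t$ is the current maximal timestamp, the new edge can only serve as the first step of a monotone walk, so the event merely appends $\tilde{T}_u(t)$ one level below the root of $\tilde{T}_v(t)$ (and vice versa) while leaving all other monotone TCTs untouched, after which $P$ implements the one-level shift. Your version is if anything slightly more explicit than the paper's, carrying the auxiliary invariant on $\mathcal{V}_u^{(t)}$ and spelling out the $k=1$ and $k=d$ boundary behavior of $P$.
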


\begin{figure*}[t]
    \centering
    \includegraphics[width=\textwidth]{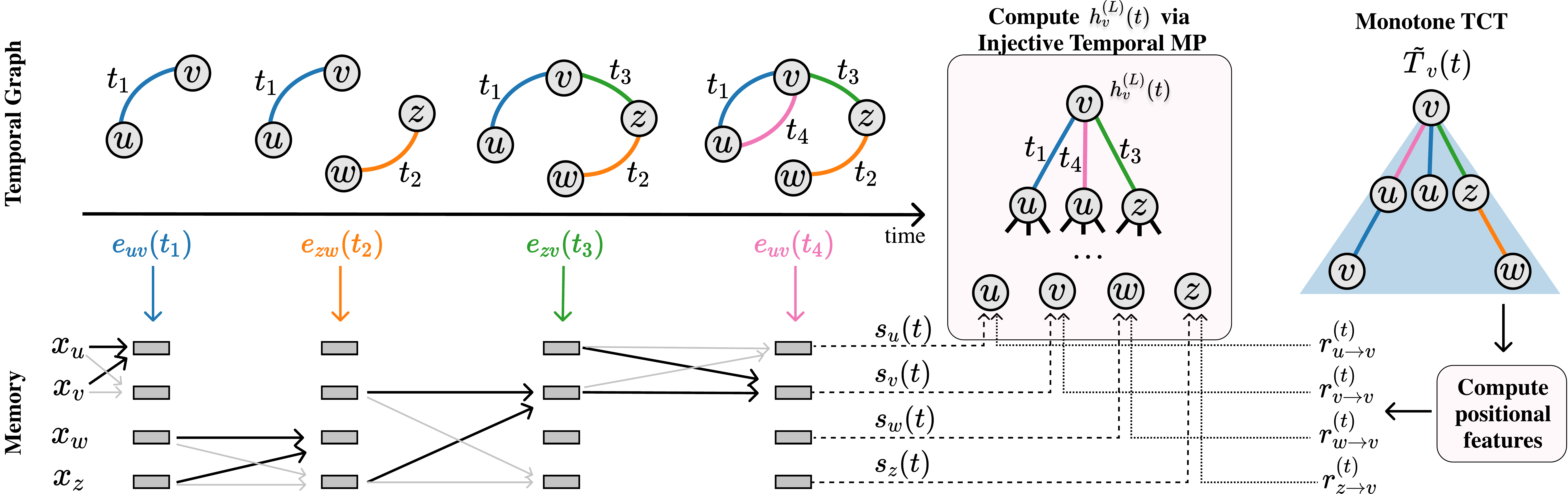}
    \caption{{\bf \initials.} Following the \mptgn protocol, PINT updates memory states as events unroll. Meanwhile, we use Eqs. (\ref{eq:positional1}-\ref{eq:positional5}) to update positional features. To extract the embedding for node $v$, we build its TCT, annotate nodes with memory + positional features, and run (injective) MP.}
    \label{fig:pint}
\vspace{-0.35cm}
\end{figure*}

\paragraph{Edge and node embeddings.} To obtain the embedding $h_\gamma$ for an event $\gamma=(u, v, t)$, an $L$-layer \initials computes embeddings for node $u$ and $v$ using $L$ steps of temporal message passing. However, when computing the embedding $h^{L}_u(t)$ of $u$, we concatenate node states $s_j(t)$ with the positional features $r_{j \rightarrow u}^{(t)}$ and $r_{j \rightarrow v}^{(t)}$ for all node $j$ in the $L$-hop temporal neighborhood of $u$. We apply the same procedure to obtain $h^{L}_{v}(t)$, and then combine $h^{L}_{v}(t)$ and $h^{L}_{u}(t)$ using a readout function.
%

Similarly, to compute representations for node-level prediction, for each node $j$ in the $L$-hop neighborhood of $u$, we concatenate node states $s_j(t)$ with features $r_{j \rightarrow u}^{(t)}$. Then, we use our injective MP to combine the information stored in $u$ and its neighboring nodes. \autoref{fig:pint} illustrates the process. 

Notably, \autoref{prop:expressiveness_sinet} states that \initials is strictly more powerful than existing TGNs. In fact, the relative positional features mimic the discriminative power of \watgns, while eliminate their temporal monotonicity constraints. Additionally, \initials can implement injective temporal message passing (either over states or states + positional features), akin to maximally-expressive \mptgns.  
\begin{proposition}
[Expressiveness of \initials: link prediction] \initials (with relative positional features) is strictly more powerful than \mptgns and CAWs in distinguishing events in temporal graphs. \label{prop:expressiveness_sinet}
\end{proposition}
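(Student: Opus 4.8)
The plan is to prove the two halves of a strict-domination claim separately: first that \initials is at least as powerful as each class (a simulation/domination argument for \mptgns and, separately, for CAWs), and then that each of these inclusions is strict, using as witnesses exactly the two events already constructed in \autoref{prop:tgns_vs_caw}.

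First I would dominate \mptgns. The aggregation in Eq.~\eqref{eq:sinet_aggregation} and the update in Eq.~\eqref{eq:sinet_update} can be made injective on temporal multisets: by \autoref{prop:injective-temporal-aggregation} the time-decayed sum $\sum_i f(x_i,e_i)\alpha^{-\beta t_i}$ is injective for a suitable inner map, and composing it with the aggregation MLP realizes an injective aggregator, while the update MLP realizes an injective update (both under the finite-support assumption on states/edges). Concatenating the extra positional coordinates onto each state preserves injectivity, so \initials instantiates a maximally expressive \mptgn. By \autoref{prop:injectiveness-requirement}, such an \mptgn distinguishes any pair of nodes/events whose TCTs are non-isomorphic, hence any pair separated by \emph{some} \mptgn. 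This gives $\initials \succeq \mptgns$.

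Next I would dominate CAW, which is the crux. The key step is to identify \initials's positional features with CAW's anonymization. By \autoref{lemma:positional-features} together with \autoref{def:monotone_tct}, the entry $r_{j \rightarrow u}^{(t)}[k]$ counts the monotone (decreasing-time) temporal walks of length $k$ from $u$ to $j$, which is precisely the quantity $g(j; S_u)[k]$ recorded by CAW. Hence the pair $(r_{j \rightarrow u}^{(t)}, r_{j \rightarrow v}^{(t)})$ that \initials concatenates onto the state of every node $j$ in the $L$-hop neighborhood of an event $(u,v,t)$ is in bijection with CAW's node anonymization $I_{\text{CAW}}(j; S_u, S_v)$. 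Since \initials then runs injective MP over the TCT whose nodes carry these anonymizations and whose edges carry timestamps (equivalently time gaps), its root embedding is an injective function of a structure that determines the entire multiset of CAW walk encodings: each CAW walk is a monotone root-to-leaf path, its $f_1$ node features are recovered from the positional counts and its $f_2$ time gaps from the edge annotations. Therefore, whenever CAW separates two events (i.e., their walk-anonymization multisets differ), \initials's injective readout separates them too, giving $\initials \succeq \mathrm{CAW}$.

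Strictness then follows immediately from \autoref{prop:tgns_vs_caw}, which supplies (i) a pair of synchronous events distinguished by injective \mptgns but not CAW, and (ii) a pair distinguished by CAW but not \mptgns. \initials separates (i) by the first argument and (ii) by the second; since CAW fails on (i) and \mptgns fail on (ii), \initials is strictly more powerful than each class. The hard part will be the CAW direction: I must confirm that the positional counts coincide with $g(\cdot)$ including the indexing/length convention and the starting-time offset $t_0=t_1=t$; that the per-step time-gap information $f_2(t_{i-1}-t_i)$ is faithfully retained by \initials's time-decayed injective aggregation; and that injective tree aggregation preserves the multiset of monotone root-to-leaf paths finely enough to subsume any CAW pooling (mean or self-attention). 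Extending the single-event positional recursion to the general multi-event case (\autoref{ap:proof-positional-features}) and matching walk length to MP depth $L$ require some bookkeeping but reduce to the same argument.
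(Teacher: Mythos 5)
Your overall architecture matches the paper's: show \initials dominates \mptgns (via injectivity, Propositions \ref{prop:injectiveness-requirement} and \ref{prop:injective-temporal-aggregation}), show it dominates CAW by arguing the positional-feature-augmented TCT determines CAW's walk encodings, and get strictness from the two witnesses of \autoref{prop:tgns_vs_caw}. The MP-TGN half and the strictness step are fine.

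The gap is in the CAW half, at exactly the point you flagged as needing confirmation: the identification $r_{j \rightarrow u}^{(t)}[k] = g(j; S_u)[k]$ is false in general. By \autoref{lemma:positional-features}, $r_{j \rightarrow u}^{(t)}[k]$ counts the occurrences of $j$ at level $k$ of the \tcTCT of $u$, i.e., the number of monotone walk \emph{prefixes} of length $k$ from $u$ ending at $j$. But CAW's $g(j; S_u)[k] = |\{W \in S_u : (j, t_k) = W_k\}|$ counts \emph{maximal} walks in $S_u$ whose $k$-th entry is $j$, so each occurrence of $j$ at level $k$ is weighted by the number of leaves in the subtree hanging below it. If $j$ appears once at level $1$ with five leaf descendants, $r_{j\rightarrow u}^{(t)}[1]=1$ while $g(j;S_u)[1]=5$. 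Consequently your claim that the $f_1$ inputs "are recovered from the positional counts" does not go through as stated; the counts alone do not determine $g$. The paper's proof closes this gap by working contrapositively: assuming the augmented TCTs are pairwise isomorphic (which is what ``\initials fails'' means for injective layers), it passes to the monotone TCTs via \autoref{lemma:isomorphism-time-contrained-TCT}, and then shows $g(\sharp i; S_u)[\ell] = \sum_{\psi \in \Psi} \myleaf(\psi; \tilde{T}_u(t))$ where $\Psi$ is the set of tree nodes at level $\ell$ mapping to $\sharp i$; a mismatch in $g$ would force two isomorphic subtrees to have different numbers of leaves, a contradiction. So the conclusion you want is true, but it requires this leaf-counting argument over the isomorphism class of the annotated tree, not the pointwise equality of $r$ with $g$ that your argument rests on.
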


\captionsetup[figure]{font=small}
\begin{wrapfigure}[9]{r}{0.22\textwidth}
\vspace{-13pt}
    \centering
    \includegraphics[width=0.15\textwidth]{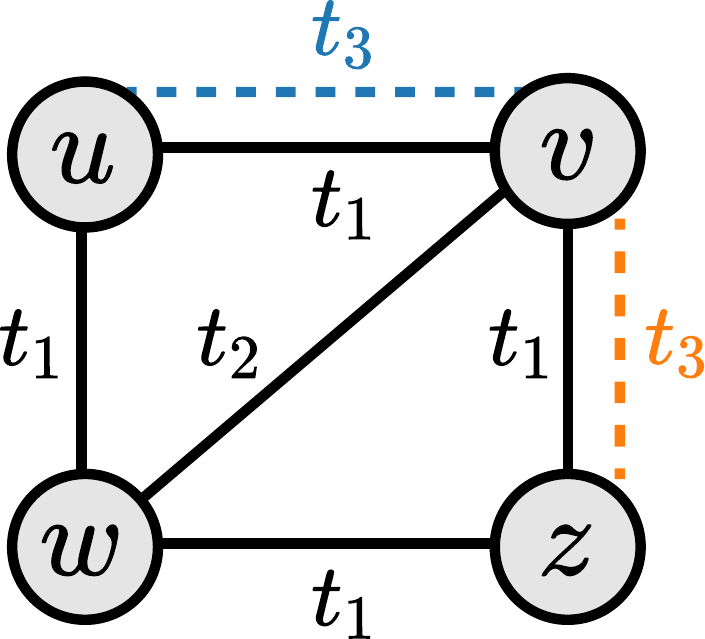}
    \caption{\small \initials cannot distinguish the events $(u, v, t_3)$ and $(v, z, t_3)$.}
    \label{fig:limitation_ourmethod}
\end{wrapfigure}
\captionsetup[figure]{font=normal}
\paragraph{When does \initials fail?} Naturally, whenever the TCTs (annotated with positional features) for the endpoints of two edges $(u, v, t)$ and $(u^\prime, v^\prime, t)$ are pairwise isomorphic, \initials returns the same edge embedding and is not able to differentiate the events. \autoref{fig:limitation_ourmethod} shows an example in which this happens --- we assume that all node/edge features are identical. Due to graph symmetries, $u$ and $z$ occur the same number of times in each level of $v$'s \tcTCT. Also, the sets of temporal walks starting at $u$ and $z$ are identical if we swap the labels of these nodes. Importantly, CAWs and \mptgns also fail here, as stated in \autoref{prop:expressiveness_sinet}. 

\begin{proposition}[Limitations of \initials]\label{prop:failure}
There are synchronous events of temporal graphs that \initials cannot distinguish (as seen in \autoref{fig:limitation_ourmethod}).
\end{proposition}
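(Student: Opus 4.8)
The plan is to prove the statement by explicit construction, exhibiting the temporal graph of \autoref{fig:limitation_ourmethod} together with the two synchronous events $(u,v,t_3)$ and $(v,z,t_3)$, and arguing that \initials necessarily assigns them the same edge embedding. Since \initials forms an edge embedding for an event $(a,b,t)$ by running (injective) message passing on the TCTs of the endpoints $a$ and $b$ --- with every node $j$ annotated by its state $s_j(t)$ together with the relative positional features $r^{(t)}_{j\to a}$ and $r^{(t)}_{j\to b}$ --- and then applying a readout over the two endpoint embeddings, it suffices to show that the two events give rise to pairwise isomorphic annotated TCTs. Because $v$ is the common endpoint, this reduces to two claims: (i) the annotated TCT of $v$ is the same under both events, and (ii) the annotated TCT of $u$ (for the first event) is isomorphic to that of $z$ (for the second event), since for undirected interactions the readout treats the two endpoints symmetrically.

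The central device is a temporal-graph automorphism $\sigma$ of $\mathcal{G}(t)$ that swaps $u \leftrightarrow z$, fixes $v$, and maps the timestamped, featured edge set onto itself, preserving both timestamps and node/edge features. First I would verify directly from the construction that such a $\sigma$ exists, using the symmetry of the graph and the fact that all node/edge features are taken identical. Given $\sigma$, I would establish the required invariances. For the states, the memory recursion (Eqs.~\ref{eq:mem-msg}--\ref{eq:memory}) depends only on node states, edge features, and relative timestamps of events, all preserved by $\sigma$, so $s_{\sigma(j)}(t)=s_j(t)$ for every $j$; in particular $s_u(t)=s_z(t)$. For the positional features, \autoref{lemma:positional-features} tells us $r^{(t)}_{j\to a}[k]$ counts the appearances of $j$ at level $k$ of the \tcTCT of $a$; since $\sigma$ maps the \tcTCT of $u$ isomorphically onto that of $z$ and fixes that of $v$, we get $r^{(t)}_{j\to u}=r^{(t)}_{\sigma(j)\to z}$ and $r^{(t)}_{j\to v}=r^{(t)}_{\sigma(j)\to v}$. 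Hence $\sigma$ is an isomorphism of the annotated TCT of $u$ for $(u,v,t_3)$ onto that of $z$ for $(v,z,t_3)$, and restricts to an isomorphism of the two annotated TCTs of $v$. As \initials' message passing is a deterministic, isomorphism-invariant function of the annotated TCT, we obtain $h^{(L)}_u(t_3)=h^{(L)}_z(t_3)$ and agreement of $h^{(L)}_v(t_3)$ across the two events, so the readout yields $h_{(u,v,t_3)}=h_{(v,z,t_3)}$.

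The main obstacle is the positional-feature bookkeeping. These features were introduced precisely to overcome the monotonicity limitation of \watgns, so the delicate part is ensuring the symmetry is tight enough that even the \emph{monotone}-TCT counts --- not merely the full TCTs exploited by \mptgns nor the anonymized walks used by CAW --- coincide under $\sigma$. Concretely, I must check that swapping $u$ and $z$ preserves, level by level, the number of time-decreasing temporal walks from $v$ reaching each node, which amounts to validating that $\sigma$ respects the paths underlying \autoref{def:monotone_tct}. Establishing this simultaneously shows that \mptgns and CAW also fail on this instance (as asserted alongside \autoref{prop:expressiveness_sinet}), since their annotations are coarser than \initials' and are therefore likewise preserved by $\sigma$.
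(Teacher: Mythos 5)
Your proposal is correct and follows essentially the same route as the paper, which justifies this proposition only by the informal symmetry argument accompanying \autoref{fig:limitation_ourmethod}: the annotated TCTs of the endpoints are pairwise isomorphic because $u$ and $z$ play symmetric roles, so the positional-feature counts and states coincide and the (isomorphism-invariant) message passing plus symmetric readout must agree. Your explicit automorphism $\sigma$ swapping $u\leftrightarrow z$ and the level-by-level check on the \tcTCT counts simply make rigorous what the paper leaves as an appeal to ``graph symmetries.''
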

 
\paragraph{Implementation and computational cost.} The online updates for {\initials}'s positional features have complexity $\mathcal{O}\left(d\, |\mathcal{V}_u^{(t)}| + d\, |\mathcal{V}_v^{(t)}|\right)$. 
Similarly to CAW's sampling procedure, our online update is a sequential process better done in CPUs. However, while CAW may require significant CPU-GPU memory exchange --- proportional to both the number of walks and their depth ---, we only communicate the positional features. We can also speed-up the training of \initials by pre-computing the positional features for each batch, avoiding redundant computations at each epoch. 
Apart from positional features, the computational cost of \initials is similar to that of TGN-Att. 
Following standard \mptgn procedure, we control the branching factor of TCTs using neighborhood sampling.

Note that the positional features monotonically increase with time, which is undesirable for practical generalization purposes. 
Since our theoretical results hold for any fixed $t$, this issue can be solved by dividing the positional features by a time-dependent normalization factor. Nonetheless, we have found that employing $L_1$-normalization leads to good empirical results for all evaluated datasets.

\section{Experiments}
We now assess the performance of \initials on several popular and large-scale benchmarks for TGNs.
We run experiments using PyTorch \citep{pytorch} and code is available at \url{www.github.com/AaltoPML/PINT}.

\paragraph{Tasks and datasets.} We evaluate \initials on dynamic link prediction, closely following the evaluation setup employed by \citet{tgn} and \citet{tgat}. 
We use six popular benchmark datasets: Reddit, Wikipedia, Twitter, UCI, Enron, and LastFM \citep{jodie,tgn,tgat,caw}. 
Notably, UCI, Enron, and LastFM are non-attributed networks, i.e., they do not contain feature vectors associated with the events. 
Node features are absent in all datasets, thus following previous works we set them to vectors of zeros \citep{tgn,tgat}.
Since Twitter is not publicly available, we follow the guidelines by \citet{tgn} to create our version. 
We provide more details regarding datasets in the supplementary material.

\begin{table}[th]
\caption{\small \textbf{Average Precision} (AP) results for link prediction. We denote the best-performing model (highest mean AP) in \textcolor{lb}{\bf blue}. In 5 out of 6 datasets, \initials achieves the highest AP in the transductive setting. For the inductive case, \initials outperforms previous \mptgns and competes with CAW. We also show the performance of PINT with and without relative positional features. For all datasets, adopting positional features leads to significant performance gains.}
\centering
\resizebox{\textwidth}{!}{
\begin{tabular}{l l c c c c c c|}
\toprule
& \textbf{Model} & \textbf{Reddit} & \textbf{Wikipedia} & \textbf{Twitter} & \textbf{UCI} & \textbf{Enron} & \textbf{LastFM} \\ 
\cmidrule{2-8}
 \parbox[t]{2mm}{\multirow{9}{*}{\rotatebox[origin=c]{90}{\bf Transductive}}} & GAT & $97.33   \pm 0.2 $ & $94.73   \pm 0.2 $ & - & -  & - &  - \\
 & GraphSAGE & $97.65  \pm 0.2 $ & $93.56 \pm 0.3$ & - & -  & - &  -\\
 \cmidrule{2-8}
& Jodie & $97.11 \pm 0.3$ & $94.62 \pm 0.5$ & $98.23 \pm 0.1$ & {$86.73 \pm 1.0$} & $77.31 \pm 4.2$ & $69.32 \pm 1.0$\\
& DyRep & $97.98 \pm 0.1$ & $94.59 \pm 0.2$ & $98.48 \pm 0.1$ & $54.60 \pm 3.1$ & $77.68 \pm 1.6$ & $69.24 \pm 1.4$\\
& TGAT & $98.12 \pm 0.2$ & $95.34 \pm 0.1$ & ${98.70 \pm 0.1}$ & $77.51 \pm 0.7$ & $68.02 \pm 0.1$ & $54.77 \pm 0.4$\\
& TGN-Att & ${98.70 \pm 0.1}$ & ${98.46 \pm 0.1}$ & $98.00 \pm 0.1$ & $80.40 \pm 1.4$ & {$79.91 \pm 1.3$} & {$80.69 \pm 0.2$}\\
& CAW & $98.39 \pm 0.1$ &  ${98.63 \pm 0.1}$  & $98.72 \pm 0.1$
&
$92.16 \pm 0.1$
&
$\textcolor{lb}{\bf 92.09 \pm 0.7}$
&
$81.29 \pm 0.1$ \\
 \cmidrule{2-8}
& {\bf\initials} {\scriptsize (w/o pos. feat.)} & {$98.62 \pm .04$} & {$ 98.43 \pm .04$} & {$98.53 \pm 0.1$} & ${92.68 \pm 0.5}$ & ${83.06 \pm 2.1}$ & ${81.35 \pm 1.6}$ \\
& {\bf\initials}  & \textcolor{lb}{$\bf 99.03 \pm .01$} 
& $\textcolor{lb}{\bf 98.78 \pm 0.1}$ & $\textcolor{lb}{\bf 99.35 \pm .01}$ & $\textcolor{lb}{\bf 96.01 \pm 0.1}$ & $88.71 \pm 1.3$ & $\textcolor{lb}{\bf 88.06 \pm 0.7}$\\
\midrule
\midrule
\parbox[t]{2mm}{\multirow{9}{*}{\rotatebox[origin=c]{90}{\bf Inductive}}}  &   GAT & $95.37    \pm 0.3 $ & $91.27  \pm 0.4 $ & - & -  & - &  - \\
   &  GraphSAGE & $96.27   \pm 0.2 $ & $91.09 \pm 0.3$ & - & -  & - &  -\\
 \cmidrule{2-8}
&    Jodie & $94.36 \pm 1.1$ & $93.11 \pm 0.4$ & $96.06 \pm 0.1$ & {$75.26 \pm 1.7$} & {$76.48 \pm 3.5$} & $80.32 \pm 1.4$\\
 &   DyRep & $95.68 \pm 0.2$ & $92.05 \pm 0.3$ & ${96.33 \pm 0.2}$ & $50.96 \pm 1.9$ & $66.97 \pm 3.8$ & $82.03 \pm 0.6$\\
  &  TGAT & $96.62  \pm 0.3$ & $93.99 \pm 0.3$ & ${96.33 \pm 0.1}$ & $70.54 \pm 0.5$ & $63.70 \pm 0.2$ & $56.76 \pm 0.9$\\
   & TGN-Att & ${97.55 \pm 0.1}$  & ${97.81 \pm 0.1}$ & $95.76 \pm 0.1$ & $74.70 \pm 0.9$ & ${78.96 \pm 0.5}$ & {$84.66 \pm 0.1$}\\
& CAW & $97.81 \pm 0.1$
& \textcolor{lb}{$\bf 98.52 \pm 0.1$} &
\textcolor{lb}{$\bf 98.54 \pm 0.4$} & $92.56 \pm 0.1$ &  \textcolor{lb}{$\bf 91.74 \pm 1.7$} &  $85.67 \pm 0.5$ \\
 \cmidrule{2-8}
  &  {\bf\initials} {\scriptsize (w/o pos. feat.)} & {$97.22 \pm 0.2$} & ${97.81 \pm 0.1}$ & $96.10 \pm 0.1$ & ${90.25 \pm 0.3}$ & $75.99 \pm 2.3$ & ${88.44 \pm 1.1}$\\
   & {\bf \initials}  & \textcolor{lb}{$\bf 98.25 \pm .04$}  & $98.38 \pm .04$  &  $98.20 \pm .03$ &  \textcolor{lb}{$\bf93.97 \pm 0.1$} & $81.05 \pm 2.4$ & \textcolor{lb}{$\bf91.76 \pm 0.7$} \\
    \bottomrule
\end{tabular}}
\label{tab:transductive}
\vspace{-0.35cm}
\end{table}

\paragraph{Baselines.} We compare \initials against five prominent TGNs: Jodie \citep{jodie}, DyRep \citep{DyRep}, TGAT \citep{tgat}, TGN-Att \citep{tgn}, and CAW \citep{caw}. For completeness, we also report results using two static GNNs: GAT \citep{gat} and GraphSage \citep{graphsage}. 
Since we adopt the same setup as TGN-Att, we use their table numbers for all baselines but CAW on Wikipedia and Reddit.
The remaining results were obtained using the implementations and guidelines available from the official repositories.
As an ablation study, we also include a version of \initials without relative positional features in the comparison.
We provide detailed information about hyperparameters and the training of each model in the supplementary material.

\paragraph{Experimental setup.} We follow \citet{tgat} and use a 70\%-15\%-15\% (train-val-test) temporal split for all datasets.
We adopt average precision (AP) as the performance metric.
We also analyze separately predictions involving only nodes seen during training (transductive), and those involving novel nodes (inductive).
We report mean and standard deviation of the AP over ten  runs.
For further details, see \autoref{ap:implementation}.
We provide additional results in the supplementary material.

\textbf{Results.} \autoref{tab:transductive} shows that \initials is the best-performing method on five out of six datasets for the transductive setting. 
Notably, the performance gap between \initials and TGN-Att amounts to over 15\% AP on UCI.
The gap is also relatively high compared to CAW on LastFM, Enron, and UCI; with CAW being the best model only on Enron.
We also observe that many models achieve relatively high AP on the attributed networks (Reddit, Wikipedia, and Twitter).
This aligns well with findings from \citep{caw}, where TGN-Att was shown to have competitive performance against CAW on Wikipedia and Reddit.
The performance of GAT and TGAT (static GNNs) on Reddit and Wikipedia reinforces the hypothesis that the edge features add significantly to the discriminative power. On the other hand, \initials and CAW, which leverage relative identities, show superior performance relative to other methods when only time and degree information is available, i.e., on unattributed networks (UCI, Enron, and LastFM). \autoref{tab:transductive} also shows the effect of using relative positional features. %
While including these features boosts PINT's performance systematically, our ablation study shows that \initials w/o positional features still outperforms other \mptgns on unattributed networks.
In the inductive case, we observe a similar behavior: \initials is consistently the best \mptgn, and is better than CAW on 3/6 datasets. 
Overall, PINT (w/ positional features) also yields the lowest standard deviations. This suggests that positional encodings might be a useful inductive bias for TGNs.

\captionsetup[figure]{font=small}
\begin{wrapfigure}[13]{r}{0.55\textwidth}
\vspace{-14pt}
    \centering
    \includegraphics[width=0.53\textwidth]{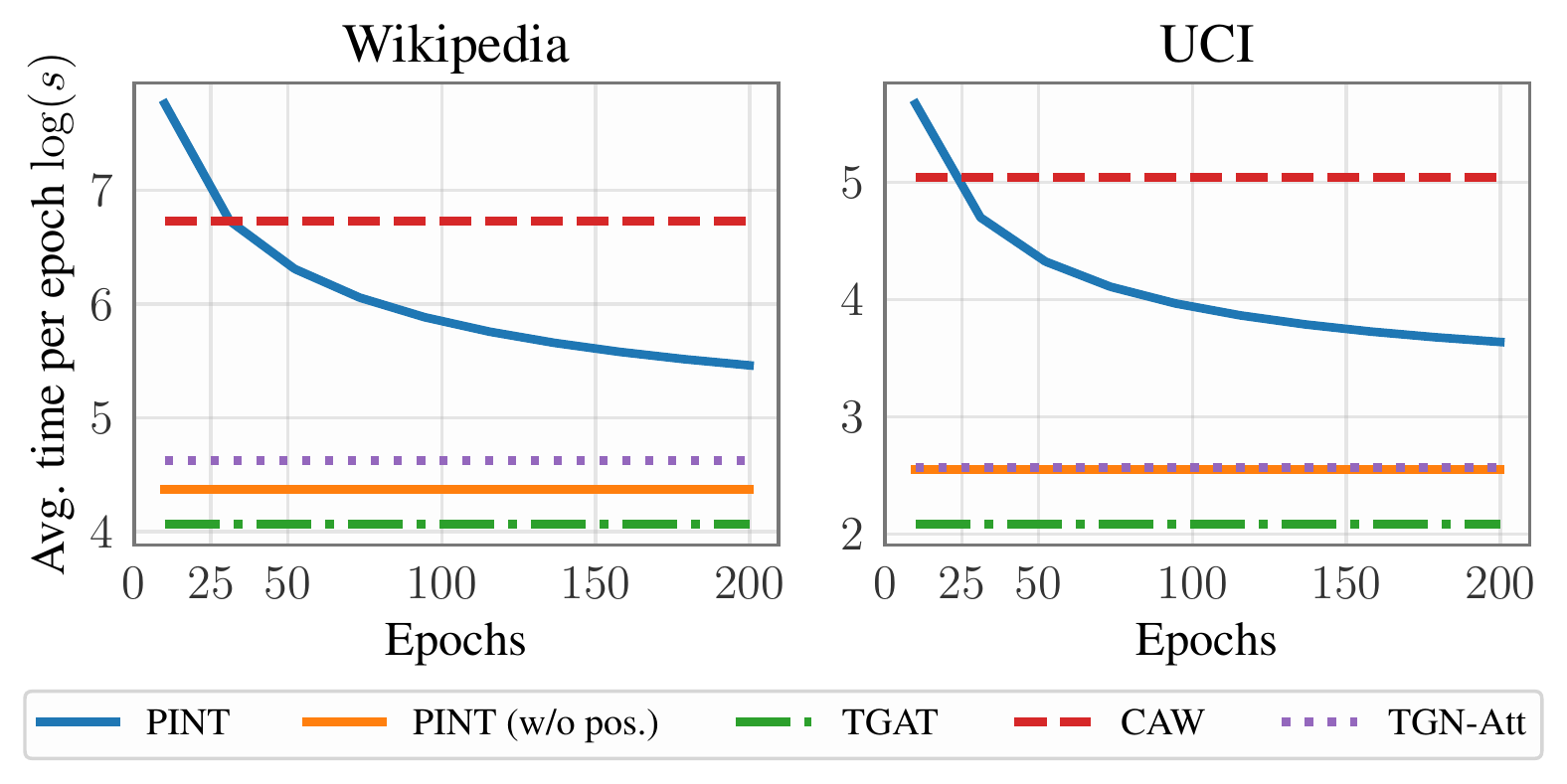}
    \caption{\small Time comparison: \initials versus TGNs  (in log-scale). The cost of pre-computing positional features is quickly diluted as the number of epochs increases.}
    \label{fig:time-comparison}
\end{wrapfigure}
\captionsetup[figure]{font=normal}
\textbf{Time comparison.} \autoref{fig:time-comparison} compares the training times of \initials against other TGNs. For fairness, we use the same architecture (number of layers \& neighbors) for all \mptgns: i.e., the best-performing \initials. For CAW, we use the one that yielded  results in \autoref{tab:transductive}. As expected, TGAT is the fastest model. Note that the average time/epoch of \initials gets amortized since positional features are pre-computed. Without these features, {\initials}'s runtime closely matches TGN-Att. When trained for over $25$ epochs, \initials runs considerably faster than CAW. We provide additional details and results in the supplementary material.

\paragraph{Incorporating relative positional features into MP-TGNs.} We can use our relative positional features (RPF) to boost MP-TGNs. 
\autoref{tab:tgn_pe} shows the performance of TGN-Att with relative positional features on UCI, Enron, and LastFM. Notably, TGN-Att receives a significant boost from our RPF. However, PINT still beats TGN-Att+RPF on 5 out of 6 cases. The values for TGN-Att+RPF reflect outcomes from $5$ repetitions.
We have used the same model selection procedure as TGN-Att in Table 1, and incorporated $d=4$-dimensional positional features

\begin{table}[t]
\centering
\caption{Average precision results for TGN-Att + relative positional features.}
\resizebox{0.85\textwidth}{!}{
\begin{tabular}{@{\extracolsep{4pt}}lcccccc}
\toprule 
& \multicolumn{3}{c}{\textbf{Transductive}} & \multicolumn{3}{c}{\textbf{Inductive}} \\ \cmidrule{2-4}  \cmidrule{5-7} 
& {UCI} & {Enron} &   {LastFM} & {UCI} & {Enron} &   {LastFM}  \\ \midrule
TGN-Att & $80.40 \pm 1.4$ & $79.91 \pm 1.3$ & $80.69 \pm 0.2$  & $74.70 \pm 0.9$ & $78.96 \pm 0.5$ & $84.66 \pm 0.1$\\
TGN-Att + RPF & $95.64 \pm 0.1$ & $85.04 \pm 2.5$ & $89.41 \pm 0.9$  & $92.82 \pm 0.4$ & $76.27 \pm 3.4 $ & $91.63 \pm 0.3$\\
PINT & $96.01 \pm 0.1$ & $88.71 \pm 1.3$ & $88.06 \pm 0.7$ & $93.97 \pm 0.1$ & $81.05 \pm 2.4$ & $91.76 \pm 0.7$\\
\bottomrule
\end{tabular}
}
\label{tab:tgn_pe}
\end{table}

\captionsetup[figure]{font=small}
\begin{wrapfigure}[11]{r}{0.5\textwidth}
\vspace{-14pt}
    \centering
    \includegraphics[width=0.5\textwidth]{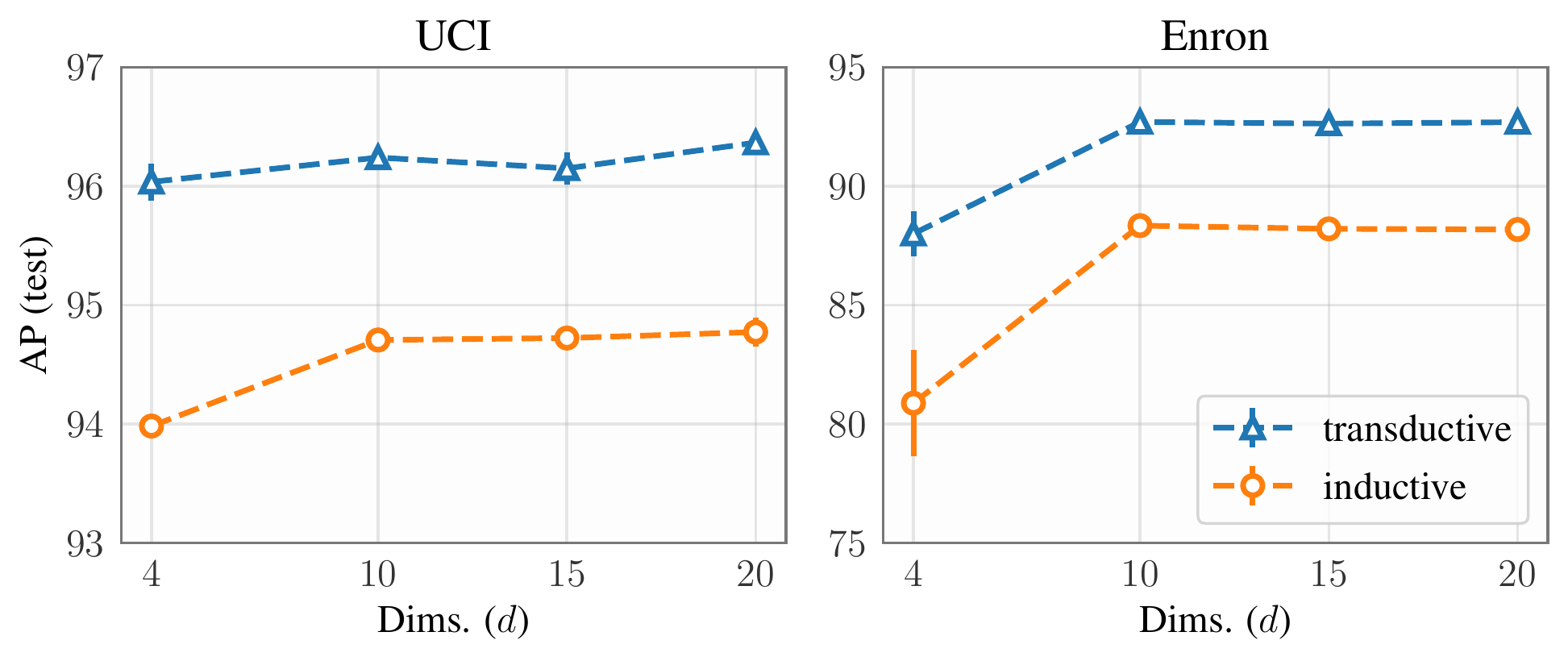}
    \caption{\initials: AP (mean and std) as a function of the dimensionality of the positional features.}
    \label{fig:dims-pos-features}
\end{wrapfigure}
\captionsetup[figure]{font=normal}
\paragraph{Dimensionality of relative positional features.} We assess the performance of \initials as a function of the dimension $d$ of the relative positional features. \autoref{fig:dims-pos-features} shows the performance of \initials for $d \in \{4, 10, 15, 20\}$ on UCI and Enron. We report mean and standard deviation of AP on test set obtained from five independent runs. In all experiments, we re-use the optimal hyper-parameters found with $d=4$. Increasing the dimensionality of the positional features leads to performance gains on both datasets. Notably, we obtain a significant boost for Enron with $d=10$: $92.69 \pm 0.09$ AP in the transductive setting and $88.34 \pm 0.29$ in the inductive case. Thus, \initials becomes the best-performing model on Enron (transductive). On UCI, for $d=20$, we obtain $96.36 \pm 0.07$ and $94.77 \pm 0.12$ (inductive).

\section{Conclusion}

We laid a rigorous theoretical foundation for TGNs, 
including the role of memory modules, relationship between classes of TGNs, and failure cases for \mptgns. Together, our theoretical results shed light on the representational capabilities of TGNs, and connections with their static counterparts. We also introduced a novel TGN method, provably more expressive than the existing TGNs. 

Key practical takeaways from this work: (a) temporal models should be designed to have injective update rules and to exploit both neighborhood and walk aggregation, and (b) deep architectures can likely be made more compute-friendly as the role of memory gets diminished with depth, provably.    

\begin{ack}
This work was supported by the Academy of Finland (Flagship programme: Finnish Center for Artificial Intelligence FCAI and 341763), ELISE Network of Excellence Centres (EU Horizon:2020 grant agreement 951847) and UKRI Turing AI World-Leading Researcher Fellowship, EP/W002973/1. We also acknowledge the computational resources provided by the Aalto Science-IT Project from Computer Science IT.
AS and DM also would like to thank Jorge Perez, Jou-Hui Ho, and Hojin Kang for valuable discussions about TGNs, and the latter's input on a preliminary version of this work. 
\end{ack}

\section*{Societal and broader impact}
Temporal graph networks have shown remarkable performance in relevant domains such as social networks, e-commerce, and drug discovery. In this paper, we establish fundamental results that delineate the representational power of TGNs. We expect that our findings will help declutter the literature and serve as a seed for future developments. Moreover, our analysis culminates with \initials, a method that is provably more powerful than the prior art and shows superior predictive performance on several benchmarks. We believe that \initials (and its underlying concepts) will help engineers and researchers build better recommendation engines, improving the quality of systems that permeate our lives. Also, we do not foresee any negative societal impact stemming directly from this work.




{
\small
\bibliographystyle{plainnat} 
\bibliography{references}
}

\newpage

\clearpage


\makeatletter
  \setcounter{table}{0}
  \renewcommand{\thetable}{S\arabic{table}}%
  \setcounter{figure}{0}
  \renewcommand{\thefigure}{S\arabic{figure}}%
  \setcounter{equation}{0}
  \renewcommand\theequation{S\arabic{equation}}
  \renewcommand{\bibnumfmt}[1]{[#1]}

  \newcommand{\suptitle}{Provably expressive temporal graph networks \\ (Supplementary material)}
  \renewcommand{\@title}{\suptitle}
  \renewcommand{\@author}{}

  \par
  \begingroup
    \renewcommand{\thefootnote}{\fnsymbol{footnote}}
    \renewcommand{\@makefnmark}{\hbox to \z@{$^{\@thefnmark}$\hss}}
    \renewcommand{\@makefntext}[1]{%
      \parindent 1em\noindent
      \hbox to 1.8em{\hss $\m@th ^{\@thefnmark}$}#1
    }
    \thispagestyle{empty}
    \@maketitle
  \endgroup
  \let\maketitle\relax
  \let\thanks\relax
\makeatother

\appendix

\setcounter{lemma}{0}
\setcounter{definition}{0}
\setcounter{proposition}{0}
\renewcommand{\thelemma}{\Alph{section}\arabic{lemma}}
\renewcommand{\thedefinition}{\Alph{section}\arabic{definition}}
\renewcommand{\theproposition}{\Alph{section}\arabic{proposition}}

\section{Further details on temporal graph networks}\label{ap:egn}

In this section we present more details about the models TGAT, TGN-Att, and CAW.

\subsection{Temporal graph attention (TGAT)} 
Temporal graph attention networks \citep{tgat} combine time encoders \citep{Kazemi2019} and self-attention \citep{Vaswani2017}. In particular, the time encoder $\phi$ is given by
\begin{align}\label{eq:time-encoder}
        \phi(t - t^\prime)=[\cos(\omega_1 (t - t^\prime) + b_1), \dots, \cos(\omega_d (t - t^\prime) + b_d)],
\end{align}
where $\omega_i$'s and $b_i$'s are learned scalar parameters. The time embeddings are concatenated to the edge features before being fed into a typical self-attention layer, where the query $q$ is a function of a reference node $v$, and both values $V$ and keys $K$ depend on $v$'s temporal neighbors. Formally, TGAT first computes a matrix $C^{(\ell)}_v(t)$ whose $u$-th row is $c^{(\ell)}_{vu}(t)  = [h_u^{(\ell-1)}(t) \mathbin\Vert \phi(t-t_{uv}) \mathbin\Vert e_{uv}]$ for all $(u, e_{uv}, t_{uv}) \in \mathcal{N}(v, t)$.
Then, the output $\tilde{h}_v^{(\ell)}(t)$ of the $\textsc{Agg}^{(\ell)}$ function is given by
\begin{align}\label{eq:tgat-aggregation}
& q  = [h_v^{(\ell-1)}(t) \mathbin\Vert \phi(0)] W^{(\ell)}_q \quad K  =  C^{(\ell)}_v(t) W^{(\ell)}_K  \quad V  =  C^{(\ell)}_v(t) W^{(\ell)}_V \\
& \tilde{h}_v^{(\ell)}(t)  =  \softmax\left(q K^\top \right)V
\end{align}
where $W^{(\ell)}_q, W^{(\ell)}_K,$ and $W^{(\ell)}_V$ are model parameters.
Regarding the \textsc{Update} function, TGAT applies a multilayer perceptron, i.e., $h_v^{(\ell)}(t) = \textsc{MLP}^{(\ell)}(h_v^{(\ell-1)}(t) \mathbin\Vert \tilde{h}_v^{(\ell)}(t))$. 

\subsection{Temporal graph networks with attention (TGN-Att)}\label{ap:tgn-att}

We now discuss details regarding the \mptgn framework omitted from the main paper for simplicity.

For the sake of generality, \citet{tgn} present a formulation for \mptgns that can handle node-level events, e.g., node feature updates. These events lead to $i$) updating node memory states, and $ii$) using the time-evolving node features as additional inputs for the message-passing functions. Nonetheless, to the best of our knowledge, all relevant CTDG benchmarks comprise only edge events. Therefore, for ease of presentation, we omit node events and temporal node features from our treatment. In \autoref{ap:deletion}, we discuss how to handle node-level events.


Note that \mptgns update memory states only after an event occurs, otherwise it would incur information leakage. Unless we use the updated states to predict another event later on in the batch, this means that there might be no signal to propagate through memory modules. To get around this problem, \citet{tgn} propose updating the memory with messages coming from previous batches, and then predicting the interactions.

To speed up computations, \mptgns employ a form of batch learning where events in a same batch are aggregated. In our analysis, we assume that two events belong to the same batch only if they occur at the same timestamp. Importantly, memory aggregators allow removing ambiguity in the way the memory of a node participating in multiple events (at the same timestamp) is updated --- without memory aggregator, two events involving a given node $i$ at the same time could lead to different ways of updating the state of $i$.

Suppose the event $\gamma=(i, u, t)$ occurs. \mptgns proceed by computing a memory-message function $\textsc{MemMsg}_e$ for each endpoint of $\gamma$, i.e., 
\begin{align*}
m_{i,u}(t) &= \textsc{MemMsg}_e(s_i(t), s_u(t), t-t_{i}, e_{iu}(t))\\
m_{u,i}(t) &= \textsc{MemMsg}_e(s_u(t), s_i(t), t-t_{u}, e_{iu}(t))
\end{align*}

Following the original formulation, we assume an identity memory-message function --- simply the concatenation of the inputs, i.e., 
$
\textsc{MemMsg}_e(s_i(t), s_u(t), t-t_{i}, e_{iu}(t))=[s_i(t), s_u(t), t-t_{i}, e_{iu}(t)]
$.

Now, suppose two events $(i, u, t)$ and $(i, v, t)$ happen. \mptgns aggregate the memory-messages from these events using a function \textsc{MemAgg} to obtain a single memory message for $i$:
\begin{align*}
m_{i}(t) = \textsc{MemAgg}(m_{i,u}(t), m_{i,v}(t)) 
\end{align*}

\citet{tgn} propose non-learnable memory aggregators, such as the mean aggregator (average all memory messages for a given node), that we denote as $\textsc{MeanAgg}$ and adopt throughout our analysis. As an example, under events $(i, u, t)$ and $(i, v, t)$, the aggregated message for $i$ is $m_i(t) = 0.5([s_i(t), s_u(t), t-t_{i}, e_{iu}(t)] + [s_i(t), s_v(t), t-t_{i}, e_{iv}(t)])$.

The memory update of our query node $i$ is given by
\begin{align*}
s_i(t^+) = \textsc{MemUpdate}(s_i(t), m_{i}(t)).   
\end{align*}

Finally, we note that TGAT does not have a memory module. TGN-Att consists of the model resulting from augmenting TGAT with a GRU-based memory.

\subsection{Causal anonymous walks (CAW)}
We now provide details regarding how CAW obtains edge embeddings for a query event $\gamma=(u,v,t)$. 

A temporal walk is represented as $W=((w_1, t_1), (w_2, t_2), \dots, (w_L, t_L))$, with $t_1 > t_2 > \dots > t_L$ and $(w_{i-1}, w_i, t_i) \in \mathcal{G}(t)$ for all $i>1$. We denote by $S_u(t)$ the set of maximal temporal walks starting at $u$ of size at most $L$ obtained from the temporal graph at time $t$. Following the original paper, we drop the time dependence henceforth.

A given walk $W$ gets anonymized through replacing each element $w_i$ belonging to $W$ by a 2-element set of vectors $I_\text{CAW}(w_i; S_u, S_v)$ accounting for how many times $w_i$ appears at each position of walks in $S_u$ and $S_v$. These vectors are denoted by $g(w_i, S_u)$ and $g(w_i, S_v)$. The walk is encoded using a RNN:
\begin{align*}
    \textsc{Enc}(W; S_u, S_v) = \mathrm{RNN}([f_1(I_{\text{CAW}}(w_i; S_u, S_v)) \| f_2(t_i-t_{i-1})]_{i=1}^{L}),
\end{align*}
where $t_1=t_0=t$ and $f_1$ is
\begin{align*}
    f_1(I_\text{CAW}(w_i; S_u, S_v)) = \mathrm{MLP}(g(w_i, S_u)) + \mathrm{MLP}(g(w_i, S_v)).
\end{align*}
We note that the MLPs share parameters. The function $f_2$ is given by
\begin{align*}
    f_2(t) = [\cos(\omega_i t), \sin(\omega_1 t), \dots, \cos(\omega_d t), \sin(\omega_d t)]
\end{align*}
where $\omega_i$'s are learned parameters.

To compute the embedding $h_\gamma$ for $(u, v, t)$, CAW considers two readout functions: mean and self-attention. Finally, the final link prediction is obtained from a 2-layer MLP over $h_\gamma$.

\newpage
\section{Proofs} \label{ap:proofs}


\subsection{Further definitions and Lemmata}

\begin{definition}[Monotone walk.] An $N$-length monotone walk in a temporal graph $\mathcal{G}(t)$ is a sequence $(w_1, t_1, w_2, t_2, \ldots, w_{N+1})$ such that $t_i > t_{i+1}$ and $(w_i, w_{i+1}, t_i) \in \mathcal{G}(t)$ for all $i$.\label{def:temporal-walk}
\end{definition}

\begin{definition}[Temporal diameter.] We say the temporal diameter of a graph $\mathcal{G}(t)$ is $\Delta$ if the longest monotone walk in $\mathcal{G}(t)$ has length (i.e, number of edges) exactly $\Delta$.\label{def:diam}
\end{definition}

\begin{lemma}\label{lemma:isomorphism-time-contrained-TCT}
If the TCTs of two nodes are isomorphic, then their {\tcTCT}s (\autoref{def:monotone_tct}) are also isomorphic, i.e., $T_u(t) \cong T_v(t) \Rightarrow \tilde{T}_u(t) \cong \tilde{T}_v(t)$ for two nodes $u$ and $v$ of a dynamic graph.
\end{lemma}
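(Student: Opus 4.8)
The plan is to observe that membership in the \tcTCT is a property determined entirely by the structure that any TCT isomorphism already preserves --- namely the rooted-tree edges together with their timestamp annotations --- and then to lift the given isomorphism $f: V(T_u(t)) \to V(T_v(t))$ to an isomorphism of the monotone subtrees simply by restriction.

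First I would give a nodewise characterization of the \tcTCT. By \autoref{def:monotone_tct}, $\tilde{T}_u(t)$ is the maximal subtree of $T_u(t)$ all of whose root-to-leaf paths carry strictly decreasing edge timestamps. Since every prefix of a strictly decreasing sequence is itself strictly decreasing, the collection of nodes $x$ whose unique root-to-$x$ path has strictly decreasing timestamps is closed under taking ancestors, and hence forms a subtree rooted at $u$; this is precisely $\tilde{T}_u(t)$. Thus $x \in V(\tilde{T}_u(t))$ if and only if the timestamp sequence along the path from the root to $x$ is strictly decreasing, and the analogous statement holds for $\tilde{T}_v(t)$.

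Next I would show that $f$ respects root-to-node paths and their timestamps. Because $f$ preserves levels ($k_x = k_{f(x)}$ in \autoref{def:isomorphism}), it sends the unique level-$0$ node, the root $u$, to the unique level-$0$ node of $T_v(t)$, namely the root $v$. Because $f$ preserves the edge relation together with its timestamp annotation, i.e. $(a,b,t^\prime) \in E(T_u(t)) \Longleftrightarrow (f(a),f(b),t^\prime) \in E(T_v(t))$, it maps any root path $u = x_0, x_1, \dots, x_k = x$ with timestamps $\tau_1, \dots, \tau_k$ onto the root path $v = f(x_0), \dots, f(x_k) = f(x)$ carrying the identical timestamp sequence. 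Consequently the root path of $x$ is strictly decreasing exactly when that of $f(x)$ is, and by the characterization above $x \in V(\tilde{T}_u(t)) \Longleftrightarrow f(x) \in V(\tilde{T}_v(t))$.

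Finally I would conclude that the restriction $f|_{V(\tilde{T}_u(t))}$ is a bijection onto $V(\tilde{T}_v(t))$; being a restriction of a TCT isomorphism to an induced (ancestor-closed) subtree, it automatically preserves edges, edge timestamps, edge features, node states, and levels, so it meets all conditions of \autoref{def:isomorphism} and yields $\tilde{T}_u(t) \cong \tilde{T}_v(t)$. The only genuine subtlety --- and the main point to get right --- is the reduction to a purely path-and-timestamp characterization of the monotone subtree, that is, verifying that the maximality clause in \autoref{def:monotone_tct} singles out exactly the ancestor-closed set of monotone-path nodes; once that is pinned down, everything else is a routine restriction argument exploiting that $f$ preserves timestamps edge by edge.
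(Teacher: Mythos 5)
Your proof is correct and rests on the same key idea as the paper's: a TCT isomorphism preserves timestamped root paths, hence preserves monotonicity of paths, and maximality of the \tcTCT{} then forces the monotone subtrees to correspond. The paper phrases this as a proof by contradiction (a monotone path present in one \tcTCT{} but missing from the other would map to a monotone path that maximality forces into the other), whereas you argue directly via the ancestor-closed characterization and restriction of $f$; your version is, if anything, slightly cleaner, but the approach is essentially identical.
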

\begin{proof}
Since $T_u(t) \cong T_v(t)$, we have that 
\begin{align}
& p = (u_0,t_1,u_1,t_2,u_2,\dots) \text{ from } T_u(t) \Longleftrightarrow p^\prime = (f(u_0), t_1, f(u_1), t_2, f(u_2), \dots) \text{ from } T_v(t), \nonumber \\
& \text{with } s_{u_i} = s_{f(u_i)} \text{ and } e_{u_i u_{i+1}}(t_{i+1}) =e_{f(u_i) f(u_{i+1})}(t_{i+1}) \text{ and }  k_{u_i}=k_{f(u_i)} \quad \forall i \nonumber
\end{align}
where $f: V(T_u(t)) \rightarrow V(T_v(t))$ is a bijection.

Assume that $\tilde{T}_u(t) \not\cong \tilde{T}_v(t)$. Then, either there exists a path $p_s=(u_0^\prime, t_1^\prime, u^\prime_1, t_2^\prime, \ldots)$ in $\tilde{T}_u(t)$, such that $t_{k+1}^\prime < t_k^\prime$ for all $k$ (i.e., a monotone walk), with no corresponding one in $\tilde{T}_v(t)$ or vice-versa. Without loss of generality, let us consider the former case.

We can construct the path $p^\prime_s$ in $T_v(t)$ by applying $f$ in all elements of $p_s$, i.e.,  $p_s^\prime = (f(u^\prime_0), t_1^\prime, f(u^\prime_1), t_2^\prime, \ldots)$. Note that $p^\prime_s$ is a monotone walk in $T_v(t)$. Since $\tilde{T}_{v}(t)$ is the maximal monotone subtree of $T_v(t)$, it must contain $p_s^\prime$, leading to contradiction.
\end{proof}

\begin{lemma} \label{lem:b2}
Let $\mathcal{G}(t)$ and $\mathcal{G}^\prime(t)$ be any two non-isomorphic temporal graphs. If an \mptgn obtains different multisets of node embeddings for $\mathcal{G}(t)$ and $\mathcal{G}^\prime(t)$. Then, the temporal WL test decides $\mathcal{G}(t)$ and $\mathcal{G}^\prime(t)$ are not isomorphic.
\end{lemma}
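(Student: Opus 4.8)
The plan is to prove the contrapositive: if the temporal WL test is \emph{inconclusive} on $\mathcal{G}(t)$ and $\mathcal{G}^\prime(t)$ --- i.e.\ at every refinement step $\ell$ the two graphs share the same multiset of colors $\multiset{c^\ell(u)}$ --- then any \mptgn produces identical multisets of node embeddings and thus cannot separate the graphs. The engine of the argument is the familiar fact that the WL coloring is always at least as fine as the partition induced by the \mptgn embeddings. Concretely, I would prove by induction on the layer index $\ell$ the claim that the map $c^\ell(u) \mapsto h_u^{(\ell)}(t)$ is well defined, i.e.\ $c^\ell(u) = c^\ell(v) \Rightarrow h_u^{(\ell)}(t) = h_v^{(\ell)}(t)$ for all nodes $u,v$ ranging over both graphs.

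For the inductive step I would exploit the injectivity of \textsc{Hash}. Assuming the claim at layer $\ell$, suppose $c^{\ell+1}(u) = c^{\ell+1}(v)$. Injectivity of \textsc{Hash} forces both $c^\ell(u) = c^\ell(v)$ and the equality of the two neighbourhood multisets $\multiset{(c^\ell(w), e_{wu}(t'), t') : (w,u,t') \in \mathcal{G}(t)}$ and its counterpart for $v$. By the induction hypothesis applied entrywise, equal colors imply equal layer-$\ell$ embeddings, and because the query time $t$ is fixed the timestamp $t'$ carries exactly the same information as the time gap $t - t'$ appearing in \autoref{eq:aggregation}. Hence the aggregation multisets $\multiset{(h_w^{(\ell)}(t), t - t', e) : (w,e,t') \in \mathcal{N}(u,t)}$ and the analogous one for $v$ coincide, so $\textsc{Agg}^{(\ell+1)}$ returns $\tilde h_u^{(\ell+1)}(t) = \tilde h_v^{(\ell+1)}(t)$; combined with $h_u^{(\ell)}(t) = h_v^{(\ell)}(t)$, the deterministic $\textsc{Update}^{(\ell+1)}$ yields $h_u^{(\ell+1)}(t) = h_v^{(\ell+1)}(t)$, closing the induction. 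With the claim in hand, evaluating the \mptgn at its final layer $L$ and pushing the per-node equality through the matching color multisets gives $\multiset{h_u^{(L)}(t) : u \in \mathcal{G}(t)} = \multiset{h_u^{(L)}(t) : u \in \mathcal{G}^\prime(t)}$, contradicting the hypothesis that the embedding multisets differ.

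The delicate point --- and what I expect to be the main obstacle --- is the base case $h_v^{(0)}(t) = s_v(t)$. For memoryless \mptgns it is immediate, since $s_v(t) = x_v = c^0(v)$ and the WL initialization already colors by $x_v$, so $c^0(u) = c^0(v) \Rightarrow h_u^{(0)}(t) = h_v^{(0)}(t)$. With a recurrent memory module the state $s_v(t)$ encodes structural history accumulated through \autoref{eq:mem-msg}--\autoref{eq:memory}, hence it is \emph{not} a function of $x_v$ alone and the base case needs extra work. The clean way to discharge it is to note that the memory update is itself an aggregation over events annotated with neighbor states, edge features, and time gaps --- precisely the information the temporal WL refinement tracks --- so an analogous induction shows the memory states are constant on WL color classes once the refinement stabilizes; alternatively, one may invoke \autoref{prop:memory_expressivity} to trade memory for additional message-passing depth and thereby reduce to the memoryless base case. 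Either route reinstates the base case, after which the inductive machinery above applies verbatim.
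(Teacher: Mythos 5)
Your proposal is correct and follows essentially the same route as the paper: the contrapositive, an induction showing $c^{\ell}(u)=c^{\ell}(v)\Rightarrow h^{(\ell)}_u(t)=h^{(\ell)}_v(t)$, and the translation between WL neighborhood multisets and $\mathcal{N}(v,t)$. You also correctly identified the memory issue in the base case, and your second option --- invoking \autoref{prop:memory_expressivity} to reduce to the memoryless setting --- is exactly how the paper discharges it.
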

\begin{proof}
Recall \autoref{prop:memory_expressivity} shows that if an \mptgn with memory is able to distinguish two nodes, then there is a memoryless \mptgn with $\Delta$ (temporal diameter) additional layers that does the same. 
Thus, it suffices to show that if the multisets of colors from temporal WL for $\mathcal{G}(t)$ and $\mathcal{G}^\prime$(t) after $\ell$ iterations are identical, then the multisets of embeddings from the memoryless \mptgn are also identical, i.e., if $\multiset{c^{\ell}(u)}_{u \in V(\mathcal{G}(t))}=\multiset{c^{\ell}(u^\prime)}_{u^\prime \in V(\mathcal{G}^\prime(t))}$, then  $\multiset{h^{(\ell)}_u(t)}_{u \in V(\mathcal{G}(t))}=\multiset{h^{(\ell)}_{u^\prime}(t)}_{u^\prime \in V(\mathcal{G}^\prime(t))}$. To do so, we repurpose the proof of Lemma 2 in \citep{gin}.

More broadly, we show that for any two nodes of a temporal graph $\mathcal{G}(t)$, if the temporal WL returns $c^\ell(u) = c^\ell(v)$, we have that corresponding embeddings from \mptgn without memory are identical $h^{\ell}_u(t) = h^{\ell}_v(t)$. We proceed with a proof by induction.

[\textit{Base case}] For $\ell=0$, the proposition trivially holds as  the temporal WL has the initial node features as colors, and memoryless \mptgns have these features as embeddings.

[\textit{Induction step}] Assume the proposition holds for iteration $\ell$. Thus, for any two nodes $u, v$, if $c^{\ell+1}(u) = c^{\ell+1}(v)$, we have
\begin{align*}
(c^{\ell}(u), \multiset{(c^{\ell}(i), e_{i u}(t^\prime), t^\prime): (u, i, t^\prime) \in \mathcal{G}(t)})
= (c^{\ell}(v), \multiset{(c^{\ell}(j), e_{j v}(t^\prime), t^\prime): (v, j, t^\prime) \in \mathcal{G}(t)})   
\end{align*}
and, by the induction hypothesis, we know
\begin{align*}
\begin{split}
(h^{(\ell)}_u(t), \multiset{(h_i^{(\ell)}(t), e_{i u}(t^\prime), t^\prime): (u, i, t^\prime) \in\, &\mathcal{G}(t)})
=\\ &(h^{(\ell)}_v(t), \multiset{(h_j^{(\ell)}(t), e_{j v}(t^\prime), t^\prime): (v, j, t^\prime) \in \mathcal{G}(t)})   
\end{split}
\end{align*}

We also note that this last identity also implies
\begin{align*} 
\begin{split}
(h^{(\ell)}_u(t), \multiset{(h_i^{(\ell)}(t), t-t^\prime, e) \mid (i, e, t^\prime) \in\,  & \mathcal{N}(u, t)}) 
= \\&(h^{(\ell)}_v(t), \multiset{(h_j^{(\ell)}(t), t-t^\prime, e) \mid (j, e, t^\prime) \in \mathcal{N}(v, t)}) 
\end{split}
\end{align*}
since there exists an event $(u, i, t^\prime) \in \mathcal{G}(t)$ with feature $e_{ui}(t^\prime) = e$ iff there is an element $(i, e, t^\prime) \in \mathcal{N}(u, t)$.

As a result, the inputs of the \mptgn's aggregation and update functions are identical, which leads to identical outputs $h^{(\ell+1)}_u(t)=h^{(\ell+1)}_v(t)$. Therefore, if the temporal WL test obtains identical multisets of colors for two temporal graphs after $\ell$ steps, the multisets of embeddings at layer $\ell$ for these graphs are also identical.
\end{proof}

\begin{lemma}[Lemma 5 in \citep{gin}]\label{lemma:lemma5_gin}
Assume $\mathcal{X}$ is countable. There exists a function $f: \mathcal{X} \rightarrow \mathbb{R}^n$ so that $h(X) = \sum_{x \in X} f(x)$ is unique for each multiset $X \subset \mathcal{X}$ of bounded size. Moreover, any multiset function $g$ can be decomposed as $g(X)=\varphi\left(\sum_{x \in X} f(x)\right)$ for some function $\varphi$.
\end{lemma}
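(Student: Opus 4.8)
The plan is to reduce the lemma to the uniqueness of base-$N$ positional expansions of real numbers. Since $\mathcal{X}$ is countable, I would first fix an injection $Z \colon \mathcal{X} \to \mathbb{N}$, so each feature $x$ is assigned a distinct natural number $Z(x)$. Because the multisets under consideration have bounded size, there exists a constant $N$ with $|X| < N$ for every admissible $X$; this $N$ will play the role of the base.

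Next I would define $f(x) = N^{-Z(x)}$ (so that $n = 1$ already suffices) and hence $h(X) = \sum_{x \in X} N^{-Z(x)}$. Writing $c_a(X)$ for the multiplicity of $a \in \mathcal{X}$ in $X$, the key observation is that $h(X) = \sum_{a \in \mathcal{X}} c_a(X)\, N^{-Z(a)}$, a sum with only finitely many nonzero terms since $X$ is finite. Each ``digit'' $c_a(X)$ satisfies $0 \le c_a(X) \le |X| < N$, so this is a legitimate base-$N$ expansion whose digits all lie in $\{0, \ldots, N-1\}$.

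The crux of the argument is then injectivity of $h$: by uniqueness of base-$N$ expansions, the value $h(X)$ determines every digit $c_a(X)$, and the collection of multiplicities $\{c_a(X)\}_a$ determines $X$ as a multiset. I expect this step---verifying that distinct multisets cannot collide because their digit sequences must differ at some position---to be the main (if mild) obstacle, since it requires care that the digit bound $c_a(X) < N$ holds \emph{simultaneously} for all $a$, which is precisely where the bounded-size hypothesis is used; if $|X|$ were unbounded the ``carries'' would destroy uniqueness.

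Finally, for the decomposition claim I would exploit this injectivity directly. Since $h$ is a bijection onto its image $\mathcal{H} = \{h(X) : X \text{ admissible}\}$, the inverse $h^{-1} \colon \mathcal{H} \to \{\text{multisets}\}$ is well defined, and for any multiset function $g$ I set $\varphi = g \circ h^{-1}$ on $\mathcal{H}$ (extending $\varphi$ arbitrarily off $\mathcal{H}$). Then $g(X) = \varphi(h(X)) = \varphi\bigl(\sum_{x \in X} f(x)\bigr)$, as required.
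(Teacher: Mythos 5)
Your proposal is correct, but note that the paper does not prove this statement at all --- it imports it verbatim as Lemma 5 of the cited GIN reference \citep{gin}, and your argument (injection $Z:\mathcal{X}\to\mathbb{N}$, $f(x)=N^{-Z(x)}$ with $N$ exceeding the multiset-size bound, injectivity via uniqueness of finitely-supported base-$N$ expansions, then $\varphi = g\circ h^{-1}$) is precisely the proof given there. You correctly identify the one delicate point: with finitely many nonzero digits each bounded by $N-1$, the usual $0.0\overline{(N{-}1)}$ ambiguity cannot occur, so distinct digit sequences yield distinct sums --- the same digit-slot idea the authors reuse in their proof of the injective temporal aggregation result (Proposition \ref{prop:injective-temporal-aggregation}).
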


\subsection{Proof of \autoref{prop:equivalence}: Relationship between DTDGs and CTDGs}

\begin{proof} We prove the two statements in \autoref{prop:equivalence} separately.
In the following, we treat CTDGs as sets of events up to a given timestamp.

\textbf{\textcolor{lb}{Statement 1:}} For any DTDG we can build a CTDG that contains the same information.

A DTDG consists of a sequence of graphs with no temporal information. We can model this using the CTDG formalism by setting a fixed time difference $\delta$ between consecutive elements $\mathsf{G}(t_i), \mathsf{G}(t_{i+1})$ of the CTDG, i.e., $t_{i+1} - t_i = \delta$ for all $i \geq 0$. 

Consider a DTDG given by the sequence $(G_{1}, G_{2},\dots)$. To build the equivalent CTDG, we define $S(G_i)$ as the set of edge events corresponding to $G_i$, i.e., $S(G_i) = \{(u, v, i \delta) : (u, v) \in E(G_i)\}$. 
We also make the edge features of these events match those in the DTDG, i.e., $e_{u v}(i \delta) = e_{u v} \in \mathcal{E}_i$.
To account for node features, for all $u \in V(G_i)$, we create an event $(u, \diamond, i \delta)$ between $u$ and a dummy node $\diamond$, with feature $e_{u \diamond}(i \delta)=x_u \in \mathcal{X}_i$.
Let $C(G_i)$ denote the set comprising these node-level events.
Then, we can construct the CTDG $\mathsf{G}(t_i) = \cup_{j=1}^i S(G_j) \cup C(G_j)$ for $i=1, \dots$.
Reconstructing the DTDG $(G_1, G_2, \dots)$ is trivial. To build $G_i$, it suffices to select all events at time $i \delta$ in the CTDG. Events involving $\diamond$ determine node features and the remaining ones constitute edges in the DTDG.

\textbf{\textcolor{lb}{Statement 2:}} The converse holds if the CTDG timestamps form a subset of some uniformly spaced countable set.

We say that a countable set $A \subset \mathbb{R}$ is uniformly spaced if there exists some $\delta \in \mathbb{R}$ such that $a_{i+1} - a_i = \delta$ for all $i$ where $(a_1, a_2, \ldots)$ is the ordered sequence formed from elements $a_r$ of $A$, i.e., $a_1 < a_2 < \ldots < a_i < a_{i+1}, \ldots$

Note that DTDGs are naturally represented by a set of uniformly spaced timestamps. This is because DTDGs correspond to sequences that do not contain any time information.
Let us denote the set of CTDG timestamps $T \subseteq \mathcal{T}$ such that $\mathcal{T}$ is countable and uniformly spaced.
Our idea is to construct a DTDG sequence with timestamps that coincide with the elements in $\mathcal{T}$. Then, since $T \subseteq \mathcal{T}$, we do not lose any information pertaining to events occurring at timestamps given by $T$.
Without loss of generality, in the following we assume that the elements of $T$ and $\mathcal{T}$ are arranged in their increasing order respectively, i.e., $t_i < t_{i+1}$ for all $i$, and $\tau_k < \tau_{k+1}$ for all $k$.

Consider a CTDG $(\mathsf{G}(t_1), \mathsf{G}(t_2), \dots)$ such that $\mathsf{G}(t_i) = \{(u, v, t): t \in T \text{ and } t \leq t_i\}$ for $t_i \in T$.
Also, let us denote $H(t_i)=\{(u, v, t) \in \mathsf{G}(t_i): t=t_i\}$ the set of events at time $t_i \in T$.
We can build a corresponding DTDG $(G_{1}, G_{2}, \dots)$ such that for all $\tau_k \in \mathcal{T}$ the $k$-th snapshot $G_k$ is
\begin{align*}
    & V(G_k) = \begin{cases}
                \{u: (u, \cdot, \tau_k) \in H(\tau_k)\}, & \text{ if } \tau_k \in T;\\
                \emptyset, & \text{otherwise}.
             \end{cases}    
             \\ 
   & E(G_k) = \begin{cases}
             \{(u, v): (u, v, \tau_k) \in H(\tau_k)\}, & \text{if } \tau_k \in T; \\
              \emptyset, &  \text{otherwise}.
             \end{cases}        
\end{align*}

To recover the original CTDG, we can adapt the reconstruction procedure we used in the previous part of the proof.  We define 
\begin{align}
   \tilde{I} = \{(i, k) \in \mathbb{N} \times \mathbb{N} : \tau_k = t_i \text{ for } t_i \in T \text{ and } \tau_k \in \mathcal{T}\}.
\end{align}
Note that we can treat $\tilde{I}$ as a map by defining $\tilde{I}(i)= k$ if and only if $(i, k) \in \tilde{I}$. 
To recover the original CTDG, we first create the set of events $S(G_k) = \{(u, v, k \delta) : (u, v) \in E(G_k)\}$.
Then, we build $\mathsf{G}(t_i) = \cup_{j: j \leq \tilde{I}(i)} S(G_j)$ for $t_i \in T$.
\end{proof}

\subsection{Proof of \autoref{lemma:1}}
\begin{proof} 
Here we show that if two nodes $u$ and $v$ have isomorphic ($L$-depth) TCTs, then \mptgns (with $L$-layers) compute identical embeddings for $u$ and $v$. Formally, let $T_{u,\ell}(t)$ denote the TCT of $u$ with $\ell$ layers. We want to show that $T_{u,\ell}(t) \cong T_{v, \ell}(t) \Rightarrow h_u^{(\ell)}(t) = h_v^{(\ell)}(t)$. We employ a proof by induction on $\ell$. Since there is no ambiguity, we drop the dependence on time in the following.

[\emph{Base case}] Consider $\ell=1$. By the isomorphism assumption $T_{u, 1} \cong T_{v, 1}$, $h_u^{(0)}=s_u=s_v=h_v^{(0)}$ --- roots of both trees have the same states. Also, for any children $i$ of $u$ in $T_{u, 1}$ there is a corresponding one $f(i)$ in $T_{v,1}$ with $s_i=s_{f(i)}$. Recall that the $\ell$-th layer aggregation function $\textsc{Agg}^{(\ell)}(\cdot)$ acts on multisets of triplets of previous-layer embeddings, edge features and timestamps of temporal neighbors (see \autoref{eq:aggregation}).
Since the temporal neighbors of $u$ correspond to its children in $T_{u, 1}$, then the output of the aggregation function for $u$ and $v$ are identical: $\tilde{h}_u^{(1)}=\tilde{h}_v^{(1)}$. In addition, since the initial embeddings of $u$ and $v$ are also equal (i.e., $h_u^{(0)}=h_v^{(0)}$), we can ensure that the update function returns $h_u^{(1)}=h_v^{(1)}$.

[\emph{Induction step}] Assuming that $T_{u,\ell-1} \cong T_{v, \ell-1} \Rightarrow h_u^{(\ell-1)}= h_v^{(\ell-1)}$ for any pair of nodes $u$ and $v$, we will show that $T_{u,\ell} \cong T_{v, \ell} \Rightarrow h_u^{(\ell)} = h_v^{(\ell)}$. For any children $i$ of $u$, let us define the subtree of $T_{u,\ell}$ rooted at $i$ by $T_{i}$. We know that $T_{i}$ has depth $\ell-1$, and since $T_{u,\ell} \cong T_{v, \ell}$, there exists a corresponding subtree of $T_v$ (of depth $\ell-1$) rooted at $f(i)$ such that $T_{i} \cong T_{f(i)}$. Using the induction hypothesis, we obtain that the multisets of embeddings from the children $i$ of $u$ and children $f(i)$ of $v$ are identical. Note that if two $\ell$-depth TCTs are isomorphic, they are also isomorphic up to depth $\ell-1$, i.e., $T_{u,\ell} \cong T_{v,\ell}$ implies $T_{u,\ell-1} \cong T_{v,\ell-1}$ and, consequently, $h^{(\ell-1)}_u=h^{(\ell-1)}_{v}$ (by induction hypothesis). Thus, the input of the aggregation and update functions are identical and they compute the same embeddings for $u$ and $v$.
\end{proof}

\subsection{Proof of \autoref{prop:injectiveness-requirement}: Most expressive \mptgns}
\begin{proof} 
Consider \mptgns with parameter values that make $\textsc{Agg}^{(\ell)}(\cdot)$ and $\textsc{Update}^{(\ell)}(\cdot)$ injective functions on multisets of triples of hidden representations, edge features and timestamps. The existence of these parameters is guaranteed by the fact that, at any given time $t$, the space of node states (and hidden embeddings from temporal neighbors), edge features and timestamps is finite (see \autoref{lemma:lemma5_gin}). 

Again, let $T_{u,\ell}(t)$ denote the TCT of $u$ with $\ell$ layers. We want to prove that, under the injectivity assumption, if $T_{u, \ell}(t) \not\cong T_{v, \ell}(t)$, then $h^{(\ell)}_u(t) \neq h^{(\ell)}_u(t)$ for any two nodes $u$ and $v$. In the following, we simplify notation by removing the dependence on time. We proceed with proof by induction on the TCT's depth $\ell$. Also, keep in mind that $\varphi_\ell = \textsc{Update}^{(\ell)} \circ \textsc{Agg}^{(\ell)}$ is injective for any $\ell$.

[\emph{Base case}] For $\ell=1$, if $T_{u, 1} \not\cong T_{v, 1}$ then the root node states are different (i.e., $s_u\neq s_v$) or the multiset of states/edge features/ timestamps triples from $u$ and $v$'s children are different. In both cases, the inputs of $\varphi_{\ell}$ are different and it therefore outputs different embeddings for $u$ and $v$.

[\emph{Induction step}] The inductive hypothesis is $T_{u, \ell-1} \not\cong T_{v, \ell-1} \Rightarrow h^{(\ell-1)}_u \neq h^{(\ell-1)}_v$ for any pair of nodes $u$ and $v$. 
If $T_{u, \ell} \not\cong T_{v, \ell}$, at least one of the following holds: i) the states of $u$ and $v$ are different, ii) the multisets of edges (edge features/ timestamps) with endpoints in $u$ and $v$ are different, or iii) there is no pair-wise isomorphism between the TCTs rooted at $u$ and $v$'s children. In the first two cases, $\varphi_{\ell}$ trivially outputs different embeddings for $u$ and $v$. We are left with the case in which only the latter occurs. Using our inductive hypothesis, the lack of a (isomorphism ensuring) bijection between the TCTs rooted at $u$ and $v$'s children implies there is also no bijection between their multiset of embeddings. In turn, this guarantees that $\varphi$ will output different embeddings for $u$ and $v$.
\end{proof}

\subsection{Proof of \autoref{prop:memory_expressivity}: The role of memory}
\begin{proof}
We prove the two parts of the proposition separately. In the following proofs, we rely on the concept of monotone TCTs (see \autoref{def:monotone_tct}).

\textbf{\textcolor{lb}{Statement 1:}} If $L < \Delta$: $\mathcal{Q}_L^{[M]}$ is strictly stronger than $\mathcal{Q}_L$.

We know that the family of $L$-layer \mptgns with memory comprises the family of $L$-layer \mptgns without memory (we can assume identity memory). Therefore, $\mathcal{Q}_L^{[M]}$ is at least as powerful as  $\mathcal{Q}_L$. To show that $\mathcal{Q}_L^{[M]}$ is strictly stronger (more powerful) than $\mathcal{Q}_L$, when $L < \Delta$, it suffices to create an example for which memory can help distinguish a pair of nodes. We provide a trivial example in \autoref{fig:memory-vs-identity} for $L=1$. Note that the $1$-depth TCTs of $u$ and $v$ are isomorphic when no memory is used. However, when equipped with memory, the interaction $(b, c, t_1)$ affects the states of $v$ and $c$, making the 1-depth TCTs of $u$ and $v$ (at time $t>t_2$) no longer isomorphic. 

\begin{figure}[ht]
    \centering
    \includegraphics[width=0.25\textwidth]{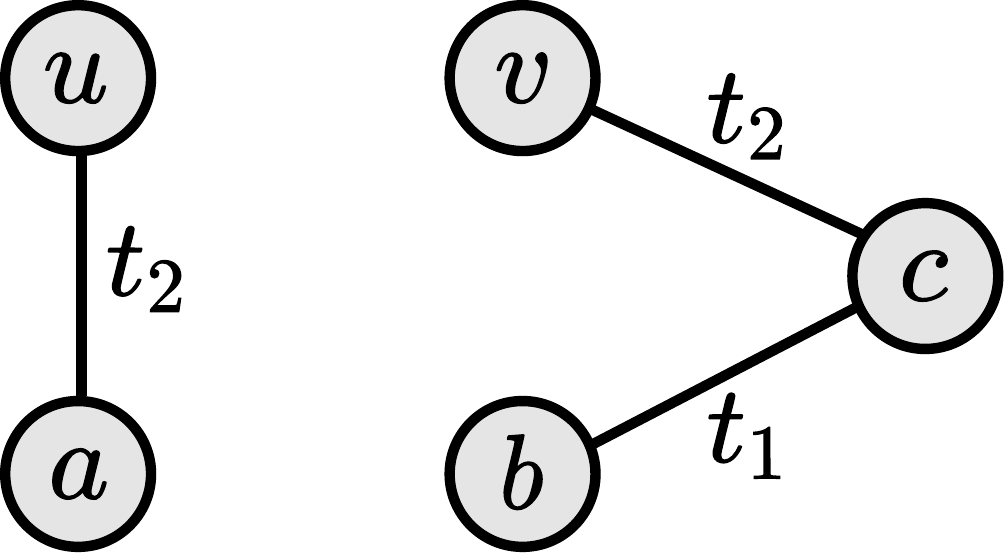}
    \caption{Temporal graph where all initial node features and edge features are identical, and $t_2 > t_1$.}
    \label{fig:memory-vs-identity}
\end{figure}

\textbf{\textcolor{lb}{Statement 2:}} For any $L$ : $\mathcal{Q}_{L+\Delta}$ is at least as powerful as $\mathcal{Q}^{[M]}_{L}$.

It suffices to show that if $\mathcal{Q}_{L+\Delta}$ cannot distinguish a pair of nodes $u$ and $v$ , $\mathcal{Q}^{[M]}_{L}$ cannot distinguish them too. Let $T_{u,L}^{M}(t)$ and $T_{u,L}(t)$ denote the $L$-depth TCTs of $u$ with and without memory respectively. Using \autoref{lemma:1}, this is equivalent to showing that $T_{u, L+\Delta}(t) \cong T_{v, L+\Delta}(t) \Rightarrow T_{u, L}^{M}(t) \cong T_{v, L}^{M}(t)$, since no \mptgn can separate nodes associated with isomorphic TCTs. In the following, when we omit the number of layers from TCTs, we assume TCTs of arbitrary depth.

\textbf{\textcolor{Mahogany}{Step 1:}} 
Characterizing the dependence of memory on initial states and events in the dynamic graph.

We now show that the memory for a node $u$, after processing all events with timestamp $\leq t_n$, depends on the initial states of a set of nodes $\mathcal{V}_{u}^n$, and a set of events annotated with their respective timestamps and features $\mathcal{B}_{u}^n$. If at time $t_n$ no event involves a node $z$ , we set  $\mathcal{B}_{z}^n = \mathcal{B}_{z}^{n-1}$ and $\mathcal{V}_{z}^n = \mathcal{V}_{z}^{n-1}$. 
We also initialize $\mathcal{B}_{u}^0 = \emptyset$ and $\mathcal{V}_{u}^0 = \{u\}$ for all nodes $u$. We proceed with a proof by induction on the number of observed timestamps $n$.

[\emph{Base case}] 
Let $\mathcal{I}_{1}(u)=\{v :(u, v, t_1) \in \mathcal{G}(t_1^+)\}$ be the set of nodes interacting with $u$ at time $t_1$, where $\mathcal{G}(t_1^+)$ is the temporal graph right after $t_1$. Similarly, let $\mathcal{J}_{1}(u)= \{(u, \cdot, t_1) \in \mathcal{G}(t_1^+)\}$ be the set of events involving $u$ at time $t_1$. Recall that up until $t_1$, all memory states equal initial node features (i.e., $s_{u}(t_{1}) = s_{u}(0)$). 
Then, the updated memory (see \autoref{eq:mem-msg} and \autoref{eq:memory}) for $u$ depends on $\mathcal{V}_{u}^1 = \mathcal{V}_{u}^0 \cup_{v \in \mathcal{I}(u)} \mathcal{V}_{v}^0$, $\mathcal{B}_{u}^1 = \mathcal{B}_{u}^0 \cup \mathcal{J}_{1}(u)$.

[\emph{Induction step}] Assume that for timestamp $t_{n-1}$ the proposition holds. We now show that it holds for $t_n$. 
Since the proposition holds for $n - 1$ timestamps, we know that the memory of any $w$ that interacts with $u$ in $t_n$, i.e. $w \in \mathcal{I}_{n}(u)$, depends on $\mathcal{V}_{w}^{n-1}$ and $\mathcal{B}_{w}^{n-1}$, and the memory of $u$ so far depends on $\mathcal{V}_{u}^{n-1}$ and $\mathcal{B}_{u}^{n-1}$.
Then, the updated memory for $u$ depends on $\mathcal{V}_{u}^{n} =  \mathcal{V}_{u}^{n-1} \cup_{w \in \mathcal{I}(u)} \{ w, u\}  \cup \mathcal{V}_{w}^{n-1}$ and
$\mathcal{B}_{u}^{n} = \mathcal{B}_{u}^{n-1} \cup \mathcal{J}_{n}(u) \cup_{w \in \mathcal{I}(u)} \mathcal{B}_{w}^{n-1} $.

\textbf{\textcolor{Mahogany}{Step 2:}} $(z, w, t_{zw}) \in \mathcal{B}^n_u$ if and only if there is a path $(u_k, t_k=t_{zw}, u_{k+1})$ in $\tilde{T}_u(t_n^+)$ --- the monotone TCT of $u$ (see \autoref{def:monotone_tct}) after processing events with timestamp $\leq t_n$ --- with either $\sharp u_{k}=z, \sharp u_{k+1}=w$  or $\sharp u_{k}=w, \sharp u_{k+1}=z$.

[\emph{Forward direction}]
An event $(z, w, t_{zw})$ with $t_{zw} \leq t_n$ will be in $\mathcal{B}^n_u$ only if $z=u$ or $w=u$, or if 
there is a subset of events $\{(u, \sharp u_1, t_1), (\sharp u_1, \sharp u_2, t_2), \dots, (\sharp u_k, \sharp u_{k+1}, t_{zw})\}$ with $\sharp u_k = z$ and $\sharp u_{k+1} = w$ such that $t_n \geq t_{1} > \dots > t_{zw}$. 
In either case, this will lead to root-to-leaf path in $\tilde{T}_u(t_n^{+})$ passing through $(u_k, t_{zw}, u_{k+1})$.
This subset of events can be easily obtained by backtracking edges that caused unions/updates in the procedure from {\color{Mahogany} Step 1}.

[\emph{Backward direction}] Assume there is a subpath $p=(u_k, t_k=t_{zw}, u_{k+1}) \in \tilde{T}_u(t_n^{+})$ 
with $\sharp u_k = z$ and $\sharp u_{k+1} = w$ such that $(z, w, t_{zw}) \notin \mathcal{B}_u^n$. Since we can obtain $p$ from $\tilde{T}_u(t_n^{+})$, we know that the sequence of events $r = ((u, \sharp u_1, t_1), \dots, (\sharp u_{k-2}, \sharp u_{k-1}=z, t_{k-1}),  (z, w, t_k=t_{zw}))$ happened and that $t_{i} > t_{i+1} \quad \forall i$. However, since $(z, w, t_{zw}) \notin \mathcal{B}_u^n$, there must be no monotone walk starting from $u$ going through the edge $(z, w, t_{zw})$ to arrive at $w$, which is exactly what $r$ characterizes. Thus, we reach contradiction. 

Note that the nodes in $\mathcal{V}^n_u$ are simply the nodes that have an endpoint in the events $\mathcal{B}^n_u$, and therefore are also nodes in $\tilde{T}_u(t_n^+)$ and vice-versa.

\textbf{\textcolor{Mahogany}{Step 3:}} For any node $u$, there is a bijection  that maps ($\mathcal{V}^n_u, \mathcal{B}^n_u$) to $\tilde{T}_u(t_n^+)$.

First, we note that ($\mathcal{V}^n_u, \mathcal{B}^n_u$) depends on a subset of all events, which we represent as $\mathcal{G}^\prime \subseteq \mathcal{G}(t_n^+)$. Since $\mathcal{B}^n_{u}$ contains all events in $\mathcal{G}^\prime$ and $(\mathcal{V}^n_{u}, \mathcal{B}^n_{u})$ can be uniquely constructed from $\mathcal{G}^\prime$, then there is a bijection $g$ that maps from $\mathcal{G}^\prime$ to $(\mathcal{V}^n_{u}, \mathcal{B}^n_{u})$.

Similarly, $\tilde{T}_{u}(t_n^+)$ also depends on a subset of events which we denote by $\mathcal{G}^{\prime\prime} \subseteq \mathcal{G}(t_n^+)$. We note that the unique events in $\tilde{T}_{u}(t_n^+)$ correspond to $\mathcal{G}^{\prime\prime}$, and we can uniquely build the tree $\tilde{T}_{u}(t_n^+)$ from $\mathcal{G}^{\prime\prime}$. This implies that there is a bijection $h$ that maps from $\mathcal{G}^{\prime\prime}$ to $\tilde{T}_{u}(t_n^+)$.

Previously, we have shown that all events in $\mathcal{B}^n_{u}$ are also in $\tilde{T}_{u}(t_n^+)$ and vice-versa. This implies that both sets depend on the same events, and thus on the same subset of all events, i.e., $\mathcal{G}^{\prime} = \mathcal{G}^{\prime\prime} = \mathcal{G}_{S}$. Since there is a bijection $g$ between $\mathcal{G}_{S}$  and $(\mathcal{V}^n_{u}, \mathcal{B}^n_{u})$, and a bijection $h$ between $\mathcal{G}_{S}$ and $\tilde{T}_{u}(t_n^+)$, there exists a bijection $f$ between $(\mathcal{V}^n_{u}, \mathcal{B}^n_{u})$ and $\tilde{T}_{u}(t_n^+)$.

\textbf{\textcolor{Mahogany}{Step 4:}} If $T_{u, L+\Delta}(t^+) \cong T_{v, L+\Delta}(t^+)$, then $T_{u, L}^{M}(t^+) \cong T_{v, L}^{M}(t^+)$.

To simplify notation, we omit here the dependence on time. 

Any node $w \in T_{u, L}^{M}$ also appears in $T_{u, L+\Delta}$ at the same level. The subtree of $T_{u, L+\Delta}$ rooted at $w$, denoted here by $T_w^\prime$, has depth at least $k \geq \Delta$. 
Note that $T_w^\prime$ corresponds to the $k$-depth TCT of $\sharp w$. 
Since the depth of $T_w^\prime$ is at least $\Delta$, we know that the $\tilde{T}_{w}^\prime \cong \tilde{T}_{\sharp w}$ --- i.e., imposing time-constraints to $T^\prime_{w}$ results in the {\tcTCT} of node $\sharp w$. 
Also, because the memory of $\sharp w$ depends on $\tilde{T}_{\sharp w}$, $T_w^\prime$ comprises the information used to compute the memory state of $\sharp w$. Note that this applies to any $w$ in $T_{u, L}^{M}$; thus, $T_{u, L+\Delta}$ contains all we need to compute the states of any node of the dynamic graph that appears in $T_{u, L}^{M}$. The same argument applies to $T_{v, L+\Delta}$ and $T_{v, L}^{M}$.
Finally, since $T_{u, L}^{M}$ can be uniquely computed from $T_{u, L+\Delta}$, and $T_{v, L}^{M}$ from $T_{v, L+\Delta}$, if $T_{u, L+\Delta} \cong T_{v, L+\Delta}$, then $T_{u, L}^{M} \cong T_{v, L}^{M}$.
\end{proof}

\subsection{Proof of \autoref{prop:TGAT-TGN-Att}: Limitations of TGAT and TGN-Att}\label{ap:limitations_TGAT}
\begin{proof} 
In this proof, we first provide an example of a dynamic graph where the TCTs of two nodes $u$ and $v$ are not isomorphic. Then, we show that we can not find a TGAT model such that $h_u^{(L)}(t) \neq h_v^{(L)}(t)$, i.e., TGAT does not distinguish $u$ and $v$. Next, we show that even if we consider TGATs with memory (TGN-Att), it is still not possible to distinguish nodes $u$ and $v$ in our example. 

\begin{figure}[htb]
    \centering
    \includegraphics[width=\textwidth]{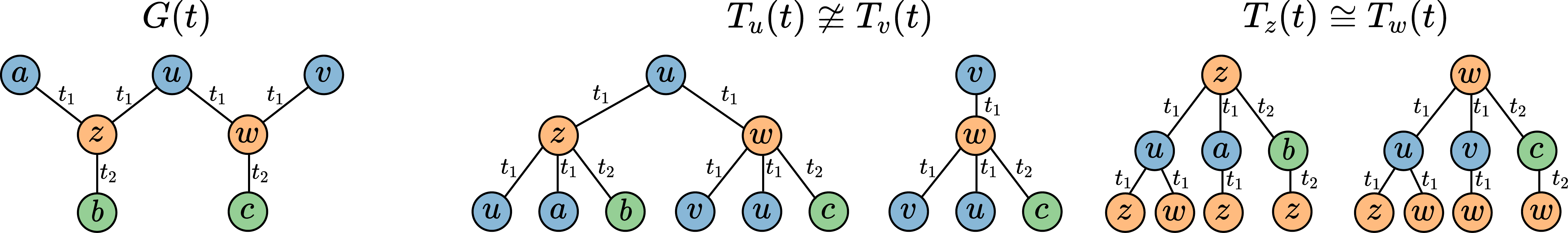}
    \caption{(\textbf{Leftmost}) Example of a temporal graph for which TGN-Att and TGAT cannot distinguish nodes $u$ and $v$ even though their TCTs are non-isomorphic. Colors denote node features and all edge features are identical, and $t_2 > t_1$ (and $t > t_2$). (\textbf{Right}) The 2-depth TCTs of nodes $u, v, z$ and $w$. The TCTs of $u$ and $v$ are non-isomorphic whereas the TCTs of $z$ and $w$ are isomorphic.}
    \label{fig:tgn_fail}
\end{figure}

\autoref{fig:tgn_fail}(leftmost) provides a temporal graph where all edge events have the same edge features. Colors denote node features. As we can observe, the TCTs of nodes $u$ and $v$ are not isomorphic. In the following, we consider node distinguishability at time $t > t_2$.

\textbf{\textcolor{lb}{Statement 1:}} TGAT cannot distinguish the nodes $u$ and $v$ in our example.

\textbf{\textcolor{Mahogany}{Step 1:}} For any TGAT with $\ell$ layers, we have that $h_w^{(\ell)}(t)=h_z^{(\ell)}(t)$.

We note that the $\ell$-layer TCTs of nodes $w$ and $z$ are isomorphic, for any $\ell$. To see this, one can consider the symmetry around node $u$ that allows us to define a node permutation function (bijection) $f$ given by $f(z)=w, f(w)=z, f(a)=v, f(u)=u, f(b)=c, f(c)=b, f(v)=a$. \autoref{fig:tgn_fail}(right) provides an illustration of the $2$-depth TCTs of $z$ and $w$ at time $t>t_2$.

By Lemma 1, if the $\ell$-layer TCTs of two nodes $z$ and $w$ are isomorphic, then no $\ell$-layer \mptgn can distinguish them. Thus, we conclude that $h_w^{(\ell)}(t)=h_z^{(\ell)}(t)$ for any TGAT with arbitrary number of layers $\ell$.

\textbf{\textcolor{Mahogany}{Step 2:}} There is no TGAT such that $h_v^{(\ell)}(t) \neq h_u^{(\ell)}(t)$.

To compute $h_v^{(\ell)}(t)$, TGAT aggregates the messages of $v$'s temporal neighbors at layer $\ell-1$, and then combines $h_v^{(\ell-1)}(t)$ with the aggregated message $\tilde{h}_v^{(\ell-1)}(t)$ to obtain $h_v^{(\ell)}(t)$. 

Note that $\mathcal{N}(u, t)=\{(z, e, t_1), (w, e, t_1)\}$ and $\mathcal{N}(v, t)=\{(w, e, t_1)\}$, where $e$ denotes an edge feature vector. Also, we have previously shown that $h_w^{(\ell-1)}(t)=h_z^{(\ell-1)}(t)$. 

Using the TGAT aggregation layer (\autoref{eq:tgat-aggregation}), the query vectors of $u$ and $v$ are $q_u = [h_u^{(\ell-1)}(t) || \phi(0)]W^{(\ell)}_q$ and $q_v = [h_v^{(\ell-1)}(t) || \phi(0)]W^{(\ell)}_q$, respectively.

Since all events have the common edge features $e$, the matrices $C_u^{(\ell)}$ and $C_v^{(\ell)}$ share the same vector in their rows. The single-row matrix $C_v^{(\ell)}$ is given by $C_v^{(\ell)} = [h_w^{(\ell-1)}(t) || \phi(t-t_1) || e]$, while the two-row matrix $C^{(\ell)}_u = \left[[h_w^{(\ell-1)}(t) || \phi(t-t_1) || e]; [h_z^{(\ell-1)}(t) \| \phi(t-t_1) || e]\right]$, with $h_w^{(\ell-1)}(t) = h_z^{(\ell-1)}(t)$. We can express $C_u^{(\ell)} = [1, 1]^\top r$ and $C_v^{(\ell)} = r$ , where $r$ denotes the row vector $r=[h_z^{(\ell-1)}(t) || \phi(t-t_1) || e]$.

Using the key and value matrices of node $v$, i.e.,
$K_v=C_v^{(\ell)} W^{(\ell)}_K$ and  
$V_v=C_v^{(\ell)} W^{(\ell)}_V$, we have that
\begin{align}
\tilde{h}^{(\ell)}_v(t) & = \mathrm{softmax}(q_v K_v^\top) V_v & \nonumber \\       
& = \underbrace{\mathrm{softmax}(q_v K_v^\top)}_{=1} r W^{(\ell)}_V  & \text{ [softmax of a single element is 1] } \nonumber\\
& = r W^{(\ell)}_V \nonumber \\
& = \underbrace{\mathrm{softmax}(q_u K_u^\top)[1, 1]^\top}_{=1} r W^{(\ell)}_V  = \tilde{h}^{(\ell)}_u(t) & \text{[softmax outputs a convex combination]} \nonumber
\end{align}

We have shown that the aggregated messages of nodes $u$ and $v$ are the same at any layer $\ell$. We note that the initial embeddings are also identical $h^{(0)}_v(t) = h^{(0)}_u(t)$ as $u$ and $v$ have the same color. Recall that the update step is $h_v^{(\ell)}(t)=\mathrm{MLP}(h_v^{(\ell-1)}(t), \tilde{h}_v^{(\ell)}(t))$. Therefore, if the initial embeddings are identical, and the aggregated messages at each layer are also identical, we have that $h_u^{(\ell)}(t) = h_v^{(\ell)}(t)$ for any $\ell$. 

\textbf{\textcolor{lb}{Statement 2:}} TGN-Att cannot distinguish the nodes $u$ and $v$ in our example.

We now show that adding a memory module to TGAT produces node states such that $s_u(t)=s_v(t)=s_a(t)$, $s_z(t)=s_w(t)$, and $s_b(t)=s_c(t)$. If that is the case, then these node states could be treated as node features in a equivalent TGAT model of our example in \autoref{fig:tgn_fail}, proving that there is no TGN-Att such that $h_v^{(\ell)}(t) \neq h_u^{(\ell)}(t)$. In the following, we consider TGN-Att with average memory aggregators (see Appendix \ref{ap:egn}).

We begin by showing that $s_{a}(t) = s_{u}(t) = s_{v}(t)$ after memory updates. We note that the memory message node $a$ receives is $[e \| t_{1} \| s_{z}(t_{1})]$. The memory message node $u$ receives is $\textsc{MeanAgg}([e \| t_{1} \| s_{w}(t_{1})], [e \| t_{1} \| s_{z}(t_{1})])$, but since $s_{w}(t_{1}) = s_{z}(t_{1})$, both messages are the same, and the average aggregator outputs $[e \| t_{1} \| s_{z}(t_{1})]$. Finally, the message that node $v$ receives is $[e \| t_{1} \| s_{w}(t_{1})] = [e \| t_{1} \| s_{z}(t_{1})]$. Since all three nodes receive the same memory message and have the same initial features, their updated memory states are identical.

Now we show that $s_{z}(t) = s_{w}(t)$, for $t_{1} < t \leq t_{2}$. Note that the message that node $z$ receives is $\textsc{MeanAgg}([e \| t_{1} \| s_{a}(t_{1})], [e \| t_{1} \| s_{u}(t_{1})]) = [e \| t_{1} \| s_{u}(t_{1})]$, with $s_{u}(t_{1}) = s_{a}(t_{1})$. The message that node $w$ receives is $\textsc{MeanAgg}([e \| t_{1} \| s_{u}(t_{1})], [e \| t_{1} \| s_{v}(t_{1})]) = [e \| t_{1} \| s_{u}(t_{1})]$. Again, since the initial features and the messages received by each node are equal, $s_{z}(t) = s_{w}(t)$ for $t_{1} < t \leq t_{2}$.

We can then use this to show that $s_{z}(t) = s_{w}(t)$ for $t > t_{2}$. Note that at time $t_{2}$, the message that nodes $z$ and $w$ receive are $[e \| t_{2} - t_1 \| s_{b}(t_{2})]$ and $[e \| t_{2} - t_{1} \| s_{c}(t_{2})]$, respectively. Also, note that $s_b(t_2)=s_c(t_2)=s_b(0)=s_c(0)$ as the states of $b$ and $c$ are only updated right after $t_2$. Because the received messages and the previous states (up until $t_2$) of $z$ and $w$ are identical, we have that $s_{z}(t) = s_{w}(t)$ for $t > t_{2}$.

Finally, we show that $s_{b}(t) = s_{c}(t)$. Using that $s_{z}(t_{2}) = s_{w}(t_{2})$ in conjunction with the fact that node $b$ receives message $[[e \| t_{2} - t_{1} \| s_{z}(t_{2})]]$, and node $c$ receives $[e \| t_{2} - t_{1} \| s_{w}(t_{2})]$, we obtain $s_{b}(t) = s_{c}(t)$ since initial memory states and messages that the nodes received are the same.
\end{proof}

\subsection{Proof of \autoref{prop:tgns_vs_caw}: Limitations of \mptgn and CAWs}
\begin{figure}[ht] 
    \centering
    \includegraphics[width=0.9\textwidth]{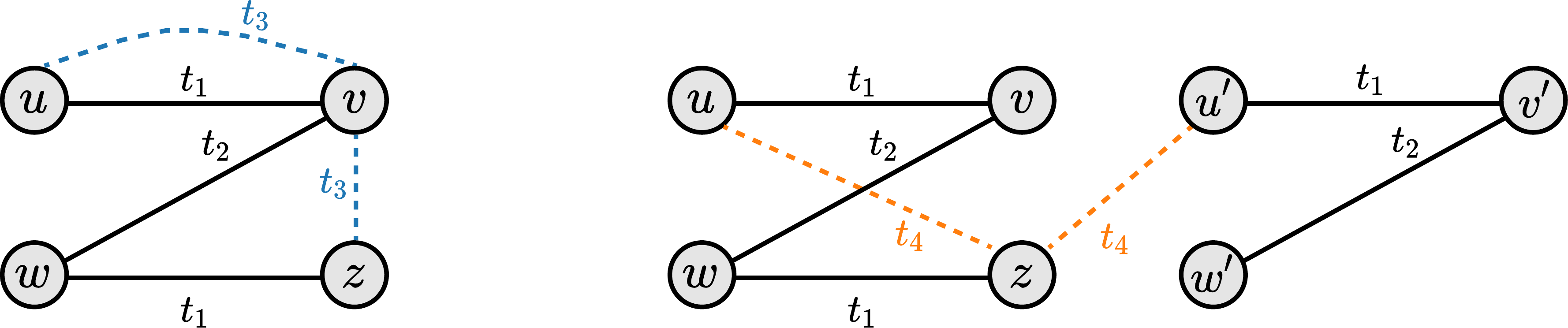}
    \caption{(\textbf{Left}) Example of a temporal graph for which CAW can distinguish the events $\color{lb}(u, v, t_3)\color{black}$ and $\color{lb}(z, v, t_3)\color{black}$ but \mptgns cannot. We assume that all edge and node features are identical, and $t_{k+1}>t_{k}$ for all $k$. (\textbf{Right}) Example for which \mptgns can distinguish $\color{orange}(u, z, t_4)\color{black}$ and $\color{orange}(u^\prime, z, t_4)\color{black}$ but CAW cannot.}
    \label{fig:caw_vs_tgn}
\end{figure}

\begin{proof}
Using the example in \autoref{fig:caw_vs_tgn}(Left), we adapt a construction by \citet{caw} to show that CAW can separate events that \mptgns adopting node embedding concatenation cannot. We first note that the TCTs of $u$ and $z$ are isomorphic. Thus, since $v$ is a common endpoint in $(u, v, t_3)$ and $(z, v, t_3)$, no \mptgn can distinguish these two events. Nonetheless, CAW obtains the following anonymized walks for the event $(u, v, t_3)$:
\begin{align*}
&\underbrace{\{[1,0,0], [0, 1, 0]\}}_{I_{\text{CAW}}(u; S_u, S_v)} \xrightarrow{t_1} \underbrace{\{[0,1,0], [2, 0, 0]\}}_{I_{\text{CAW}}(v; S_u, S_v)}   \\ 
& \underbrace{\{[0,1,0], [2, 0, 0]\}}_{I_{\text{CAW}}(v; S_u, S_v)} \xrightarrow{t_1} \underbrace{\{[1,0,0], [0, 1, 0]\}}_{I_{\text{CAW}}(u; S_u, S_v)}   \\ 
& \underbrace{\{[0,1,0], [2, 0, 0]\}}_{I_{\text{CAW}}(v; S_u, S_v)} \xrightarrow{t_2} \underbrace{\{[0,0,0], [0, 1, 0]\}}_{I_{\text{CAW}}(w; S_u, S_v)} \xrightarrow{t_1} \underbrace{\{[0,0,0], [0, 0, 1]\}}_{I_{\text{CAW}}(z; S_u, S_v)} 
\end{align*}
and the walks associated with $(z, v, t_3)$ are (here we omit underbraces for readability):
\begin{align*}
&\{[1,0,0], [0, 0, 1]\} \xrightarrow{t_1} \{[0,1,0], [0, 1, 0]\} \\
&\{[0,0,0], [2, 0, 0]\} \xrightarrow{t_1} \{[0,0,0], [0, 1, 0]\}   \\ 
&\{[0,0,0], [2, 0, 0]\} \xrightarrow{t_2} \{[0,1,0], [0, 1, 0]\} \xrightarrow{t_1} \{[1,0,0], [0, 0, 1]\}  
\end{align*}

In this example, assume that MLPs used to encode each walk correspond to identity mappings. Then, the sum of the elements in each set is injective since each element of the sets in the anonymized walks are one-hot vectors. We note that, in this example, we can simply choose a RNN that sums the vectors in each sequence (walks), and then apply a mean readout layer (or pooling aggregator) to obtain distinct representations for $(u, v, t_3)$ and $(z, v, t_3)$. 

We now use the example in \autoref{fig:caw_vs_tgn}(Right) to show that \mptgns can separate events that CAW cannot. To see why \mptgns can separate the events $(u, z, t_4)$ and $(u^\prime, z, t_4)$, it suffices to observe that the 4-depth TCTs of $u$ and $u^\prime$ are non-isomorphic. Thus, a \mptgn with injective layers could distinguish such events. Now, let us take a look at the anonymized walks for $(u, z, t_4)$:
\begin{align*}
& \underbrace{\{[1,0], [0, 0]\}}_{I_{\text{CAW}}(u; S_u, S_z)} \xrightarrow{t_1} \underbrace{\{[0,1], [0, 0]\}}_{I_{\text{CAW}}(v; S_u, S_z)}   \\ 
&\underbrace{\{[0, 0], [1, 0]\}}_{I_{\text{CAW}}(z; S_u, S_z)} \xrightarrow{t_1} \underbrace{\{[0,0], [0, 1]\}}_{I_{\text{CAW}}(w; S_u, S_z)}   
\end{align*}
and for $(u^\prime, z, t_4)$:
\begin{align*}
& \underbrace{\{[1,0], [0, 0]\}}_{I_{\text{CAW}}(u^\prime; S_{u^\prime}, S_z)} \xrightarrow{t_1} \underbrace{\{[0,1], [0, 0]\}}_{I_{\text{CAW}}(v^\prime; S_{u^\prime}, S_z)}   \\ 
& \underbrace{\{[0, 0], [1, 0]\}}_{I_{\text{CAW}}(z; S_{u^\prime}, S_z)}  \xrightarrow{t_1} \underbrace{\{[0,0], [0, 1]\}}_{I_{\text{CAW}}(w; S_{u^\prime}, S_z)}   
\end{align*}

Since the sets of walks are identical, they must have the same embedding. Therefore, there is no CAW model that can separate these two events.
\end{proof}

\subsection{Proof of \autoref{cor:temporal-wl-tgns}: Injective \mptgns and the temporal WL test}

We want to prove that injective \mptgns can separate two temporal graphs if and only if the temporal WL does the same. Our proof comprises two parts. We first show that if an \mptgn produces different multisets of embeddings for two non-isomorphic temporal graphs $\mathcal{G}(t)$ and $\mathcal{G}^\prime(t)$, then the temporal WL decides these graphs are not isomorphic. Then, we prove that, if the temporal WL decides $\mathcal{G}(t)$ and $\mathcal{G}^\prime(t)$ are non-isomorphic, there is an injective \mptgn (i.e., with injective message-passing layers) that outputs distinct multisets of embeddings.

\textbf{\textcolor{lb}{Statement 1:}} Temporal WL is at least as powerful as \mptgns.

See \autoref{lem:b2} for proof.

\textbf{\textcolor{lb}{Statement 2:}} Injective \mptgn is at least as powerful as temporal WL.
\begin{proof}
To prove this, we can repurpose the proof of Theorem 3 in \citep{gin}. In particular, we assume \mptgns that meet the injective requirements of Proposition 2, i.e., \mptgns that implement injective aggregate and update functions on multisets of hidden representations from temporal neighbors. Following their footprints, we prove that there is a injection $\varphi$ to the set of embeddings of all nodes in a temporal graph from their respective colors in the temporal WL test. We do so via induction on the number of layers $\ell$. To achieve our purpose, we can assume identity memory without loss of generality.

The base case ($\ell=0$) is straightforward since the temporal WL test initializes colors with node features. We now focus on the inductive step. Suppose the proposition holds for $\ell - 1$. Note that our update function:
\begin{align*}
    h_v^{(\ell)}(t) & = \textsc{Update}^{(\ell)}\left(h_v^{(\ell-1)}(t), \textsc{Agg}^{(\ell)}(\{\!\!\{ (h_u^{(\ell-1)}(t), t-t^\prime, e) \mid (u, e, t^\prime) \in \mathcal{N}(v, t)\}\!\!\})\right)
\end{align*}
can be rewritten using $\varphi$ as a function of node colors:
\begin{align*}
    h_v^{(\ell)}(t) & = \textsc{Update}^{(\ell)}\left(\varphi(c^{\ell-1}(v)), \textsc{Agg}^{(\ell)}(\{\!\!\{ (\varphi(c^{\ell-1}(u)), t-t^\prime, e) \mid (u, e, t^\prime) \in \mathcal{N}(v, t)\}\!\!\})\right) .
\end{align*}

Note that the composition of injective functions is also injective. In addition, time-shifting operations are also injective. Thus we can construct an injection $\psi$ such that:
\begin{align*}
        h_v^{(\ell)}(t) &= \psi\left( 
c^{\ell-1}(v), \{\!\!\{ (c^{\ell-1}(u), t^\prime, e) \mid (u, e, t^\prime) \in \mathcal{N}(v, t)\}\!\!\})
        \right) \\
         &= \psi\left( 
c^{\ell-1}(v), \{\!\!\{ (c^{\ell-1}(u), t^\prime, e_{uv}(t^\prime)) \mid (v, u, t^\prime) \in \mathcal{G}(t)\}\!\!\})
        \right)
\end{align*}
since there exists an element $(u, e, t^\prime) \in \mathcal{N}(v, t)$ if and only if there is an event $(u, v, t^\prime) \in \mathcal{G}(t)$ with feature $e_{uv}(t^\prime)=e$.

Then, we can write:
\begin{align*}
    h_v^{(\ell)}(t) &= \psi \circ \textsc{Hash}^{-1} \circ \textsc{Hash}\left( 
c^{\ell-1}(v), \{\!\!\{ (c^{\ell-1}(u), t^\prime, e_{uv}(t^\prime)) \mid (u, v, t^\prime) \in \mathcal{G}(t)\}\!\!\})
        \right)\\
        &=  \psi \circ \textsc{Hash}^{-1}( c^{(\ell)}(v))
\end{align*}

Note $\varphi = \psi \circ \textsc{Hash}^{-1}$ is injective since it is a composition of two injective functions. We then conclude that if the temporal WL test outputs different multisets of colors, then a suitable \mptgn outputs different multisets of embeddings.
\end{proof}

\subsection{Proof of \autoref{cor:properties}: \mptgns and CAWs fail to decide some graph properties}

\textbf{\textcolor{lb}{Statement 1:}} \mptgns fail to decide some graph properties.
\begin{figure}[th]
    \centering
    \includegraphics[width=0.6\textwidth]{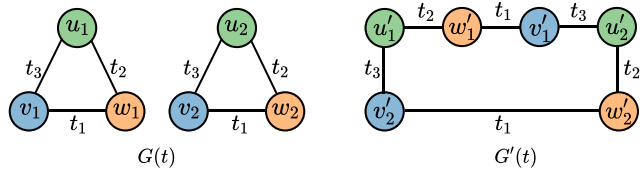}
    \caption{\small Examples of temporal graphs for which \mptgns cannot distinguish the diameter, girth, and number of cycles. For any node in $\mathcal{G}(t)$ (e.g., $u_1$), there is a corresponding one in $\mathcal{G}^\prime(t)$ ($u_1^\prime$) whose TCTs are isomorphic.}
    \label{fig:properties}
\end{figure}

\begin{proof}
Adapting a construction by \citet{Garg2020}, we provide in \autoref{fig:properties} an example that demonstrates \autoref{cor:properties}. Colors denote node features, and all edge features are identical. The temporal graphs $\mathcal{G}(t)$ and $G^\prime(t)$ are non-isomorphic and differ in properties such as diameter ($\infty$ for $\mathcal{G}(t)$ and $3$ for $\mathcal{G}^\prime(t)$), girth ($3$ for $\mathcal{G}(t)$ and $6$ for $\mathcal{G}^\prime(t)$), and number of cycles ($2$ for $\mathcal{G}(t)$ and $1$ for $\mathcal{G}^\prime(t)$). In spite of that, for $t>t_3$, the set of embeddings of nodes in $\mathcal{G}(t)$ is the same as that of nodes in $\mathcal{G}^\prime(t)$ and, therefore, \mptgns cannot decide these properties. In particular, by constructing the TCTs of all nodes at time $t>t_3$, we observe that the TCTs of the pairs $(u_1, u^\prime_1)$, $(u_2, u^\prime_2)$, $(v_1, v^\prime_1)$, $(v_2, v^\prime_2)$, $(w_1, w^\prime_1)$, $(w_2, w^\prime_2)$ are isomorphic and, therefore, they can not be distinguished (\autoref{lemma:1}). 
\end{proof}


\textbf{\textcolor{lb}{Statement 2:}} CAWs fail to decide some graph properties.

Since CAW does not provide a recipe to obtain graph-level embeddings, we first define such a procedure. Let $\mathcal{G}(t)$ be a temporal graph given as a set of events. We sequentially compute event embeddings $h_\gamma$ for each event $\gamma=(u,v,t^\prime) \in \mathcal{G}(t)$ respecting the temporal order (two or more events at the same timestamp are computed in parallel). We then apply a readout layer to the set of event embeddings to obtain a graph-level representation. We provide a proof assuming this procedure.

\begin{figure}[th]
    \centering
    \includegraphics[width=0.6\textwidth]{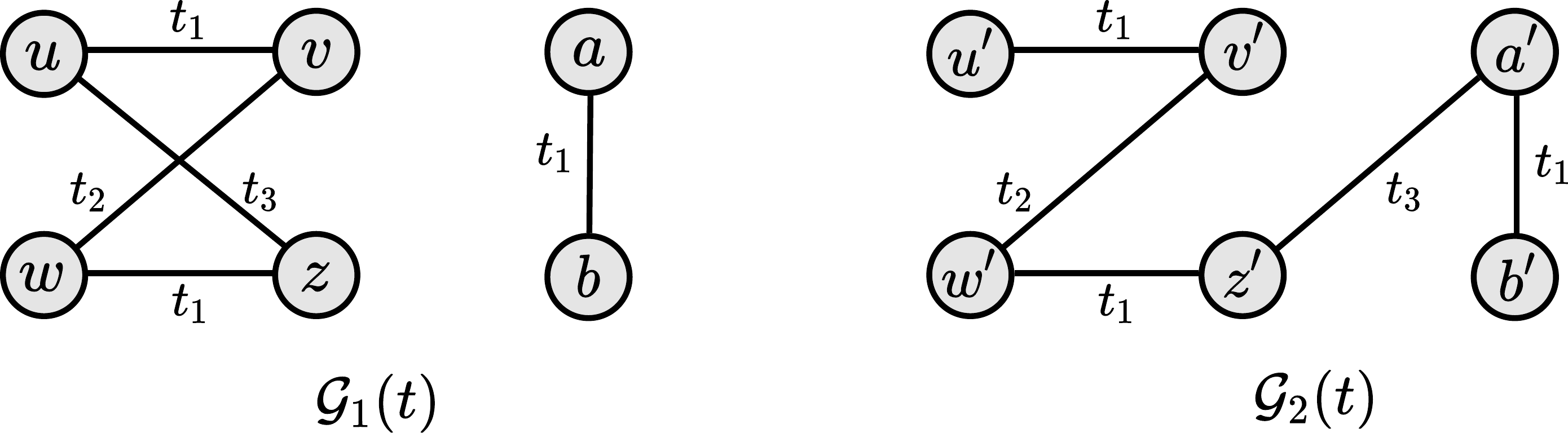}
    \caption{Examples of temporal graphs with different static properties, such as diameter, girth, and number of cycles. CAWs fail to distinguish $\mathcal{G}_1(t)$ and $\mathcal{G}_2(t)$.}
    \label{fig:caw-and-properties}
\end{figure}

\begin{proof}
We can adapt our construction in \autoref{fig:limits-tgns} [rightmost] to extend \autoref{cor:properties} to CAW. The idea consists of creating two temporal graphs with different diameters, girths, and numbers of cycles that comprise events that CAW cannot separate --- \autoref{fig:caw-and-properties} provides one such construction. In particular, CAW obtains identical embeddings for $(u, z, t_3)$ and $(a^\prime, z^\prime, t_3)$ (as shown in \autoref{prop:tgns_vs_caw}). The remaining events are the same up to node re-labelling and thus also lead to identical embeddings. Therefore, CAW cannot distinguish $\mathcal{G}_1(t)$ and $\mathcal{G}_2(t)$ although they  clearly differ in diameter, girth, and number of cycles.
\end{proof}

\subsection{Proof of \autoref{lemma:positional-features}} \label{ap:proof-positional-features}
We now show that the $k$-th component of the relative positional features $r_{u \rightarrow v}^{(t)}$ corresponds to the number of occurrences of $u$ at the $k$-th layer of the {\tcTCT} of $v$, and this is valid for all pairs of nodes $u$ and $v$ of the dynamic graph. We proceed with a proof by induction.

[\emph{Base case}] Let us consider $t=0$, i.e., no events have occurred. By definition, $r_{u \rightarrow v}^{(0)}$ is the zero vector if $u \neq v$, indicating that node $u$ does not belong to the TCT of $v$. If $u=v$, then $r_{u \rightarrow u}^{(0)}=[1, 0, \dots, 0]$ corresponds to count $1$ for the root of the TCT of $u$. Thus, for $t=0$, the proposition holds.

\begin{wrapfigure}[18]{r}{0.45\textwidth}
    \centering
    \includegraphics[width=0.4\textwidth]{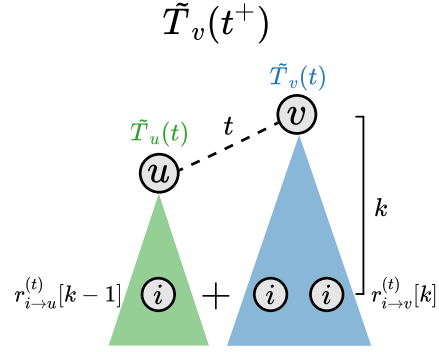}
    \caption{Illustration of how the {\tcTCT} of $v$ changes after an event between $u$ and $v$ at time $t$. This allows us to see how to update the positional features of any node $i$ of the dynamic graph that belongs to $\tilde{T}_u{(t)}$ relative to $v$.}
    \label{fig:positional-features}
\end{wrapfigure}
[\emph{Induction step}] Assume that the proposition holds for all nodes and any time instant up to $t$. We will show that after the event $\gamma=(u, v, t)$ at time $t$, the proposition remains true. 

Note that the event $\gamma$ only impacts the {\tcTCT}s of $u$ and $v$. The reason is that the monotone TCTs of all other nodes have timestamps lower than $t$, which prevents the event $\gamma$ from belonging to any path (with decreasing timestamps) from the root.

Without loss of generality, let us now consider the impact of $\gamma$ on the {\tcTCT} of $v$. \autoref{fig:positional-features} shows how the TCT of $v$ changes after $\gamma$, i.e., how it goes from $\tilde{T}_v(t)$ to $\tilde{T}_v(t^+)$. In particular, the process attaches the TCT of $u$ to the root node $v$. Under this change, we need to update the counts of all nodes $i$ in $\tilde{T}_u{(t)}$ regarding how many times it appears in $\tilde{T}_v{(t^+)}$. We do so by adding the counts in $\tilde{T}_u{(t)}$ (i.e., $r^{(t)}_{i \rightarrow u}$) to $\tilde{T}_v{(t)}$ (i.e., $r^{(t)}_{i \rightarrow v}$), accounting for the $1$-layer mismatch, since $\tilde{T}_u{(t)}$ is attached to the first layer. This can be easily achieved with the shift matrix $P=\begin{bmatrix}
0 & 0 \\
I_{d-1} & 0
\end{bmatrix}$ applied to the counts of any node $i$ in $\tilde{T}_u{(t)}$, i.e.,
\begin{align*}
{r}_{i \rightarrow v}^{(t^+)} &= P~ {r}_{i \rightarrow u}^{(t)} + {r}_{i \rightarrow v}^{(t)} \quad \quad \forall i \in \mathcal{V}_u^{(t)},
\end{align*}
where $\mathcal{V}_u^{(t)}$ comprises the nodes of the original graph that belong to $\tilde{T}_u^{(t)}$.

Similarly, the event $\gamma$ also affects the counts of nodes in the monotone TCT of $v$ w.r.t. the monotone TCT of $u$. To account for that change, we follow the same procedure and update ${r}_{j \rightarrow u}^{(t^+)} = P ~ {r}_{j \rightarrow v}^{(t)} + {r}_{j \rightarrow u}^{(t)}, \forall j \in \mathcal{V}_v^{(t)}$.

\paragraph{Handling multiple events at the same time.}
We now consider the setting where a given node $v$ interacts with multiple nodes $u_1, u_2, \dots, u_J$ at time $t$. We can extend the computation of positional features to this setting in a straightforward manner by noting that each event leads to an independent branch in the TCT of $v$. Therefore, the update of the positional features with respect to $v$ is given by
\begin{align*}
{r}_{i \rightarrow v}^{(t^+)} &= P \sum_{j=1}^J {r}_{i \rightarrow u_j}^{(t)} + {r}_{i \rightarrow v}^{(t)} \quad \quad \forall i \in \bigcup_{j=1}^J \mathcal{V}_{u_j}^{(t)} \\
 \mathcal{V}_{v}^{(t^+)} & = \mathcal{V}_{v}^{(t)} \bigcup_{j=1}^J \mathcal{V}_{u_j}^{(t)}. 
\end{align*}

We note that the updates of the positional features of $u_1, \dots, u_J$ remain untouched if they do not interact with other nodes at time $t$.

\subsection{Proof of \autoref{prop:injective-temporal-aggregation}: Injective function on temporal neighborhood}
\begin{proof} 
To capture the intuition behind the proof, first consider a multiset $M$ such that $|M|<4$. We can assign a unique number $\psi(m) \in \{1, 2, 3, 4\}$ to any distinct element $m \in M$. Also, the function $h(m)=10^{-\psi(m)}$ denotes the decimal expansion of $\psi(m)$ and corresponds to reserving one decimal place for each unique element $m \in M$. Since there are less than $10$ elements in the multiset, note that $\sum_m h(m)$ is unique for any multiset $M$.

To prove the proposition, we also leverage the well-known fact that the Cartesian product of two countable sets is countable --- the Cantor's (bijective) pairing function $z: \mathbb{N} \times \mathbb{N} \rightarrow \mathbb{N}$, with $z(n_1,n_2)=\frac{(n_1+n_2)(n_1+n_2+1)}{2}+n_2$, provides a proof for that.

Here, we consider multisets $M=\{\!\!\{(x_i, e_i, t_i)\}\!\!\}$ whose tuples take values on the Cartesian product of the countable sets $\mathcal{X}$, $\mathcal{E}$, and $\mathcal{T}$ --- the latter is also assumed to be bounded.
In addition, we assume the lengths of all multisets are bounded by $N$, i.e., $|M| < N$ for all $M$. Since $\mathcal{X}$ and $\mathcal{E}$ are countable, there exists an enumeration function $\psi: \mathcal{X} \times \mathcal{E} \rightarrow \mathbb{N}$ for all $M$. Without loss of generality, we assume $\mathcal{T}=\{1, 2, t_{\max} \}$. We want to show that exists a function of the form $\sum_i 10^{- k \psi(x_i, e_i)} \alpha^{-\beta t_i}$ that is unique on any multiset $M$.

Our idea is to separate a range of $k$ decimal slots for each unique element ($x_i, e_i, \cdot$) in the multiset. Each such a range has to accommodate at least $t_{\max}$ decimal slots (one for each value of $t_i$). Finally, we need to make sure we can add up to $N$ values at each decimal slot.

Formally, we map each tuple $(x_i, e_i, \cdot)$ to one of $k$ decimal slots starting from $10^{-k\psi(x_i, e_i)}$. In particular, for each element $(x_i, e_i, t_i=j)$ we add one unit at the  $j$-th decimal slot after $10^{-k\psi(x_i, e_i)}$. 
Also, to ensure the counts for $(x_i, e_i, j)$ and $(x_i, e_i, l\neq j)$ do not overlap, we set $\beta = \lceil\log_{10} N\rceil$ since no tuple can repeat more than $N$ times. We use $\alpha=10$ as we shift decimals. Finally, to guarantee that each range encompasses $t_{\max}$ slots of $\beta$ decimals, we set $k= \beta (t_{\max} + 1)$. 
Therefore, the function 
\begin{align*}
\sum_i 10^{- k \psi(x_i, e_i)} \alpha^{-\beta t_i}
\end{align*}
is unique on any multiset $M$. We note that, without loss of generality, one could choose a different basis (other than 10).
\end{proof}



 

\subsection{Proof of \autoref{prop:expressiveness_sinet}: Expressiveness of PINT: link prediction}

\begin{proof}
We now show that \initials (with relative positional features) is strictly more powerful than \mptgn and CAW in distinguishing edges of temporal graphs. Leveraging \autoref{prop:tgns_vs_caw}, it suffices to show that \initials is at least as powerful as both CAW and \mptgn.

\textbf{\textcolor{lb}{Statement 1:}} \initials is at least as powerful as \mptgns.

Since \initials is a generalization of \mptgns with injective aggregation/update layers, it derives that it is at least as powerful as \mptgns. We can set the model's parameters associated with positional features to zero and obtain an equivalent \mptgn.

\textbf{\textcolor{lb}{Statement 2:}} \initials is at least as powerful as CAW.

We wish to show that for any pair of events that \initials cannot distinguish, CAW also cannot distinguish it. Let us consider the events $(u, v, t)$ and $(u^\prime, v^\prime, t)$ of a temporal graph. Formally, we want to prove that if $\multiset{T_u(t), T_v(t)} = \multiset{T_{u^\prime}(t), T_{v^\prime}(t)}$ (i.e., the multisets contain TCTs that are pairwise isomorphic), then $\multiset{\textsc{Enc}(W; S_u, S_v)}_{W \in \{S_u \cup S_v\}} = \multiset{\textsc{Enc}(W^\prime; S_{u^\prime}, S_{v^\prime})}_{W^\prime \in \{S_{u^\prime} \cup S_{v^\prime}\}}$, where $\textsc{Enc}$ denotes the walk-encoding function of CAW. Importantly, for the sake of this proof, we assume that all TCTs here are augmented with positional features, characterizing edge embeddings obtained from \initials.

Without loss of generality, we can assume that $T_{u}(t) \cong T_{u^\prime}(t)$ and $T_{v}(t) \cong T_{v^\prime}(t)$. By \autoref{lemma:isomorphism-time-contrained-TCT}, we know that the corresponding {\tcTCT}s are also isomorphic: $\tilde{T}_u(t) \cong \tilde{T}_{u^\prime}(t)$, and $\tilde{T}_{v}(t) \cong \tilde{T}_{v^\prime}(t)$ with associated bijections $f_1: V(\tilde{T}_u(t)) \rightarrow V(\tilde{T}_{u^\prime}(t))$ and $f_2: V(\tilde{T}_v(t)) \rightarrow V(\tilde{T}_{v^\prime}(t))$.

We can construct a tree $T_{uv}$ by attaching $\tilde{T}_{u}(t)$ and $\tilde{T}_{v}(t)$ to a (virtual) root node $uv$ --- without loss of generality, $u$ and $v$ are the left-hand and right-hand child of $uv$, respectively. We can follow the same procedure and create the tree $T_{u^\prime v^\prime}$ by attaching the TCTs $\tilde{T}_{u^\prime}(t)$ and $\tilde{T}_{v^\prime}(t)$ to a root node $u^\prime v^\prime$. Since the left-hand and right-hand subtrees of $T_{uv}$ and $T_{u^\prime v^\prime}$ are isomorphic, then $T_{uv}$ and $T_{u^\prime v^\prime}$ are also isomorphic.
Let $f: V(T_{uv}) \rightarrow V(T_{u^\prime v^\prime})$ denote the bijection associated with the augmented trees. We also assume $f$ is constructed by preserving the bijections $f_1$ and $f_2$ defined between the original {\tcTCT}s: this ensures that $f$ does not map any node in $V(\tilde{T}_u(t))$ to a node in $V(\tilde{T}_{v^\prime}(t))$, for instance.
We have that
\begin{align*}
    [r^{(t)}_{\sharp i \rightarrow u} \| r^{(t)}_{\sharp i \rightarrow v}]  = [r^{(t)}_{\sharp f(i) \rightarrow u^\prime} \| r^{(t)}_{\sharp f(i) \rightarrow v^\prime}] \quad \forall i \in V(T_{uv}) \setminus \{uv\}
\end{align*}

Note that we use the function $\sharp$ (that maps nodes in the TCT to nodes in the dynamic graph) here because the positional feature vectors are defined for nodes in the dynamic graph.

To guarantee that two encoded walks are identical $\textsc{Enc}(W; S_u, S_v)=\textsc{Enc}(W^\prime; S_{u^\prime}, S_{v^\prime})$, it suffices to show that the anonymized walks are equal. Thus, we turn our problem into showing that for any walk $W=(w_0, t_0, w_1, t_1, \dots)$ in $S_u \cup S_v$, there exists a corresponding one $W^\prime=(w^\prime_0, t_0, w^\prime_1, t_1, \dots)$ in $S_{u^\prime} \cup S_{v^\prime}$ such that $I_{\text{CAW}}(w_i; S_u, S_v) = I_{\text{CAW}}(w^\prime_i; S_{u^\prime}, S_{v^\prime})$ for all $i$. Recall that $I_{\text{CAW}}(w_i; S_u, S_v)=\{g(w_i; S_u), g(w_i; S_v)\}$, where $g(w_i; S_u)$ is a vector whose $k$-component stores how many times $w_i$ appears in a walk from $S_u$ at position $k$. 

A key observation is that there is an equivalence between deanonymized root-leaf paths in $T_{uv}$ and walks in $S_u \cup S_v$ (disregarding the virtual root node). By deanonymized, we mean paths where node identities (in the temporal graph) are revealed by applying the function $\sharp$. Using this equivalence, it suffices to show that
\begin{align*}
g(\sharp i; S_u) = g(\sharp f(i); S_{u^\prime}) \text{ and } g(\sharp i; S_v) = g(\sharp f(i); S_{v^\prime}) \quad \forall i \in V(T_{uv}) \setminus \{uv\}
\end{align*}

Suppose there is an $i \in V(T_{uv}) \setminus \{uv\}$ such that $g(\sharp i; S_u) \neq g(\sharp f(i) ; S_{u^\prime})$. Without loss of generality, suppose this holds for the $\ell$-th entry of the vectors.

We know there are exactly $r^{(t)}_{a \rightarrow u}[\ell]$ nodes at the $\ell$-th level of $\tilde{T}_u(t)$ that are associated with $a = \sharp i \in V(\mathcal{G}(t))$. We denote by $ \Psi$ the set comprising such nodes. It also follows that computing $g(\sharp i; S_u)[\ell]$ is the same as summing up the amount of leaves of each subtree of $\tilde{T}_u(t)$ rooted at $\psi \in \Psi$, which we denote as $\myleaf(\psi; \tilde{T}_u(t))$, i.e.,
\begin{equation*}
   g(\sharp i; S_u)[\ell] = \sum_{\psi \in \Psi} 
   \myleaf(\psi; \tilde{T}_u(t)).
\end{equation*}

Since we assume $g(\sharp i; S_u)[\ell] \neq g(\sharp f(i) ; S_{u^\prime})[\ell]$, then it holds that
\begin{equation}\label{eq:leaves}
g(\sharp i; S_u)[\ell] \neq g(\sharp f(i); S_{u^\prime})[\ell]
\Rightarrow \sum_{\psi \in \Psi} \myleaf(\psi; \tilde{T}_u(t)) \neq \sum_{\psi \in \Psi} \myleaf(f(\psi); \tilde{T}_{u^\prime}(t))
\end{equation}

Note that the {subtree of $\tilde{T}_u$ rooted at $\psi$} should be isomorphic to the subtree of $\tilde{T}_{u^\prime}$ rooted at $f(\psi)$, and therefore have the same number of leaves. However, the RHS of \autoref{eq:leaves} above implies there is a $\psi \in \Psi$ for which $\myleaf(\psi; \tilde{T}_u) \neq  \myleaf(f(\psi); \tilde{T}_{u^\prime})$, reaching a contradiction. The same argument can be applied to $v$ and $v^\prime$ to prove that $g(\sharp i; S_v) = g(\sharp f(i); S_{v^\prime})$.
\end{proof}

\section{Additional related works} \label{ap:related}

\paragraph{Structural features for static GNNs.} Using structural features to enhance the power of GNNs is an active research topic.
\citetsup{Bouritsas2020} improved GNN expressivity by incorporating counts of local structures in the message-passing procedure, e.g, the number triangles a node appears on.
These counts depend on identifying subgraph isomorphisms and, naturally, can become intractable depending on the chosen substructure.
\citetsup{Li2020} proposed increasing the power of GNNs using distance encodings, i.e., augmenting original node features with distance-based ones. 
In particular, they compute the distance between a node set whose representation is to be learned and each node in the graph.
To alleviate the cost of distance encoding, an alternative is to learn absolute position encoding schemes \citepsup{Kreuzer2021,Wang2022,Srinivasan2020}, that try to summarize the role each node plays in the overall graph topology. 
We note that another class of methods uses random features to boost the power of GNNs \citepsup{Sato2021RandomFS,Abboud2021}. 
However, these models are referred to be hard to converge and obtain noisy predictions \citepsup{Wang2022}.

The most trivial difference between these approaches and \initials is that our relative positional features account for temporal information.
On a deeper level, our features summarize the role each node plays in each other's \tcTCT  instead of measuring, e.g., pair-wise distances in the original graph or counting substructures.
Also, our scheme leverages the temporal aspect to achieve computational tractability, updating features incrementally as events unroll.
Finally, while some works proposing structural features for static GNNs present marginal gains \citesup{Wang2022},
\initials exhibits significant performance gains in real-world temporal link prediction tasks.

\paragraph{Other models for temporal graphs.} Representation learning for dynamic graphs is a broad and diverse field. 
In fact, strategies to cope with the challenge of modeling dynamic graphs can come in many flavors, including simple aggregation schemes \citep{Kleinberg2007}, walk-aggregating methods \citepsup{Mahdavi2018}, and combinations of sequence models with GNNs \citepsup{Seo2018,Pareja2020}. 
For instance, \citetsup{Seo2018} used a spectral graph convolutional network \citepsup{Defferrard16} to encode graph snapshots followed by a graph-level LSTM \citep{lstm}. 
\citetsup{Manessi2020} followed a similar approach but employed a node-level LSTM, with parameters shared across the nodes.
\citetsup{Sankar2020} proposed a fully attentive model based on graph attention networks \citep{gat}.
\citetsup{Pareja2020} applied a recurrent neural net to dynamically update the parameters of a GCN.
\citetsup{Gao2021} compared the expressive power of two classes of models for discrete dynamic graphs: time-and-graph and time-then-graph. The former represents the standard approach of interleaving GNNs and sequence (e.g., RNN) models. In the latter class, the models first capture node and edge dynamics using RNNs, and are then feed into graph neural networks. The authors showed that time-then-graph has expressive advantage over time-and-graph approaches under mild assumptions. 
For an in-depth review of representation learning for dynamic graphs, we refer to the survey works by \citet{Kazemi2020} and \citetsup{Skarding2021FoundationsAM}. 

While most of the early works focused on discrete-time dynamic graphs, we have recently witnessed a rise in interest in models for event-based temporal graphs (i.e., CTDGs).
The reason is that models for DTDGs may fail to leverage fine-grained temporal and structural information that can be crucial in many applications. In addition, it is hard to specify meaningful time intervals for different tasks. Thus, modern methods for temporal graphs explicitly incorporate timestamp information into sequence/graph models, achieving significant performance gains over approaches for DTDGs \citep{caw}. 
\autoref{ap:egn} provides a more detailed presentation of CAW, TGN-Att, and TGAT, which are among the best performing models for link prediction on temporal graphs.
Besides these methods, JODIE~\citep{jodie} applies two RNNs (for the source and target nodes of an event) with a time-dependent embedding projection to learn node representations of item-user interaction networks.
\citet{DyRep} employed RNNs with a temporally attentive module to update node representations. 
APAN \citepsup{APAN2021} consists of a memory-based TGN that uses attention mechanism to update memory states using multi-hop temporal neighborhood information. 
\citetsup{tgn-caw} proposed incorporating edge embeddings obtained from CAW into \mptgns' memory and message-passing computations.

\newpage
\section{Datasets and implementation details}\label{ap:implementation} 

\subsection{Datasets}

In our empirical evaluation, we have considered six datasets for dynamic link prediction: Reddit\footnote{\url{http://snap.stanford.edu/jodie/reddit.csv}}, Wikipedia\footnote{\url{http://snap.stanford.edu/jodie/wikipedia.csv}}, UCI\footnote{\url{http://konect.cc/networks/opsahl-ucforum/}}, LastFM\footnote{\url{http://snap.stanford.edu/jodie/lastfm.csv}}, Enron\footnote{\url{https://www.cs.cmu.edu/~./enron/}}, and Twitter.
Reddit is a network of posts made by users on subreddits, considering the 1,000 most active subreddits and the 10,000 most active users. 
Wikipedia comprises edits made on the 1,000 most edited Wikipedia pages by editors with at least 5 edits. 
Both Reddit and Wikipedia networks include links collected over one month, and text is used as edge features, providing informative context.
The LastFM dataset is a network of interactions between user and the songs they listened to.
UCI comprises students' posts to a forum at the University of California Irvine.
Enron contains a collection of email events between employees of the Enron Corporation, before its bankruptcy.
The Twitter dataset is a non-bipartite net where nodes are users and interactions are retweets. 
Since Twitter is not publicly available, we build our own version by following the guidelines by \citet{tgn}. 
We use the data available from the 2021 Twitter RecSys Challenge and select 10,000 nodes and their associated interactions based on node participation: number of interactions the node participates in. 
We also apply multilingual BERT to obtain text representations of retweets (edge features).

\autoref{tab:summary_link} reports statistics of the datasets such as number of temporal nodes and links, and the dimensionality of the edge features. 
We note that UCI, Enron, and LastFM represent non-attributed networks and therefore do not contain feature vectors associated with the events. 
Also, the node features for all datasets are vectors of zeros \citep{tgat}.

\begin{table}[h]
\centering
\caption{Summary statistics of the datasets.}
\begin{tabular}{l cccc}
\toprule
\textbf{Dataset} & \textbf{\#Nodes} & \textbf{\#Events} &   \textbf{\#Edge feat.}  & \textbf{Bipartite?}\\ \midrule
Reddit           &  10,984 (10,000 / 984)          &   672,447        &        172     & Yes           \\
Wikipedia        &  9,227 (8,227 / 1,000)           &   157,474        &        172         & Yes       \\
Twitter & 8,925 & 406,564 & 768  & No \\
UCI              &  1,899           &    59,835        & -  & No \\
Enron & 184 & 125,235 & - & No \\
LastFM & 1,980 (980 / 1,000) & 1,293,103 & - & Yes \\
\bottomrule
\end{tabular}
\label{tab:summary_link}
\end{table}

\subsection{Implementation details}

We train all models in link prediction tasks in a self-supervised approach. 
During training, we generate negative samples: for each actual event $(z, w, t)$ (class 1), we create a fake one $(z, w^\prime, t)$ (class 0) where $w^\prime$ is uniformly sampled from the set of nodes, and both events have the same edge feature vector.

To ensure a fair comparison, we mainly rely on the original repositories and guidelines. For instance, regarding the training of \mptgns (including \initials), we mostly follow the setup and choices in the implementation available in \citep{tgn}. In particular, we apply the Adam optimizer with learning rate $10^{-4}$ during $50$ epochs with early stopping if there is no improvement in validation AP for $5$ epochs. In addition, we use batch size 200 for all methods. 
We report statistics (mean and standard deviation) of the performance metric (AP) over ten runs.

\paragraph{\mptgns.} For TGN-Att, we follow \citet{tgn} and sample either ten or twenty temporal neighbors with memory dimensionality equal to 32 (Enron), 100 (UCI, Twitter), or 172 (Reddit, Wikipedia, LastFM), node embedding dimension equal to 100, and two attention heads. 
We use a memory unit implemented as a GRU, and update the state of each node based on only its most recent message.
For TGAT, we use twenty temporal neighbors and two layers. 
%

\paragraph{CAW.} We conduct model selection using grid search over: i) time decay $\alpha \in \{0.01, 0.1, 0.25, 0.5, 1.0, 2.0, 4.0, 10.0, 100.0\} \times 10^{-6}$, ii) number of walks $M \in \{1, 2, 3, 4, 5\}$;  and iii) walk length $L \in \{32, 64, 128\}$. The best combination of hyperparameters is shown in \autoref{tab:caw_hyperparam}. 
The remaining training choices follows the default values from the original implementation.
Importantly, we note that TGN-Att's original evaluation setup is different from CAW's. Thus, we adapted CAW's original repo to reflect these differences and ensure a valid comparison.

\begin{table}[h]
\centering
\caption{Optimal hyperparameters for CAW.}
\begin{tabular}{l ccc cc cc}
\toprule
\textbf{Dataset} &\textbf{Time decay} $\mathbf{\alpha}$ & \textbf{\#Walks} &   \textbf{Walk length} \\ \midrule
Reddit & $10^{-8}$ & $32$ & $3$ \\
Wikipedia & $4\times10^{-6}$ & $64$ & $4$ \\
Twitter & $10^{-6}$ & $64$ & $3$ \\
UCI & $10^{-5}$ & $64$ & $2$ \\
Enron & $10^{-6}$ & $64$ & $5$ \\
LastFM  & $10^{-6}$ & $64$ & $2$\\
\bottomrule
\end{tabular}
\label{tab:caw_hyperparam}
\end{table}

\paragraph{\initials.} We use $\alpha=2$ (in the exponential aggregation function), and experiment with learned and fixed $\beta$. We apply a relu function to avoid negative values of $\beta$, which could lead to unstable training. We do grid search as the follow: when learning beta, we consider initial values for $\beta \in \{0.1, 0.5\}$; for the fixed case (\texttt{requires\_grad=False}), we evaluate $\beta \in \{10^{-3} \times |\mathcal{N}|, 10^{-4}\times| \mathcal{N}|, 10^{-5}\times| \mathcal{N}|\}$ --- $|\mathcal{N}|$ denotes the number of temporal neighbors --- and always apply memory as in the original implementation of TGN-Att. We consider number message passing layers $\ell$ in $\{ 1, 2\}$. Also, we apply neighborhood sampling with the number of neighbors in $\{10, 20\}$, and update the state of a node based on its most recent message.  We then carry out model selection based on AP values obtained during validation. Overall, the models with fixed $\beta$ led to better results. \autoref{tab:pint_hyperparam} reports the optimal hyperparameters for \initials found via automatic model selection. 

In all experiments, we use relative positional features with $d=4$ dimensions. For computational efficiency, we update the relative positional features only after processing a batch, factoring in all events from that batch. Note that this prevents information linkage as these positional features take effect after prediction. In addition, since temporal events repeat (in the same order) at each epoch, we also speed up \initials's training procedure by precomputing and saving the positional features for each batch. To save up space, we store the positional features as sparse matrices. 

\begin{table}[h]
\centering
\caption{Optimal hyperparameters for \initials.}
\begin{tabular}{l ccc}
\toprule
\textbf{Dataset} & $\beta/|\mathcal{N}|$ & \textbf{\#Neighbors} ($|\mathcal{N}|$) &   \textbf{\#Layers} \\ \midrule
Reddit & $10^{-5}$ & $10$ & $2$ \\
Wikipedia & $10^{-4}$ & $10$ & $2$ \\
Twitter & $10^{-5}$ & $20$ & $2$ \\
UCI & $10^{-5}$ & $10$ & $2$ \\
Enron & $10^{-5}$ & $20$ & $2$ \\
LastFM & $10^{-4}$ & $10$ & $2$ \\
\bottomrule
\end{tabular}
\label{tab:pint_hyperparam}
\end{table}

\paragraph{Hardware.} For all experiments, we use Tesla V100 GPU cards and consider a memory budget of 32GB of RAM. 

\section{Deletion and node-level events}\label{ap:deletion}

\citet{tgn} propose handling edge deletions by simply updating memory states of the edge's endpoints, as if we were dealing with a usual edge addition. However, it is not discussed whether the edge in question should be excluded from the event list or if we should just add a novel event with edge features that characterize deletion. If we choose the former, we may be unable to recover the memory state of a node from its \tcTCT and the original node features. Removing an edge from the event list also affects the computation of node embeddings. Therefore, we advise practitioners to do the latter when using \initials. 
It is worth mentioning that the vast majority of models for temporal interaction prediction do not consider the possibility of deletion events.

Regarding node-level events, \initials can accommodate node addition by simply creating novel memory states. To deal with node feature updates, we can create an edge event with both endpoints on that node, inducing a self-loop in the dynamic graph. Also, we can combine (e.g., concatenate) the temporal features in message-passage operations, similarly to the general formulation of the \mptgn framework \citep{tgn}. Finally, we can deal with the removal of a node $v$ by following our previous (edge deletion) procedure to delete all edges with endpoints in $v$.




\section{Additional experiments}\label{ap:additional-experiments}

\paragraph{Time comparison.}\autoref{fig:time_comparison_sup} compares the time per epoch for \initials and for the prior art (CAW, TGN-Att, and TGAT) in the Enron and LastFM datasets. Following the trend in \autoref{fig:time-comparison},  \autoref{fig:time_comparison_sup} further supports that \initials is generally slower than other \mptgns but, after a few training epochs, is orders of magnitude faster than CAW. In the case of Enron, the time CAW takes to complete an epoch is much higher than the time we need to preprocess \initials's positional features.

\begin{figure}[ht]
    \centering
    \includegraphics[width=0.7\textwidth]{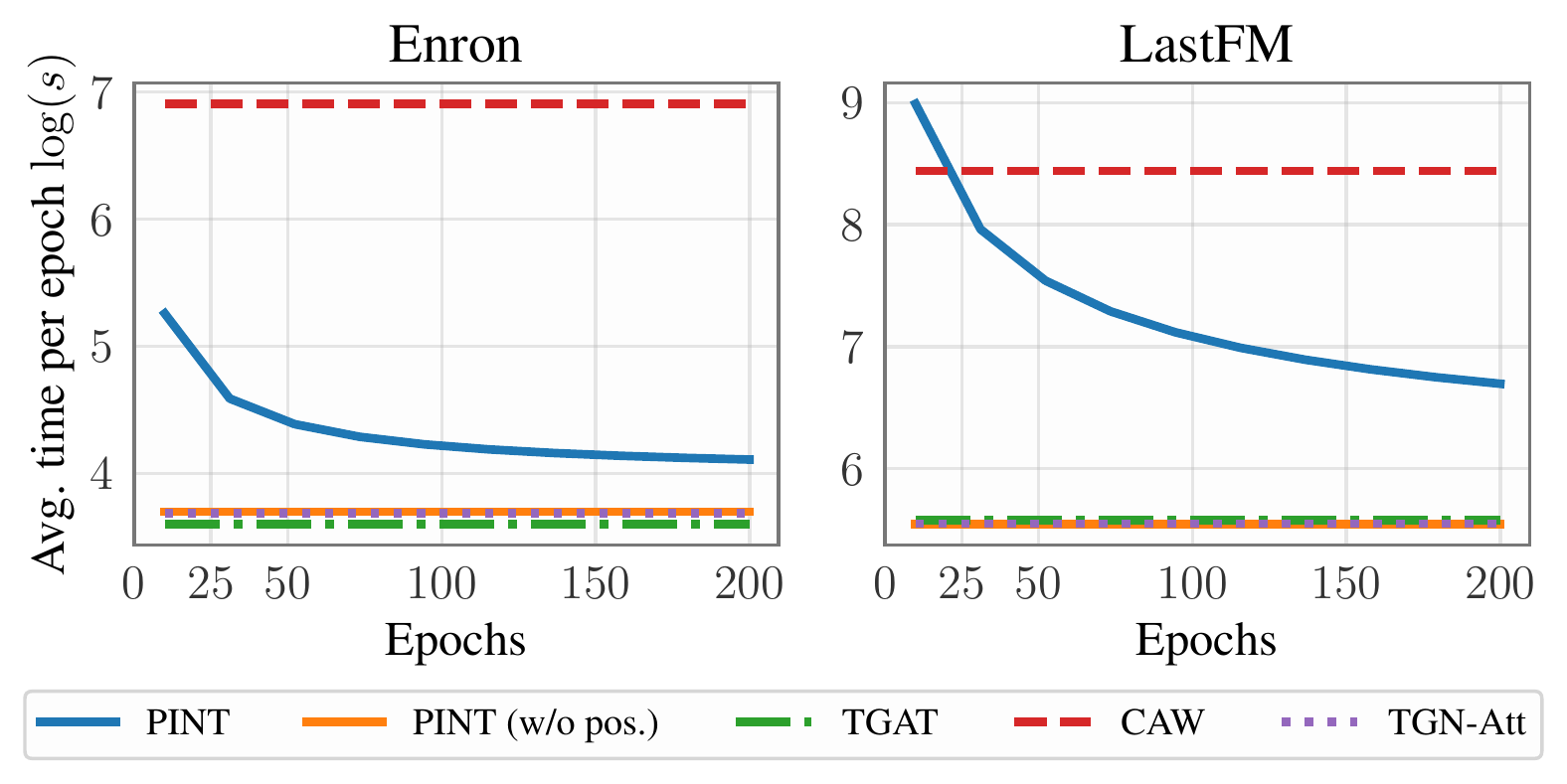}
    \caption{Time comparison: \initials versus TGNs  (in log-scale) on Enron and LastFM.}
    \label{fig:time_comparison_sup}
\end{figure}

\paragraph{Experiments on node classification.} For completeness, we also evaluate PINT on node-level tasks (Wikipedia and Reddit). We follow closely the experimental setup in \citet{tgn} and compare against the baselines therein. \autoref{tab:node} shows that PINT ranks first on Reddit and second on Wikipedia. The values for PINT reflect the outcome of $5$ repetitions.

\begin{table}[h]
\centering
\caption{Results for node classification (AUC).}
\begin{tabular}{l cc}
\toprule
 & \textbf{Wikipedia} &   \textbf{Reddit} \\ \midrule
CTDNE & 75.89 $\pm$ 0.5 & 59.43 $\pm$ 0.6\\
JODIE & 84.84 $\pm$ 1.2 & 61.83  $\pm$ 2.7\\
TGAT & 83.69 $\pm$ 0.7 & 65.56 $\pm$ 0.7\\
DyRep & 84.59 $\pm$ 2.2 & 62.91 $\pm$ 2.4 \\
TGN-Att & 87.81 $\pm$ 0.3 & 67.06 $\pm$ 0.9 \\
PINT & 87.59 $\pm$  0.6 & 67.31 $\pm$ 0.2\\
\bottomrule
\end{tabular}
\label{tab:node}
\end{table}

{
\small
\bibliographystylesup{plainnat} 
\bibliographysup{references}
}

\end{document}